\newtheorem{theorem}{Theorem}[section]
\newtheorem{lemma}[theorem]{Lemma}
\newtheorem*{lemma*}{Lemma}
\newtheorem*{theorem*}{Theorem}
\newcommand{\vol}{\mathrm{vol}}
\newcommand{\Pro}[1]{\mathbb{P} \left[\,#1\,\right]}
\newcommand{\unit}{\omega_{\lceil 2\pi\cdot k\rceil}}
\newcommand{\Ex}[1]{\mathbb{E} \left[\,#1\,\right]}
\newcommand{\calL}{\mathcal{L}}
\newcommand{\Span}{\operatorname{span}}
\newcommand{\Dim}{\operatorname{dim}}
\newcommand{\APT}{\mathsf{APT}}
\newcommand{\eps}{\epsilon}
\renewcommand{\leq}{\leqslant}
\renewcommand{\geq}{\geqslant}
\renewcommand{\le}{\leqslant}
\renewcommand{\ge}{\geqslant}
\newcommand{\lemref}[1]{Lemma~\ref{lem:#1}}
\newcommand{\eq}[1]{\eqref{eq:#1}}
\renewcommand{\tilde}{\widetilde}
\renewcommand{\epsilon}{\varepsilon}
\DeclarePairedDelimiter\abs{\lvert}{\rvert}%
\DeclarePairedDelimiter\norm{\lVert}{\rVert}%
\let\oldabs\abs
\def\abs{\@ifstar{\oldabs}{\oldabs*}}
\let\oldnorm\norm
\def\norm{\@ifstar{\oldnorm}{\oldnorm*}}
\title{Higher-Order Spectral Clustering of Directed Graphs}
\author{%
  Steinar Laenen \thanks{steinar9@gmail.com} \\
  Oxford Research Group \\
  FiveAI \\
  \texttt{steinar.laenen@five.ai} \\
  \And    
  He Sun \\
  School of Informatics\\
  University of Edinburgh\\
  \texttt{h.sun@ed.ac.uk} \\

}
\begin{document}

\maketitle

\newcommand{\todo}[1]{\PackageWarning{}{Comment: #1}{\color{red} [{TODO:} #1]}}
\newcommand{\drafty}[1]{\PackageWarning{}{Comment: #1}{\color{gray}{#1}}}
\newcommand{\smallpar}[1]{\smallskip\noindent {\bf{#1}}}

\newcommand{\eg}{e.g.\xspace}
\newcommand{\ie}{i.e.\xspace}

\newcommand{\UA}{$\uparrow$\@}
\newcommand{\LA}{$\leftarrow$\@}
\newcommand{\DA}{$\downarrow$\@}
\newcommand{\RA}{$\rightarrow$\@}

\newenvironment{tight_enum}{
\begin{enumerate}
        \setlength{\itemsep}{0pt}
        \setlength{\parskip}{0pt}
        \setlength{\parsep}{0pt}
}{\end{enumerate}}

\newenvironment{tight_it}{
\begin{itemize}
        \setlength{\itemsep}{0pt}
        \setlength{\parskip}{0pt}
        \setlength{\parsep}{0pt}
}{\end{itemize}}

\begin{abstract}
Clustering is an important topic in algorithms, and has a number of applications in machine learning, computer vision, statistics, and several other research disciplines.  Traditional objectives of graph clustering are to find clusters with low conductance. 
Not only are these objectives just applicable for undirected graphs, they are also incapable to take the relationships between clusters into account, which could be crucial for many applications.
To overcome these downsides, we study directed graphs (digraphs) whose clusters exhibit further ``structural'' information amongst each other. Based on the Hermitian matrix representation of digraphs, we present a nearly-linear time algorithm for digraph clustering, and further show that our proposed  algorithm can be implemented in sublinear time under reasonable assumptions. 
   The significance of our theoretical work is demonstrated by extensive experimental results on the UN Comtrade Dataset: the output clustering of our algorithm exhibits not only  how the clusters~(sets of countries) relate to each other with respect to their import and  export records, but also how these clusters evolve over time, in accordance with known facts in international trade.

\end{abstract}

\section{Introduction}

 Clustering is one of the most fundamental problems in algorithms and has applications in many research fields including machine learning, 
 network analysis, and  statistics.  Data can often be  represented by a graph~(e.g., users in a social network, servers in a communication network), and this makes graph clustering a natural choice to analyse these datasets. Over the past three decades, most studies on undirected graph clustering have focused on the task of partitioning with respect to the edge densities, i.e., vertices form a cluster if they are better connected to each other than to the rest of the graph. 
   The well-known normalised cut value~\cite{shi2000normalized} and graph conductance~\cite{LeeGT14} capture these  
   classical definitions of clusters,  and have become the objective functions of most undirected graph clustering algorithms.

 While the design of these algorithms has received a lot of research attention from both theoretical and applied research areas, these algorithms are usually unable to uncover \emph{higher-order structural} information among clusters in \emph{directed graphs}~(digraphs). 
 For example, let us look at the international oil trade network~\cite{screenshot_uncomtrade}, which employs digraphs to represent how mineral fuels and  oils are imported and exported between countries. Although this highly connected digraph presents little cluster structure with respect to a typical objective function of undirected graph clustering, from an economic point of view  this digraph clearly exhibits a structure of clusters:   there is a cluster of countries mainly exporting oil, a cluster mainly importing oil, and several clusters in the middle of this trade chain. All these clusters are characterised by the imbalance of the edge directions between clusters, and further present a clear ordering reflecting the overall trade pattern. This type of structure is not only found in trade data, but also in many other types of data such as migration data and infectious disease spreading data. We view these types of patterns as a \emph{higher-order} structure among the clusters and, in our point of view, this
 structural information could  be as important as the individual clusters themselves.

\paragraph{Our contribution.} In this work we study  clustering algorithms for digraphs whose cluster structure is defined with respect to the imbalance of  edge densities as well as the edge directions between the clusters.  Formally, for any  set of vertices $S_0,\ldots, S_{k-1}$ that forms a partition of the vertex set $V(G)$ of a digraph $G$, we define the \emph{flow ratio} of $\{S_j\}_{j=0}^{k-1}$ by
\[
\sum_{j=1}^{k-1} \frac{w(S_j, S_{j-1})}{\vol(S_j) + \vol(S_{j-1})},
\]
where $w(S,T) \triangleq \sum_{ \substack{ (u,v)\in E \\ u\in S, v\in T} } w(u,v) $ is the cut value from $S\subset V$ to $T\subset V$  and   $\vol(S)$ is the sum of degrees of  the vertices in $S$. We  say that 
$\{S_j\}_{j=0}^{k-1}$ forms an optimal partition if this $\{S_j\}_{j=0}^{k-1}$ \emph{maximises} the flow ratio over all possible partitions. 
By introducing a complex-valued   representation of the graph Laplacian matrix $\mathcal{L}_G$, we show that this optimal partition $\{S_j\}_{j=0}^{k-1}$  is well embedded into the bottom eigenspace of $\mathcal{L}_G$.
To further exploit this novel and intriguing connection, we show that an approximate partition with bounded approximation guarantee can be computed in time nearly-linear in the number of edges of the input graph. In the settings for which the degrees of the vertices are known in advance, we also present a sub-linear time implementation of the algorithm.  The significance of our work is further demonstrated by experimental results on   several synthetic and real-world datasets. In particular,  on the UN Comtrade dataset our clustering results are well supported by the literature from other research fields.   
At the technical level, our analysis could be viewed as a hybrid between the proof of the Cheeger inequality~\cite{Chung97}   and the analysis of spectral clustering for undirected graphs~\cite{PSZ17}, as well as a sequence of recent work on fast constructions of graph sparsification~(e.g., \cite{siamcomp/SpielmanS11}). We believe our analysis for the new Hermitian Laplacian $\mathcal{L}_G$ could inspire future research on studying the clusters' higher-order structure using spectral methods. 

\paragraph{Related work.} There is a rich literature on spectral algorithms for graph clustering. For undirected graph clustering, the works most related to ours are \cite{PSZ17,shi2000normalized,von2007tutorial}. For  digraph clustering, \cite{Satuluri:2011:SCD:1951365.1951407} proposes to perform spectral clustering on the symmetrised matrix  
$A=M^{\intercal}M + MM^{\intercal}$ of the input graph's adjacency matrix $M$;  \cite{HeDirected} initiates the studies of spectral clustering on complex-valued Hermitian matrix representations of digraphs, however their theoretical analysis only holds for digraphs generated from the stochastic block model.  Our work is also linked to analysing higher-order structures of clusters  in undirected graphs~\cite{BensonGL15,Benson163,YinBLG17}, and community detection in digraphs~\cite{Chung05,PRL}. 
The main takeaway   is that there is no previous work which analyses digraph spectral clustering algorithms to uncover the higher-order structure of clusters in a general digraph.

\label{sec:introduction}


\section{Preliminaries}
Throughout the paper, we always assume that 
$G=(V,E,w)$ is a digraph with $n$ vertices, $m$ edges, and   weight function $w : V \times V \rightarrow \mathbb{R}_{\geq 0}$. We write $u\leadsto v$ if there is an edge from $u$ to $v$ in the graph. 
For any vertex $u$, the in-degree and out-degree of $u$ are defined as $d_u^{\text{in}}\triangleq \sum_{v: v\leadsto u}w(v,u)$ and $d_u^{\text{out}} \triangleq \sum_{v: u\leadsto v} w(u,v)$, respectively. We further define the total degree of $u$ by $d_u \triangleq d_u^{\text{in}} + d_u^{\text{out}}$, and define $\vol(S)\triangleq \sum_{u\in S} d_u$ for any $S\subseteq V$.
For any set of vertices $S$ and $T$, the symmetric difference between $S$ and $T$ is defined by  $S \bigtriangleup T \triangleq (S \setminus T) \cup (T \setminus S)$.

Given any digraph  $G$ as input, we use  $M\in\mathbb{R}^{n \times n}$  to denote the adjacency matrix of $G$, where   $M_{u,v} = w(u,v)$ if there is an edge $u\leadsto v$, and 
$M_{u,v}=0$ otherwise. We use $A\in \mathbb{C}^{n\times n}$ to represent the Hermitian adjacency matrix of $G$, where $A_{u,v} = \overline{A_{v,u}} =w(u,v)\cdot\omega_{\lceil 2\pi k \rceil}  $ if $u\leadsto v$, and $A_{u,v} = 0$
otherwise. Here, $\omega_{\lceil 2\pi k \rceil} $ is the $\lceil 2\pi k \rceil$-th root of unity, and $\overline{x}$ is the conjugate of $x$. The normalised Laplacian matrix of $G$ is defined by $\mathcal{L}_G \triangleq I - D^{-1/2}AD^{-1/2}$, where the degree matrix $D\in\mathbb{R}^{n\times n}$ is defined by $D_{u,u}=d_u$, and $D_{u,v}=0$ for any  $u\neq v$.
 We sometimes drop the subscript $G$ if the underlying graph is clear from the context.

For any Hermitian matrix $A \in \mathbb{C}^{n \times n}$ and non-zero vector $x \in \mathbb{C}^{n}$, the Rayleigh quotient $\mathcal{R}(A, x)$ is defined as $\mathcal{R}(A, x) \triangleq x^{*}Ax / x^{*}x$, where $x^{*}$ is the complex conjugate transpose of  $x \in \mathbb{C}^{n}$. 
 For any Hermitian matrix $B \in \mathbb{C}^{n \times n}$, let $\lambda_1(B) \leq \ldots \leq \lambda_n(B)$ be the eigenvalues of $B$ with corresponding eigenvectors $f_1, \ldots, f_n$, where $f_j\in\mathbb{C}^n$ for any $1\leq j\leq n$.  
      
\label{sec:preliminaries}

\section{Encoding the flow-structure into $\mathcal{L}_G$'s bottom eigenspace}\label{sec:encode_flow}
 
Now we study the structure of clusters with respect to their flow imbalance,  and their relation to the bottom eigenspace of the normalised   Hermitian Laplacian matrix. For any set of vertices $S_0,\ldots, S_{k-1}$, we say that $S_0,\ldots, S_{k-1}$ form a $k$-way partition of $V(G)$, if it holds that $\bigcup_{0\leq j\leq k-1} S_j = V(G)$ and $S_j \cap S_{\ell}=\emptyset$ for any $j\neq \ell$. As discussed in Section~\ref{sec:introduction}, the primary focus of the paper is to study digraphs in which there are significant connections from $S_j$ to $S_{j-1}$ for any $1\leq j\leq k-1$. To formalise this, we introduce the notion of \emph{flow ratio} of    $\{S_j\}_{j=0}^{k-1}$, which  is defined by
 \begin{equation}\label{eq:defphiG}
 \Phi_G\left(S_0,\ldots, S_{k-1}\right) \triangleq \sum_{j=1}^{k-1}\frac{w(S_j, S_{j-1})}{\vol(S_j) + \vol(S_{j-1})}.
 \end{equation}
 We call this $k$-way partition $\{S_j\}$ an \emph{optimal clustering} if    the flow ratio given by $\{S_j\}$ achieves the maximum defined by 
 \begin{equation}\label{eq:thetak}
\theta_k(G)\triangleq \max_{\substack{S_0,\ldots, S_{k-1}\\ \cup S_i = V, S_j\cap S_{\ell}=\emptyset}}\Phi_G\left(S_0,\ldots, S_{k-1}\right).
\end{equation}
Notice that,   for any two consecutive clusters $S_j$ and $S_{j-1}$, the value $w(S_j,S_{j-1})\cdot\left( \vol(S_j) + \vol(S_{j-1}) \right)^{-1}$ evaluates   the
ratio of the total  edge weight in the cut $(S_j, S_{j-1})$ to the total weight of the edges with endpoints in $S_j$ or $S_{j-1}$;  moreover, only $k-1$ out of  $2\cdot { k\choose 2}$ different cuts among $S_0,\ldots, S_{k-1}$ contribute to
$\Phi_G(S_0,\ldots, S_{k-1})$
according to  \eq{defphiG}. 
We remark that, although the  definition of $\Phi_G(S_0,\ldots, S_{k-1})$ shares some similarity with the normalised cut value for undirected graph clustering~\cite{shi2000normalized}, in our setting an optimal clustering is the one that 
\emph{maximises} the flow ratio. This is in a sharp contrast to most objective functions for undirected graph clustering, whose aim is to find clusters of low conductance\footnote{It is important to notice that, among $2\cdot {k \choose 2}$  cuts formed by pairwise different clusters, only $(k-1)$ cut values contribute to our objective function. If one takes all of the $2\cdot {k \choose 2}$ cut values into account, the objective function would involve $2\cdot {k \choose 2}$ terms. However, even if most of the $2\cdot {k \choose 2}$ terms are much smaller than the ones along the flow, their sum could still be dominant, leaving little information on the structure of clusters.
Therefore, we should only take $(k-1)$ cut values   into account when the clusters present a flow structure. }. In addition, it is not difficult to show that this problem is $\mathsf{NP}$-hard since, when $k=2$, our problem is exactly the MAX DICUT problem studied in 
\cite{GoemansW95}.
 
To study the relationship between the flow structure among   $S_0,\ldots, S_{k-1}$ and   the eigen-structure of the normalised Laplacian matrix of the graph, we define for every optimal cluster $S_j~(0\leq j\leq k-1)$ an indicator vector $\chi_j\in\mathbb{C}^n$ by $\chi_j(u)\triangleq\left( w_{\lceil 2\pi\cdot k\rceil }\right)^j$ if $u\in S_j$ and $\chi_j(u)=0$ otherwise. We further define the normalised indicator vector of $\chi_j$ by 
\[
\widehat{\chi_j} \triangleq\frac{D^{1/2} \chi_j}{\|D^{1/2} \chi_j \|},
\]
and set 
\begin{equation}\label{eq:defy}
y \triangleq \frac{1}{\sqrt{k}}\sum_{j=0}^{k-1} \widehat{\chi_j}.
\end{equation}
We highlight that, due to the use of complex numbers,   a single vector $y$ is sufficient  to encode the structure of $k$ clusters: this is quite different from the case of undirected graphs, where $k$ mutually perpendicular vectors are needed in order to study the eigen-structure of graph Laplacian and the cluster structure~\cite{LeeGT14,PSZ17,von2007tutorial}.  In addition, by the use of roots of unity in \eq{defy},  different clusters are separated from each other by   angles, indicating that the use of a single eigenvector could be sufficient to approximately recover $k$ clusters.
Our result on the relationship between $\lambda_1(\mathcal{L}_G)$ and $\theta_k(G)$ is summarised as follows:

\begin{lemma}\label{lem:boundthetak}
Let  $G = (V,E,w)$ be a weighted digraph  with normalised Hermitian Laplacian $\mathcal{L}_{G} \in \mathbb{C}^{n \times n}$. Then, it holds that $
\lambda_1(\mathcal{L}_G) \leq 1- \frac{4}{k}\cdot\theta_k(G)$.  Moreover, $\theta_k(G)=k/4$   if $G$ is a bipartite digraph with all the edges having the same direction, and 
$\theta_k(G)<k/4$ otherwise. 
\end{lemma}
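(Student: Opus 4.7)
The plan is to apply the Rayleigh variational bound $\lambda_1(\mathcal{L}_G) \leq y^* \mathcal{L}_G y / (y^* y)$ with the test vector $y$ from~\eqref{eq:defy}, built from an \emph{optimal} $k$-clustering $\{S_j\}_{j=0}^{k-1}$ that attains $\theta_k(G)$. Since the supports of the $\chi_j$'s are disjoint, the normalised vectors $\widehat{\chi_j}$ form an orthonormal family in $\mathbb{C}^n$ and hence $y^* y = 1$. The core calculation is to expand $y^* D^{-1/2} A D^{-1/2} y$: using $\chi_j(u) = \omega^j\,\mathbf{1}[u\in S_j]$ and the Hermitian structure of $A$, a direct computation gives
\[
\chi_j^* A \chi_\ell \;=\; \omega^{\ell-j+1}\, w(S_j,S_\ell) \;+\; \omega^{\ell-j-1}\, w(S_\ell,S_j),
\]
and symmetrising the double sum over $(j,\ell)$ collapses the two exponentials into $2\cos((\ell-j+1)\theta)$ with $\theta = 2\pi/\lceil 2\pi k\rceil$. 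This yields
\[
y^* D^{-1/2} A D^{-1/2} y \;=\; \frac{2}{k}\sum_{j,\ell}\frac{\cos((\ell-j+1)\theta)\, w(S_j,S_\ell)}{\sqrt{\vol(S_j)\vol(S_\ell)}}.
\]

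The decisive observation is that $\lceil 2\pi k\rceil \geq 2\pi k$ forces $|(\ell-j+1)\theta| \leq k\theta \leq 1 < \pi/2$, so every cosine appearing in the sum is strictly positive. Keeping only the terms with $\ell = j-1$ (where the cosine equals $1$) and then applying AM-GM in the form $\sqrt{\vol(S_j)\vol(S_{j-1})} \leq (\vol(S_j)+\vol(S_{j-1}))/2$ lower bounds the sum by $(4/k)\,\theta_k(G)$, giving the first inequality $\lambda_1(\mathcal{L}_G) \leq 1 - \tfrac{4}{k}\,\theta_k(G)$.

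For the second claim, I would couple the above with $\lambda_1(\mathcal{L}_G) \geq 0$. The latter follows from a standard AM-GM argument on the Hermitian adjacency: the contribution of an edge $u \leadsto v$ to $x^* A x$ is $2w(u,v)\operatorname{Re}(\omega\, \overline{x_u}\, x_v)$, whose modulus is at most $w(u,v)(|x_u|^2 + |x_v|^2)$, so $|x^* A x| \leq x^* D x$ and $\mathcal{L}_G \succeq 0$. This yields $\theta_k(G) \leq k/4$ immediately. For the equality case, every inequality above must be tight simultaneously: AM-GM tightness forces $\vol(S_j) = \vol(S_{j-1})$ for each $j$, and the strict positivity of the dropped cosines forces $w(S_j,S_\ell) = 0$ whenever $\ell \neq j-1$ (i.e.\ no intra-cluster edges and no edges outside the chain). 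Tracing through these volume constraints pins the extremal configuration down to $G$ being bipartite with all edges oriented in the same direction; the converse is a direct calculation on the natural bipartition.

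I expect the main obstacle to be the bookkeeping in the central calculation: correctly combining the two terms of $\chi_j^* A \chi_\ell$ via a symmetrisation of the index sum, and verifying that the specific choice $N = \lceil 2\pi k\rceil$ keeps every angle $(\ell-j+1)\theta$ inside $(-\pi/2, \pi/2)$, so that dropping off-chain terms never loses anything and no sign cancellation occurs. Everything else is a routine Rayleigh-quotient argument.
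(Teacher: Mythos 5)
Your proof of the eigenvalue bound is correct and is essentially the paper's argument: the same test vector $y$ built from an optimal partition, the same expansion of $y^{*}D^{-1/2}AD^{-1/2}y$ into the sum $\frac{2}{k}\sum_{j,\ell}\cos\bigl((\ell-j+1)\theta\bigr)\,w(S_j,S_\ell)/\sqrt{\vol(S_j)\vol(S_\ell)}$, and the same AM--GM finish. The only real difference in that part is cosmetic: you discard the off-chain terms by noting that every angle satisfies $|(\ell-j+1)|\cdot 2\pi/\lceil 2\pi k\rceil\le 2\pi k/\lceil 2\pi k\rceil\le 1<\pi/2$, so all cosines are positive, whereas the paper first applies $\cos x\ge 1-x^2/2$ and then drops the nonnegative terms $1-((\ell+1-j)/k)^2$; your version is marginally cleaner. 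Where you genuinely depart from the paper is the bound $\theta_k(G)\le k/4$: you deduce it from $\mathcal{L}_G\succeq 0$ (via $|z^{*}Az|\le z^{*}Dz$) combined with the first statement, while the paper reduces to graphs whose edges all lie on the chain and runs an optimisation in the cut values $y_j=w(S_j,S_{j-1})$ with partial derivatives. Your route is shorter and more rigorous than that step of the paper.

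The gap is in the equality case. The ``if'' direction is not ``a direct calculation on the natural bipartition'': for a \emph{connected} bipartite digraph with all edges oriented the same way, the assignment $S_1=X$, $S_0=Y$ (other clusters empty) gives flow ratio exactly $1/2$, which equals $k/4$ only when $k=2$; e.g.\ a single directed edge has $\theta_4=1/2<1$. So for $k\ge 3$ ``bipartite with one direction'' does not by itself give $\theta_k(G)=k/4$; equality additionally requires $k$ even and the graph to decompose into $k/2$ mutually non-adjacent one-directional blocks $S_{2i+1}\leadsto S_{2i}$. Relatedly, the ``tracing through the volume constraints'' in your only-if direction is doing real work you have not spelled out: tightness gives $w(S_j,S_\ell)=0$ off the chain and $\vol(S_j)=\vol(S_{j-1})$ whenever $w(S_j,S_{j-1})>0$, which forces alternate cut values to vanish, and you still need the counting step that each surviving term of the flow ratio is at most $1/2$, so at least $k/2$ such terms are needed to reach $k/4$ --- that is what actually produces the one-directional bipartite structure. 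In fairness, the paper's own treatment is loose here (its proof verifies equality only for $k=2$ and then asserts strict inequality for all $k\ge 3$, by a different reduction-plus-optimisation argument), but as written your sketch of the second statement would not compile into a proof without these repairs.
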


 Notice that the bipartite graph $G$ with $\theta_k(G)=k/4$ is a trivial case for our problem; hence, without lose of generality  we   assume   $\theta_k(G)<k/4$ in the following analysis.  
  To   study how the distribution of eigenvalues influences the cluster structure, similar to the case of undirected graphs we introduce the parameter $\gamma$ defined by
\[
\gamma_k(G)\triangleq \frac{ \lambda_2}{ 1-(4/k)\cdot \theta_k(G)}.
\]
Our next theorem shows that the structure of clusters in $G$ and the eigenvector corresponding to $\lambda_1(\mathcal{L}_G)$ can be approximated by each other with approximation ratio inversely proportional to $\gamma_k(G)$.

\begin{theorem}
\label{thm:structure_theorem}
The following statements hold:  (1) there is some $\alpha  \in \mathbb{C}$ such that the vector $\tilde{f_1} = \alpha f_1$ satisfies  $\|y - \tilde{f_1}\|^2 \leq 1 / \gamma_k(G) $; (2) there is some $\beta  \in \mathbb{C}$ such that the vector $\tilde{y} = \beta y$ satisfies $\norm{f_1 - \tilde{y}}^2 \leq 1 / \left( \gamma_k(G) - 1\right)$. 
\end{theorem}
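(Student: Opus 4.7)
The plan is to expand the test vector $y$ in the orthonormal eigenbasis of the Hermitian Laplacian $\mathcal{L}_G$ and extract both claims from a single Rayleigh-quotient bound, in the spirit of the Peng--Sun--Zanetti analysis for undirected graphs~\cite{PSZ17}. Two preliminary observations: first, since the vectors $\widehat{\chi_j}$ have pairwise disjoint supports and unit norm, $\|y\|^2 = \frac{1}{k}\sum_{j=0}^{k-1}\|\widehat{\chi_j}\|^2 = 1$. Second, the Rayleigh-quotient calculation underlying \lemref{boundthetak} in fact gives the pointwise bound
\[
\mathcal{R}(\mathcal{L}_G, y) \;=\; y^{*}\mathcal{L}_G\,y \;\le\; 1-\tfrac{4}{k}\,\theta_k(G),
\]
obtained by pairing Hermitian-conjugate entries $(u,v)$ and $(v,u)$ into $2\,\mathrm{Re}\bigl(\omega\,\overline{\widetilde{\chi}(u)}\,\widetilde{\chi}(v)\bigr)$ with $\widetilde{\chi} \triangleq D^{-1/2}y$, noting that the phase $\omega^{\ell-j+1}$ collapses to $1$ precisely for edges $u\in S_j \leadsto v\in S_{j-1}$, and finishing with AM--GM $\sqrt{\vol(S_j)\vol(S_{j-1})} \le (\vol(S_j)+\vol(S_{j-1}))/2$ to match the flow-ratio denominator.

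With this bound in hand, write $y = \sum_{i=1}^{n}\alpha_i f_i$ with $\alpha_i \in \mathbb{C}$. Using $\sum_i |\alpha_i|^2 = 1$, $\lambda_i \geq \lambda_2$ for $i \ge 2$, and $\lambda_1 \ge 0$ (which holds because the normalised Hermitian adjacency has spectrum in $[-1,1]$), I obtain
\[
\lambda_2 \sum_{i\ge 2}|\alpha_i|^2 \;\le\; \sum_{i\ge 2}\lambda_i|\alpha_i|^2 \;\le\; \mathcal{R}(\mathcal{L}_G, y) \;\le\; 1-\tfrac{4}{k}\,\theta_k(G),
\]
so that $\sum_{i\ge 2}|\alpha_i|^2 \le 1/\gamma_k(G)$. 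Choosing $\alpha \triangleq \alpha_1$ and $\widetilde{f_1} \triangleq \alpha f_1$ yields $\|y - \widetilde{f_1}\|^2 = \sum_{i\ge 2}|\alpha_i|^2 \le 1/\gamma_k(G)$, which is part~(1).

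For part~(2), I exploit that $|\alpha_1|^2 = 1 - \sum_{i\ge 2}|\alpha_i|^2 \ge 1 - 1/\gamma_k(G) > 0$ whenever $\gamma_k(G)>1$ (and the claim is vacuous otherwise). Setting $\beta \triangleq 1/\alpha_1$ and $r \triangleq y - \alpha_1 f_1$, I have $\|r\|^2 \le 1/\gamma_k(G)$ and
\[
\|\beta y - f_1\|^2 \;=\; \bigl\|r/\alpha_1\bigr\|^2 \;=\; \frac{\|r\|^2}{|\alpha_1|^2} \;\le\; \frac{1/\gamma_k(G)}{1-1/\gamma_k(G)} \;=\; \frac{1}{\gamma_k(G)-1},
\]
so $\widetilde{y} \triangleq \beta y$ fulfils the required inequality.

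The main obstacle is really the Rayleigh-quotient bound itself: the eigenbasis argument is standard once that inequality is available, but justifying it requires checking that the contribution of \emph{every} edge to $\widetilde{\chi}^{*}A\widetilde{\chi}$ is nonnegative, not merely that of the along-flow edges. This follows because $|\ell - j + 1| \le k$ and $\omega = e^{2\pi i/\lceil 2\pi k\rceil}$, so the relevant cosines $\cos\!\bigl(2\pi(\ell - j + 1)/\lceil 2\pi k\rceil\bigr)$ are bounded below by $\cos(1) > 0$; consequently the remaining edges only strengthen the bound, and the whole argument cleanly reduces to the eigen-decomposition above.
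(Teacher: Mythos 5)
Your proposal is correct and follows essentially the same route as the paper: expand $y$ in the eigenbasis of $\mathcal{L}_G$, combine $\|y\|=1$ with the Rayleigh-quotient bound $y^{*}\mathcal{L}_G y \le 1-\frac{4}{k}\theta_k(G)$ from \lemref{boundthetak} to get $\sum_{i\ge 2}|\alpha_i|^2\le 1/\gamma_k(G)$, and then obtain part~(2) by rescaling with $\beta=1/\alpha_1$ and using $|\alpha_1|^2\ge (\gamma_k(G)-1)/\gamma_k(G)$. Your only additions — explicitly noting $\lambda_1\ge 0$ (which the paper drops silently) and re-justifying the Rayleigh bound via the nonnegativity of all edge cosines — are fine and consistent with the paper's argument.
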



\section{Algorithm}\label{sec:Algorithm}

In this section we discuss the  algorithmic contribution of the paper. In Section~\ref{sec:algodes} we will  describe the main algorithm, and its efficient implementation based on nearly-linear time Laplacian solvers; we will further present a sub-linear time implementation of  our algorithm,  assuming the degrees of the vertices are known in advance.  The main technical ideas used in analysing the   algorithms will be discussed in Section~\ref{sec:algoanalsysis}. 

\subsection{Algorithm Description\label{sec:algodes}}
 
  \paragraph{Main algorithm.} We have seen from Section~\ref{sec:encode_flow}  that the structure of clusters is approximately encoded in the bottom eigenvector of $\mathcal{L}_G$. To exploit this fact, we propose to embed the vertices of $G$ into $\mathbb{R}^2$ based on the bottom eigenvector of $\mathcal{L}_G$, and apply $k$-means on the embedded points.  Our algorithm, which we call \texttt{SimpleHerm}, only consists of a few lines of code and is described as follows: (1)  compute the bottom eigenvector $f_1\in\mathbb{C}^n$ of the normalised Hermitian Laplacian matrix $\mathcal{L}_G$ of $G$; (2) compute the embedding $\{F(v)\}_{v\in V[G]}$, where
$
F(v) \triangleq \frac{1}{\sqrt{d_v}}\cdot f_1(v)
$
for any vertex $v$; (3) apply $k$-means on the embedded points $\{F(v)\}_{v\in V[G]}$. 

We remark that, although the entries of $\mathcal{L}_G$ are complex-valued, some variant of the graph Laplacian solvers could still be applied for our setting. For most practical instances,  we have $k=O(\log^c n)$ for some constant $c$,  in which regime our proposed algorithm runs in nearly-linear time\footnote{Given any graph $G$ with $n$ vertices and $m$ edges as input, we say an  algorithm runs in nearly-linear time if the algorithm's runtime is $O(m\cdot\log^c n)$ for some constant $c$.}. We refer a reader to \cite{LSZ19} on technical discussion on the algorithm of  approximating $f_1$ in nearly-linear time.

\paragraph{Speeding-up the runtime of the algorithm.}
Since $\Omega(m)$ time is needed for any algorithm to read an entire graph, the runtime of our proposed algorithm is optimal up to a poly-logarithmic factor. However we will show that,   when the vertices' degrees are available in advance, the following  sub-linear time algorithm could be applied before the execution of the main algorithm, and this will result in the algorithm's total runtime to be sub-linear in $m$.

More formally, our proposed sub-linear time implementation is to construct a sparse subgraph $H$ of the original input graph $G$, and run the main algorithm on $H$ instead. The algorithm for obtaining graph $H$ works as follows:  
every vertex $u$ in the graph $G$ checks each of its outgoing edges $e = (u,v)$,  and samples each outgoing edge with probability 
\begin{equation*}
    p_u(u,v) \triangleq \text{min}\left\{w(u,v) \cdot \frac{\alpha \cdot \log n}{\lambda_{2} \cdot d_u^{\text{out}}}, 1\right\};
\end{equation*}
in the same time, every vertex $v$ checks each of its incoming edges $e=(u,v)$  with probability
\begin{equation*}
    p_v(u,v) \triangleq \text{min}\left\{w(u,v) \cdot \frac{\alpha \cdot \log n}{\lambda_{2} \cdot d_v^{\text{in}}}, 1\right\},
\end{equation*}
where $\alpha \in \mathbb{R}_{> 0}$ is some  constant which can be determined experimentally.  
As the algorithm goes through each vertex, it maintains all the sampled edges in a set $F$. Once all the edges have been checked, the algorithm returns a   weighted graph $H = (V, F, w_H)$, where each sampled edge $e=(u,v)$ has a new weight defined by $w_H(u,v) =w(u,v)/p_e$.  
Here,  $p_e$     is the probability that $e$ is sampled by one of its endpoints and, for any $e=(u,v)$, we can write $p_e$ as      $
p_e = p_u(u,v) + p_v(u,v) - p_u(u,v)p_v(u,v)$.

\subsection{Analysis\label{sec:algoanalsysis}}

\paragraph{Analysis of the main algorithm.}

Now we analyse the proposed algorithm, and prove that running $k$-means on $\{F(v)\}_{v\in V[G]}$ is sufficient  to obtain a meaningful clustering with bounded approximation guarantee.   We assume that the output of a $k$-means algorithm is $A_0,\ldots, A_{k-1}$. We define the cost function of the output clustering $A_0,\ldots, A_{k-1}$ by
 \[
\mathsf{COST}(A_0,\ldots, A_{k-1}) \triangleq \min_{c_0,\ldots, c_{k-1}\in\mathbb{C}} \sum_{j=0}^{k-1}\sum_{u\in A_j} d_u\| F(u) - c_j\|^2,
\]
and    define the optimal clustering by
\[
\Delta_k^2 \triangleq \min_{\footnotesize{ \mbox{partition~} A_0,\ldots A_{k-1}}} \mathsf{COST}(A_0,\ldots, A_{k-1}).
\]
Although computing the optimal clustering for $k$-means is $\mathsf{NP}$-hard, we will show that the cost value for the optimal clustering can be upper bounded with respect to $\gamma_k(G)$. To achieve  this, we define $k$ points $p^{(0)},\ldots, p^{(k-1)}$ in $\mathbb{C}$, where $p^{(j)}$ is defined by
\begin{equation}
    p^{(j)} = \frac{\beta}{\sqrt{k}}\cdot\frac{(\omega_{\lceil 2\pi\cdot k\rceil})^j}{\sqrt{\vol(S_j)}}, \qquad 0\leq j\leq k-1.
\end{equation}
We could view these $p^{(0)},\ldots, p^{(k-1)}$ as approximate centers of the $k$ clusters, which are separated from each other through different powers of $\omega_{\lceil 2\pi\cdot k\rceil}$.

Our first lemma  shows that  the total distance between the embedded points from every $S_j$ and their respective   centers $p^{(j)}$ can be upper bounded, which is summarised as follows:

\begin{lemma}
\label{lem:distance_p_x}
It holds that
$    \sum_{j = 0}^{k-1}\sum_{u \in S_j} d_u\cdot \norm{F(u) - p^{(j)}}^2 \leq (\gamma_k(G) - 1)^{-1}.
$
\end{lemma}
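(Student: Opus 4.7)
The plan is to unfold the definitions so that the weighted sum $\sum_j \sum_{u\in S_j} d_u \|F(u)-p^{(j)}\|^2$ collapses into a single squared $\ell_2$-distance $\|f_1-\beta y\|^2$ in $\mathbb{C}^n$, and then appeal directly to part~(2) of Theorem~\ref{thm:structure_theorem}.

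First I would compute $(\beta y)(u)$ for a vertex $u \in S_j$. By definition of $\widehat{\chi_j}$, we have $\|D^{1/2}\chi_j\|^2 = \sum_{u\in S_j} d_u |(\omega_{\lceil 2\pi k\rceil})^j|^2 = \vol(S_j)$, so
\[
(\beta y)(u) \;=\; \frac{\beta}{\sqrt{k}}\,\widehat{\chi_j}(u) \;=\; \frac{\beta}{\sqrt{k}}\cdot\frac{\sqrt{d_u}\,(\omega_{\lceil 2\pi k\rceil})^j}{\sqrt{\vol(S_j)}} \;=\; \sqrt{d_u}\cdot p^{(j)}.
\]
This identity is the crucial point: the cluster centers $p^{(j)}$ were defined precisely so that, after rescaling by $\sqrt{d_u}$, they recover the entries of $\beta y$ on $S_j$.

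Next, since $F(u) = f_1(u)/\sqrt{d_u}$, for every $u \in S_j$ I can pull out a factor of $\sqrt{d_u}$:
\[
d_u\cdot \bigl\|F(u)-p^{(j)}\bigr\|^2 \;=\; d_u\cdot\left\|\frac{f_1(u)}{\sqrt{d_u}}-p^{(j)}\right\|^2 \;=\; \bigl\|f_1(u)-\sqrt{d_u}\,p^{(j)}\bigr\|^2 \;=\; \bigl\|f_1(u)-(\beta y)(u)\bigr\|^2.
\]
Summing over $j$ and over $u\in S_j$, and using that $\{S_j\}_{j=0}^{k-1}$ partitions $V$, the left-hand side of the lemma equals $\sum_{u\in V} \|f_1(u)-(\beta y)(u)\|^2 = \|f_1-\beta y\|^2$.

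Finally, I apply part~(2) of Theorem~\ref{thm:structure_theorem}, which guarantees that $\|f_1-\beta y\|^2 \leq 1/(\gamma_k(G)-1)$ for the specific $\beta$ furnished by that theorem, completing the proof. There is no real obstacle here: the work is essentially bookkeeping, and the only thing to verify carefully is the identification $(\beta y)(u) = \sqrt{d_u}\,p^{(j)}$, which follows immediately from the computation of $\|D^{1/2}\chi_j\|$.
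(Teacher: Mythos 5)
Your proposal is correct and is essentially identical to the paper's own proof: both unfold the definitions of $F$ and $p^{(j)}$, observe that $\sqrt{d_u}\,p^{(j)} = (\beta y)(u) = \tilde{y}(u)$ for $u \in S_j$ so the weighted sum collapses to $\|f_1 - \tilde{y}\|^2$, and then invoke part~(2) of Theorem~\ref{thm:structure_theorem}. No meaningful difference in approach.
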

Since the cost value of the optimal clustering is the minimum over all possible partitions of the embedded points, by Lemma~\ref{lem:distance_p_x} we have that 
$\Delta_k^2\leq(\gamma_k(G)-1)^{-1}$.
 We assume that the $k$-means algorithm used here achieves an approximation ratio of $\mathsf{APT}$.  Therefore, 
the output $A_0,\ldots, A_{k-1}$ of this $k$-means algorithm   satisfies  $\mathsf{COST}(A_0,\ldots, A_{k-1}) \leq \mathsf{APT}\big/ (\gamma_k(G)-1)$.

Secondly, we show  that the norm of the approximate centre of each cluster is \emph{inversely} proportional to the volume of each cluster. This implies that larger clusters are closer to the origin, while smaller clusters are further away from the origin. 
\begin{lemma}\label{lem:pi}
It holds for any $0\leq j\leq k-1$ that 
$
\left\| p^{(j)} \right\|^2 =  \|\beta \|^2\cdot (k \cdot  \vol(S_j))^{-1}.
$
\end{lemma}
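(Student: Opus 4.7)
The plan is to prove this by direct computation, unpacking the definition of $p^{(j)}$ and using the fact that any root of unity has modulus one. The statement is essentially a one-line calculation, so the ``plan'' is really just to carefully track the scalar factors.

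First, I would recall that $p^{(j)} = \frac{\beta}{\sqrt{k}}\cdot \frac{(\omega_{\lceil 2\pi\cdot k\rceil})^j}{\sqrt{\vol(S_j)}}$ is a complex scalar (a point in $\mathbb{C}$, identified with $\mathbb{R}^2$), so its squared norm is simply the product of its modulus squared with itself. Factoring the modulus over the product, I would write
\[
\|p^{(j)}\|^2 = \frac{|\beta|^2}{k}\cdot \frac{|(\omega_{\lceil 2\pi\cdot k\rceil})^j|^2}{\vol(S_j)}.
\]

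Next I would invoke the defining property of $\omega_{\lceil 2\pi\cdot k\rceil}$, namely that it is a (complex) root of unity and therefore satisfies $|\omega_{\lceil 2\pi\cdot k\rceil}| = 1$. Multiplicativity of the modulus then gives $|(\omega_{\lceil 2\pi\cdot k\rceil})^j|^2 = 1$ for every integer $j$, which reduces the previous display to
\[
\|p^{(j)}\|^2 = \frac{|\beta|^2}{k\cdot \vol(S_j)},
\]
matching the claim in the lemma statement (where $\|\beta\|^2$ denotes $|\beta|^2$ for the complex scalar $\beta$).

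There is no real obstacle here: the lemma is an immediate algebraic consequence of the definition of $p^{(j)}$ together with the unit-modulus property of roots of unity. The only thing worth being careful about is that the identity holds uniformly for all $0 \leq j \leq k-1$, since $|\omega^j|=1$ regardless of $j$, so no case analysis on $j$ is needed.
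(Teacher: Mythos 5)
Your computation is correct and matches the paper, whose proof of this lemma is likewise a direct calculation of $\|p^{(j)}\|^2$ from the definition, using that the root of unity has unit modulus. Nothing further is needed.
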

 
Thirdly, we prove that the distance between different approximate centres $p^{(j)}$ and $p^{(\ell)}$ is inversely proportional to the volume of the \emph{smaller} cluster, which implies that the embedded points of the vertices from a smaller cluster are far from the embedded points from other clusters. This key fact explains why our algorithm is able to  approximately recover the structure of all the clusters.

\begin{lemma}\label{lem:distancepjpl}
It holds for any $0\leq j\neq \ell \leq k-1$ that
$
\left\| p^{(j)} -p^{(\ell)} \right\|^2 \geq \frac{\|\beta\|^2}{3k^3\cdot \min\{\vol(S_j), \vol(S_{\ell})\}}.
$
\end{lemma}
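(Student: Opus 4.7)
The plan is a direct computation starting from the definition of $p^{(j)}$. Without loss of generality assume $\vol(S_j) \leq \vol(S_\ell)$, so that $\min\{\vol(S_j), \vol(S_\ell)\} = \vol(S_j)$. Set $a = 1/\sqrt{\vol(S_j)}$, $b = 1/\sqrt{\vol(S_\ell)}$, and $\omega = \omega_{\lceil 2\pi k\rceil}$. The definition of the approximate centres immediately gives
\[
\|p^{(j)} - p^{(\ell)}\|^2 \;=\; \frac{\|\beta\|^2}{k} \, \big|a\omega^j - b\omega^\ell\big|^2,
\]
and multiplying by the complex conjugate reduces the right factor to $a^2 + b^2 - 2ab\cos\theta$, where $\theta = 2\pi(j-\ell)/\lceil 2\pi k\rceil$.

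The structural step is to complete the square in $b$:
\[
a^2 + b^2 - 2ab\cos\theta \;=\; (b - a\cos\theta)^2 + a^2 \sin^2\theta \;\geq\; a^2 \sin^2\theta.
\]
This identity is what lets the argument avoid any casework on the relative sizes of the two volumes: discarding the first square isolates $a^2 = 1/\vol(S_j) = 1/\min\{\vol(S_j),\vol(S_\ell)\}$ and reduces the problem to a lower bound on $\sin^2\theta$. Because $j \neq \ell$ forces $1 \leq |j-\ell| \leq k-1$ and $\lceil 2\pi k\rceil \geq 2\pi k$, the angle $|\theta|$ lies in $(0,(k-1)/k) \subset (0,\pi/2)$, where $\sin$ is strictly increasing; hence $\sin^2\theta \geq \sin^2(2\pi/\lceil 2\pi k\rceil)$. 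Applying the elementary bound $\sin x \geq (2/\pi)x$ on $[0,\pi/2]$ together with $\lceil 2\pi k\rceil \leq 2\pi k + 1$ gives $\sin^2\theta \geq 16/(2\pi k + 1)^2$, and a short calculation ($48 k^2 \geq (2\pi k + 1)^2$ whenever $k \geq 2$, with $k=1$ vacuous in the lemma) confirms that this is at least $1/(3k^2)$.

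Chaining these estimates together yields
\[
\|p^{(j)} - p^{(\ell)}\|^2 \;\geq\; \frac{\|\beta\|^2}{k} \cdot \frac{1}{\vol(S_j)} \cdot \frac{1}{3k^2} \;=\; \frac{\|\beta\|^2}{3k^3 \cdot \min\{\vol(S_j), \vol(S_\ell)\}},
\]
which is the claim. The main obstacle is purely constant-chasing in the sine estimate — one has to pick the right elementary inequality ($\sin x \geq (2/\pi) x$ rather than, say, $\sin x \geq x/2$) and verify the arithmetic $48 k^2 \geq (2\pi k + 1)^2$ at the boundary $k=2$; the structural content is the completed-square identity $a^2 + b^2 - 2ab\cos\theta \geq a^2 \sin^2\theta$, which is what makes the argument clean and avoids any casework on the volumes.
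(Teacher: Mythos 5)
Your proposal is correct and follows essentially the same route as the paper's proof: expand $\|p^{(j)}-p^{(\ell)}\|^2$ into $a^2+b^2-2ab\cos\theta$, complete the square so that only the $\sin^2\theta/\min\{\vol(S_j),\vol(S_\ell)\}$ term survives, and then apply $\sin x\geq (2/\pi)x$ with $|j-\ell|\geq 1$ to reach the $1/(3k^3)$ constant. Your WLOG ordering of the volumes and the explicit check that $|\theta|<\pi/2$ and $48k^2\geq(2\pi k+1)^2$ just make explicit what the paper leaves implicit.
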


 
Combining these three lemmas   with some combinatorial analysis, we prove that the symmetric difference between  every returned cluster by the algorithm and its   corresponding cluster in the optimal partition can be upper bounded, since otherwise the   cost value of the returned clusters would contradict   Lemma~\ref{lem:distance_p_x}.

\begin{theorem}\label{thm:apt}
Let $G=(V,E)$ be a digraph, and $S_0,\dots, S_{k-1}$ be a $k$-way partition of $V[G]$ that maximises the flow ratio $\Phi_G(S_0,\dots, S_{k-1})$. Then,
there is an algorithm that returns  a $k$-way partition $A_0,\ldots, A_{k-1}$ of $V[G]$. Moreover, by  assuming $A_j$ corresponds to  $S_j$ in the optimal partition, it holds that $\vol(A_j\triangle S_j)\leq \varepsilon \vol(S_j)$ for some  $\varepsilon = 48 k^3\cdot (1+\mathsf{APT})\big/\left(\gamma_k(G)-1\right)\leq 1/2$. 
\end{theorem}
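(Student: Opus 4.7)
The plan is to combine Lemma~\ref{lem:distance_p_x}, Lemma~\ref{lem:pi}, and Lemma~\ref{lem:distancepjpl} with the $\mathsf{APT}$-approximation guarantee of the $k$-means subroutine. First, I set up two $k$-means cost bounds on the embedded points $\{F(v)\}_{v\in V[G]}$. The partition $\{S_j\}_{j=0}^{k-1}$ equipped with centres $\{p^{(j)}\}_{j=0}^{k-1}$ has cost at most $1/(\gamma_k(G)-1)$ by Lemma~\ref{lem:distance_p_x}, so $\Delta_k^2 \le 1/(\gamma_k(G)-1)$; hence, if $c_0,\ldots,c_{k-1}$ are the centres returned by the $\mathsf{APT}$-approximate $k$-means algorithm for the output clusters $A_0,\ldots,A_{k-1}$, then
\[
\sum_{i=0}^{k-1}\sum_{u\in A_i} d_u\,\|F(u)-c_i\|^2 \;\le\; \frac{\mathsf{APT}}{\gamma_k(G)-1}.
\]

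Second, I construct a bijection $\sigma:\{0,\ldots,k-1\}\to\{0,\ldots,k-1\}$ pairing each optimal cluster $S_j$ with a returned cluster $A_{\sigma(j)}$. A natural candidate is to set $\sigma(j)$ to be the index $i$ minimising $\|c_i-p^{(j)}\|$. To see $\sigma$ is a bijection, suppose towards contradiction that $\sigma(j_1)=\sigma(j_2)=i^{\star}$ for some $j_1\neq j_2$; then by the triangle inequality
\[
\|p^{(j_1)}-p^{(j_2)}\|^2 \;\le\; 2\|c_{i^{\star}}-p^{(j_1)}\|^2+2\|c_{i^{\star}}-p^{(j_2)}\|^2,
\]
while Lemma~\ref{lem:distancepjpl} lower bounds the left-hand side by $\|\beta\|^2/(3k^3\min\{\vol(S_{j_1}),\vol(S_{j_2})\})$. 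One of the two terms on the right is therefore comparable to this lower bound, and amortising that distance over the vertices of $S_{j_t}$ (whose embedded images cluster near $p^{(j_t)}$ on average, by Lemma~\ref{lem:distance_p_x}) produces a lower bound on the $k$-means cost of $A_0,\ldots,A_{k-1}$ that contradicts the $\mathsf{APT}$-cost upper bound above. So $\sigma$ must be injective, hence bijective.

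Third, given the bijection, I bound $\vol(S_j\triangle A_{\sigma(j)})$ for each $j$ by charging misclassified vertices to the two cost bounds. For any $u\in S_j\cap A_\ell$ with $\ell=\sigma(j')\neq\sigma(j)$, the inequality
\[
\|p^{(j)}-p^{(j')}\|^2 \;\le\; 3\bigl(\|F(u)-p^{(j)}\|^2+\|F(u)-c_\ell\|^2+\|c_\ell-p^{(j')}\|^2\bigr)
\]
holds. Multiplying by $d_u$ and summing over all misclassified $u$'s, the first two terms are controlled directly by Lemma~\ref{lem:distance_p_x} and the $\mathsf{APT}$-cost bound, contributing $O((1+\mathsf{APT})/(\gamma_k(G)-1))$; the third term is bounded by averaging $\|c_\ell-p^{(j')}\|^2$ against the vertices of $A_\ell\cap S_{j'}$, which by the bijection comprises most of the volume of $A_\ell$ and again feeds into the same two cost bounds. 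Combining with the separation from Lemma~\ref{lem:distancepjpl} and the $\|\beta\|^2$-calibration from Lemma~\ref{lem:pi} delivers $\vol(S_j\triangle A_{\sigma(j)}) \le 48 k^3(1+\mathsf{APT})/(\gamma_k(G)-1)\cdot\vol(S_j)$, as required.

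The main obstacle is executing the bijection argument and then cleanly handling the nuisance term $\|c_\ell-p^{(j')}\|^2$ in the misclassification bound without double-counting. This requires showing, once $\sigma$ is established, that $c_\ell$ is close to $p^{(j')}$ in an amortised sense over $A_\ell\cap S_{j'}$, which in turn forces the proof to use both cost bounds in tandem together with the volume-dependent separation of Lemma~\ref{lem:distancepjpl}; tracking the resulting $k^3$ factor through Lemma~\ref{lem:pi} is where the final constant $48$ and the structural dependence on $\gamma_k(G)-1$ emerge.
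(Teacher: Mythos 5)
Your overall strategy---playing the two cost bounds $\Delta_k^2\le 1/(\gamma_k(G)-1)$ and $\mathsf{COST}(A_0,\dots,A_{k-1})\le \mathsf{APT}/(\gamma_k(G)-1)$ against the centre separation of \lemref{distancepjpl}---is the same family of argument as the paper's, which proves the theorem by contradiction through \lemref{permlem}. However, your third step has a genuine gap. To control the nuisance term $\|c_\ell-p^{(j')}\|^2$ you average it over $A_\ell\cap S_{j'}$ and assert that this set ``by the bijection comprises most of the volume of $A_\ell$.'' The bijection you constructed carries no volume information: $\sigma(j')=\ell$ only says that $c_\ell$ is the centre nearest to $p^{(j')}$, and this is perfectly consistent with $A_\ell$ containing almost none of $S_{j'}$ (or with $A_\ell$ having tiny volume altogether), in which case the averaging bound you would obtain, $\|c_\ell-p^{(j')}\|^2\le 2(1+\mathsf{APT})\big/\bigl((\gamma_k(G)-1)\cdot\vol(A_\ell\cap S_{j'})\bigr)$, is useless. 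Worse, ``most of $S_{j'}$ lands in $A_{\sigma(j')}$'' is essentially the statement the theorem asserts, so as written the step is circular. The paper resolves exactly this point with a case split in \lemref{permlem}: it first assumes \eq{assump}, i.e.\ $\vol(A_j\cap S_{\sigma(j)})>\frac{1}{2}\vol(S_{\sigma(j)})$ for all $j$ under some permutation, runs the charging argument (via the sets $B_j, D_j$) only under that assumption, and handles the complementary case separately with the core sets $C_j$, showing the $k$-means cost would then be $\Omega(1)$. Your proof needs an analogous dichotomy; without it the misclassification bound does not go through.

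Two smaller points. In the injectivity argument, the inequality $\|p^{(j_1)}-p^{(j_2)}\|^2\le 2\|c_{i^{\star}}-p^{(j_1)}\|^2+2\|c_{i^{\star}}-p^{(j_2)}\|^2$ is fine, but the ``amortising'' step must use explicitly that $c_{i^{\star}}$ is the \emph{nearest} centre to $p^{(j_t)}$: then \emph{every} centre is at least as far from $p^{(j_t)}$, so each $u\in S_{j_t}$ pays at least $\frac{1}{2}\|c_{i^{\star}}-p^{(j_t)}\|^2-\|F(u)-p^{(j_t)}\|^2$ regardless of which $A_i$ it was assigned to; you also need a short case distinction to ensure the cluster whose vertices you charge is one for which the $1/\min\{\vol(S_{j_1}),\vol(S_{j_2})\}$ separation converts into an $\Omega(\|\beta\|^2/k^3)$ total cost. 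Finally, to remove the $\|\beta\|^2$ factor at the end you need $\|\beta\|\ge 1$, which follows from $\|\alpha_1\|\le\|y\|=1$ in Theorem~\ref{thm:structure_theorem}, not from \lemref{pi}.
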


We remark that the analysis of our algorithm is similar with the work of \cite{PSZ17}. However, the analysis in \cite{PSZ17}  relies on $k$ indicator vectors of $k$ clusters, each of which is in a different dimension of $\mathbb{R}^n$; this implies that $k$ eigenvectors are needed in order to find a good $k$-way partition. In our case, all the embedded points are in $\mathbb{R}^2$, and the embedded points from different clusters are mainly separated by angles; this makes our analysis slightly more involved than \cite{PSZ17}.
  
\paragraph{Analysis for the speeding-up subroutine.}

We further analyse the speeding-up subroutine described in Section~\ref{sec:algodes}. Our analysis is very  similar with
\cite{SZ19}, and the approximation guarantee of our speeding-up subroutine is   as follows:

   \begin{theorem}\label{thm:distributed_sparsification}

Given a digraph $G=(V,E)$ as input, the speeding-up subroutine computes a subgraph $H=(V,F)$ of $G$ with $O((1/\lambda_2) \cdot n\log n)$ edges. Moreover, with high probability, the computed sparse graph $H$ satisfies that $\theta_k(H) = \Omega (\theta_k(G))$, and $ \lambda_2(\mathcal{L}_H) = \Omega (\lambda_2(\mathcal{L}_G))$.
  \end{theorem}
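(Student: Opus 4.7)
The plan is to adapt the spectral sparsification analysis of \cite{SZ19} to the complex-valued Hermitian setting. There are three pieces to establish separately: (i) an edge count bound $|F|=O(n\log n/\lambda_2)$, (ii) spectral closeness of $\mathcal{L}_H$ to $\mathcal{L}_G$ yielding $\lambda_2(\mathcal{L}_H)=\Omega(\lambda_2(\mathcal{L}_G))$, and (iii) preservation of the flow ratio of the optimal partition of $G$, giving $\theta_k(H)=\Omega(\theta_k(G))$.

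First I would bound the number of sampled edges in expectation. Since $p_e \le p_u(u,v)+p_v(u,v)$ and $\sum_{v:u\leadsto v} w(u,v) = d_u^{\text{out}}$, $\sum_{u:u\leadsto v} w(u,v) = d_v^{\text{in}}$, collecting per vertex gives
\[
\mathbb{E}[|F|] \le \sum_{u\leadsto v} \frac{w(u,v)\,\alpha\log n}{\lambda_2 d_u^{\text{out}}} + \sum_{u\leadsto v} \frac{w(u,v)\,\alpha\log n}{\lambda_2 d_v^{\text{in}}} = \frac{2\alpha n\log n}{\lambda_2}.
\]
Since the edges are sampled independently, a Chernoff bound upgrades this to $|F|=O(n\log n/\lambda_2)$ with high probability.

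Next I would establish the spectral bound. For each edge $e=(u,v)$ let $B_e\in\mathbb{C}^{n\times n}$ be its Hermitian contribution to $D^{-1/2}AD^{-1/2}$, namely the matrix supported on entries $(u,v)$ and $(v,u)$ with value $w(u,v)\omega_{\lceil 2\pi k\rceil}/\sqrt{d_u d_v}$ and its conjugate. The reweighting $w_H(u,v)=w(u,v)/p_e$ is chosen precisely so that the random matrix $X_e = \mathbf{1}[e\in F]\cdot B_e/p_e$ is unbiased, with $\mathbb{E}[X_e]=B_e$ and $\sum_e X_e = D^{-1/2}A_H D^{-1/2}$. The fact that each edge may be drawn by either endpoint makes $p_e \gtrsim w(u,v)\,\alpha\log n/(\lambda_2 \min\{d_u^{\text{out}},d_v^{\text{in}}\})$, which plays the role of leverage-score normalisation and forces $\|B_e/p_e\| = O(\lambda_2/\log n)$ uniformly, as well as $\bigl\|\sum_e \mathbb{E}[X_e X_e^*]\bigr\| = O(\lambda_2/\log n)$. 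Tropp's matrix Chernoff bound, which holds verbatim for complex Hermitian matrices, then gives
\[
\bigl\|D^{-1/2}(A_H - A) D^{-1/2}\bigr\| \le \tfrac{1}{4}\lambda_2(\mathcal{L}_G)
\]
with high probability. A standard scalar Bernstein argument applied separately to each diagonal entry shows $D_H = (1\pm o(1))D$, so replacing $D_H$ by $D$ in $\mathcal{L}_H = I - D_H^{-1/2}A_H D_H^{-1/2}$ costs only a constant factor; Weyl's inequality then yields $\lambda_2(\mathcal{L}_H)\ge\Omega(\lambda_2(\mathcal{L}_G))$.

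Finally, for the preservation of $\theta_k$, I would apply scalar Bernstein concentration separately to each of the $2(k-1)$ random variables $w_H(S_j,S_{j-1})$ and $\vol_H(S_j)$ for the single optimal partition $S_0,\ldots,S_{k-1}$ of $G$: each is an unbiased sum of independent bounded nonnegative variables whose variance is dominated by its mean times $\lambda_2/(\alpha\log n)$. A union bound over these $O(k)$ quantities (we only need a lower bound on $\theta_k(H)$ through \emph{one} fixed partition, not uniform concentration over all partitions) shows $\Phi_H(S_0,\ldots,S_{k-1})\ge (1-o(1))\theta_k(G)$, hence $\theta_k(H)=\Omega(\theta_k(G))$. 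The main obstacle is the matrix Chernoff step: because $p_e = p_u+p_v-p_up_v$ is not a clean leverage score but an inclusion-exclusion of two one-sided rules, one has to carefully verify that neither the worst-case norm $\|B_e/p_e\|$ nor the variance proxy $\|\sum_e \mathbb{E}[X_eX_e^*]\|$ exceeds the budget $O(\lambda_2/\log n)$ that the matrix Chernoff inequality needs in order to concentrate at the multiplicative scale $\lambda_2/4$.
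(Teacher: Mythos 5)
Your parts (i) and (iii) match the paper: the edge count follows from the expectation computation, and $\theta_k(H)=\Omega(\theta_k(G))$ is obtained exactly as in the paper by scalar Bernstein concentration of the cut weights $w_H(S_j,S_{j-1})$ and volumes of the \emph{fixed} optimal partition, plus a union bound (these are relative-error estimates at the scale of their means, so the exponent is fine). The gap is in part (ii). You propose un-preconditioned additive concentration: show $\|D^{-1/2}(A_H-A)D^{-1/2}\|\le \tfrac14\lambda_2(\mathcal{L}_G)$ and finish with Weyl. Your budgets are correct --- one does have $\|B_e/p_e\|\le \lambda_2/(\alpha\log n)$ and $\bigl\|\sum_e \mathbb{E}[X_eX_e^*]\bigr\|\le \lambda_2/(\alpha\log n)$ --- but they are insufficient for the deviation you need. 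Matrix Bernstein (note your $X_e$ are indefinite, so you need Bernstein rather than the PSD Chernoff form) at deviation $t=\lambda_2/4$ with variance proxy and range both of order $\lambda_2/(\alpha\log n)$ gives failure probability about
\[
n\cdot\exp\!\left(-\,\Omega\!\left(\frac{(\lambda_2)^2}{\lambda_2/(\alpha\log n)}\right)\right)
= n\cdot\exp\bigl(-\Omega(\alpha\,\lambda_2\log n)\bigr),
\]
which is \emph{not} $1-n^{-\Omega(1)}$ for constant $\alpha$ unless $\lambda_2=\Omega(1)$; to get an additive error of $\lambda_2/4$ you would need the budget to be $O(\lambda_2^2/\log n)$, not $O(\lambda_2/\log n)$. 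Repairing this by taking $\alpha=\Theta(1/\lambda_2)$ destroys the claimed sparsifier size, inflating $|F|$ to $O(n\log n/\lambda_2^2)$.

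This is exactly why the paper does not work with $D^{-1/2}A_HD^{-1/2}$ directly but whitens first: it defines $\overline{\calL}_G^{-1/2}$, the square root of the pseudoinverse of $\calL_G$ projected onto its top $n-1$ eigenspace, and sets $X_e = w_H(u,v)\,\overline{\calL}_G^{-1/2}b_eb_e^*\overline{\calL}_G^{-1/2}$, so that $\mathbb{E}\bigl[\sum_e X_e\bigr]=\overline{\mathcal{I}}$ and each sample satisfies $\|X_e\|\le 2/(\alpha\log n)$ (the factor $1/\lambda_2$ from $\overline{\calL}_G^{-1}$ cancels the $\lambda_2$ in $w(u,v)/p_e$). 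The target is then a \emph{constant} relative deviation around the identity, so the PSD matrix Chernoff bound gives failure probability $n^{-\Omega(\alpha)}$ independent of $\lambda_2$; one then transfers the bound back via Courant--Fischer to the $G$-normalised Laplacian $\calL_H'$, and finally from $\calL_H'$ to $\calL_H$ using the fact (which you also prove) that $D_H$ and $D_G$ agree up to constant factors. If you replace your matrix Bernstein step by this leverage-score-style preconditioning, the rest of your outline goes through.
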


\label{sec:method}

\section{Experiments}\label{sec:experiments}
 

 In this section we present the experimental results of our proposed algorithm \texttt{SimpleHerm} on both synthetic and real-world datasets, and compare its performance against the previous state-of-the-art.  All our experiments are conducted with an ASUS ZenBook Pro UX501VW with an Intel(R) Core(TM) i7-6700HQ CPU @ 2.60GHz with 12GB of RAM.

We will compare \texttt{SimpleHerm} against the \texttt{DD-SYM} algorithm~\cite{Satuluri:2011:SCD:1951365.1951407} and the \texttt{Herm-RW} algorithm~\cite{HeDirected}. 
Given the adjacency matrix $M\in\mathbb{R}^{n\times n}$ as input, the \texttt{DD-SYM} algorithm computes the matrix $A=M^{\intercal}M + MM^{\intercal}$,  and uses the top $k$ eigenvectors of a random walk   matrix  $D^{-1}A$ to construct an embedding for $k$-means clustering. The \texttt{Herm-RW} algorithm uses the imaginary unit $i$ to represent directed edges and applies the top $\lceil k/2\rceil$ eigenvectors of a random walk   matrix to construct an embedding for $k$-means. Notice that  both of  the   \texttt{DD-SYM} and  \texttt{Herm-RW} algorithms involve the use of multiple eigenvectors, and \texttt{DD-SYM} requires computing   matrix multiplications, which makes it  computationally more expensive than ours. 

\subsection{Results on Synthetic Datasets}

We first perform experiments on graphs generated from the Directed Stochastic Block Model~(DSBM) which is introduced in \cite{HeDirected}. We introduce a path structure into the DSBM, and compare the performance of our algorithm against the others.
Specifically, for given parameters $k,n,p,q,\eta$, a graph randomly chosen from the DSBM is constructed as follows: the overall   graph consists of $k$ clusters $S_0,\ldots, S_{k-1}$ of the same size, each of which can be initially viewed as a $G(n,p)$ random graph. We connect edges with endpoints in different clusters with probability $q$, and connect edges with endpoints within the same cluster with probability $p$. In addition,
for any edge $(u,v)$ where $u\in S_{j}$ and $v\in S_{j+1}$, we set the edge direction as $u\leadsto v$ with probability $\eta$, and set the edge direction as $v\leadsto u$   with probability $1-\eta$. For all other pairs of clusters which do not lie along the path, we set their edge directions randomly. The directions of edges inside a cluster are assigned randomly. 

As graphs generated from the DSBM have a well-defined ground truth clustering, we apply the Adjusted Rand Index~(ARI)~\cite{gates2017impact} to measure the performance of different algorithms.  We further set $p=q$, since this is one of the hardest regimes for studying the DSBM. In particular, when $p=q$, the edge density plays no role in characterising the structure of clusters, and the edges are \emph{entirely} defined with respect to the edge directions.

We set $n=1000$, and $k=4$.   We set the value of $p$ to be between $0.5$ and $0.8$, and the value of $\eta$ to be between $0.5$ and $0.7$. As shown in Figure~\ref{fig:dsbm_plot}, our proposed \texttt{SimpleHerm} clearly outperforms the \texttt{Herm-RW} and the \texttt{DD-SYM} algorithms.

\begin{figure}[ht]
\centering
    \begin{minipage}{0.24\textwidth}
      \centering
    \includegraphics[width=\textwidth]{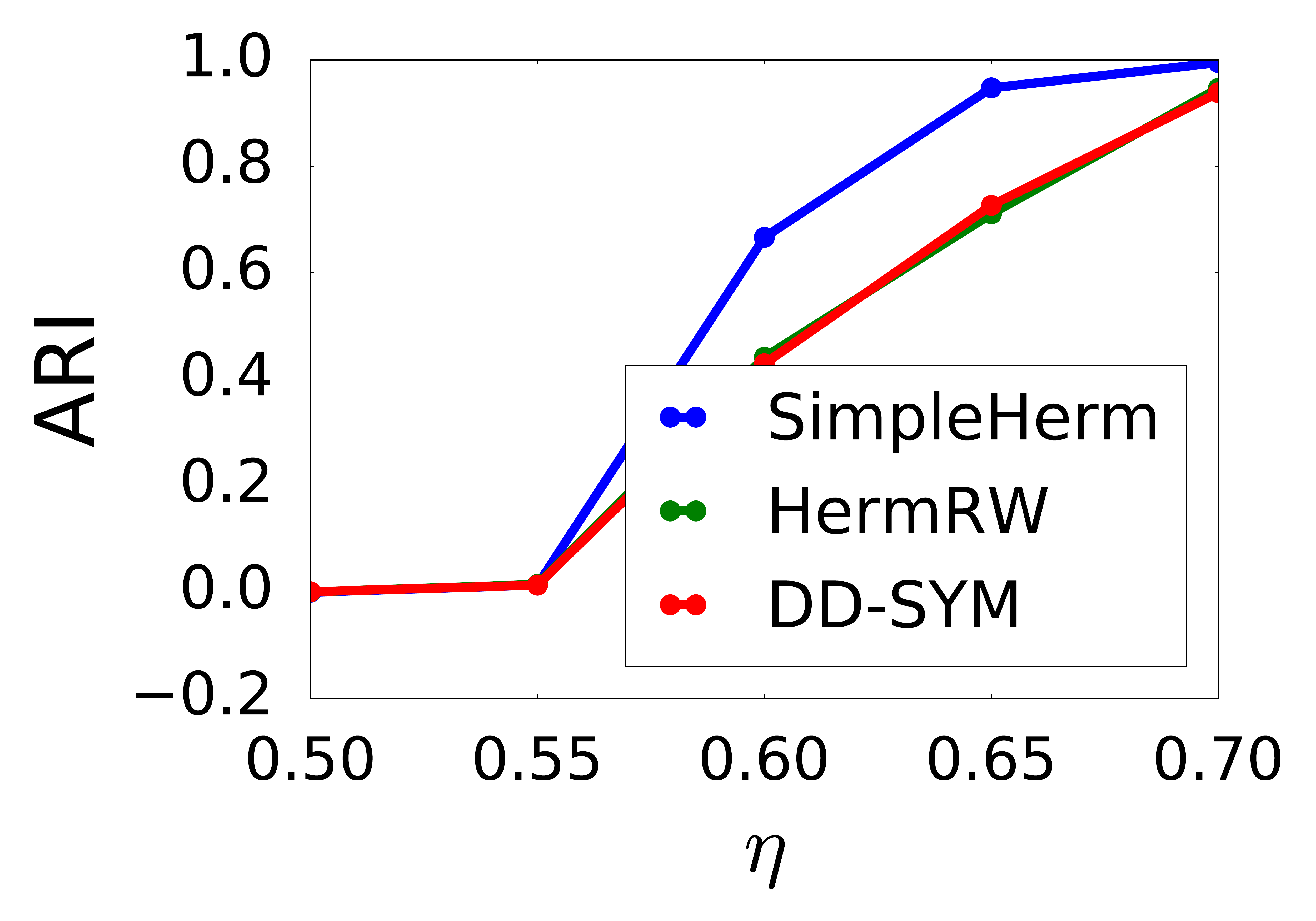}
    \vspace{-0.5cm}
    \caption*{$p=0.5$}
    \end{minipage}%
    \begin{minipage}{0.24\textwidth}
      \centering  
    \includegraphics[width=\textwidth]{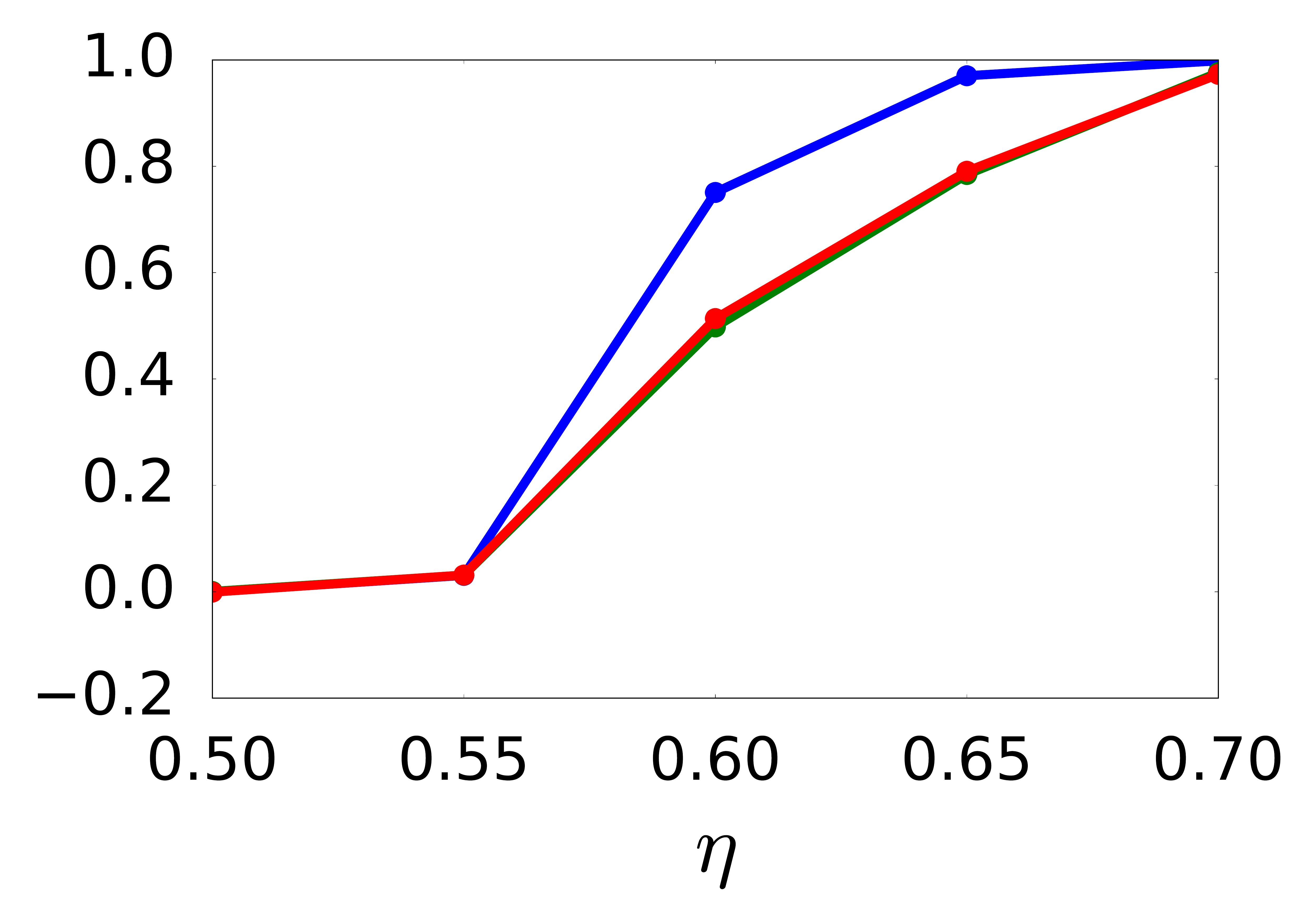}
    \vspace{-0.5cm}
    \caption*{$p=0.6$}
    \end{minipage}
    \begin{minipage}{0.24\textwidth}
      \centering
    \includegraphics[width=\textwidth]{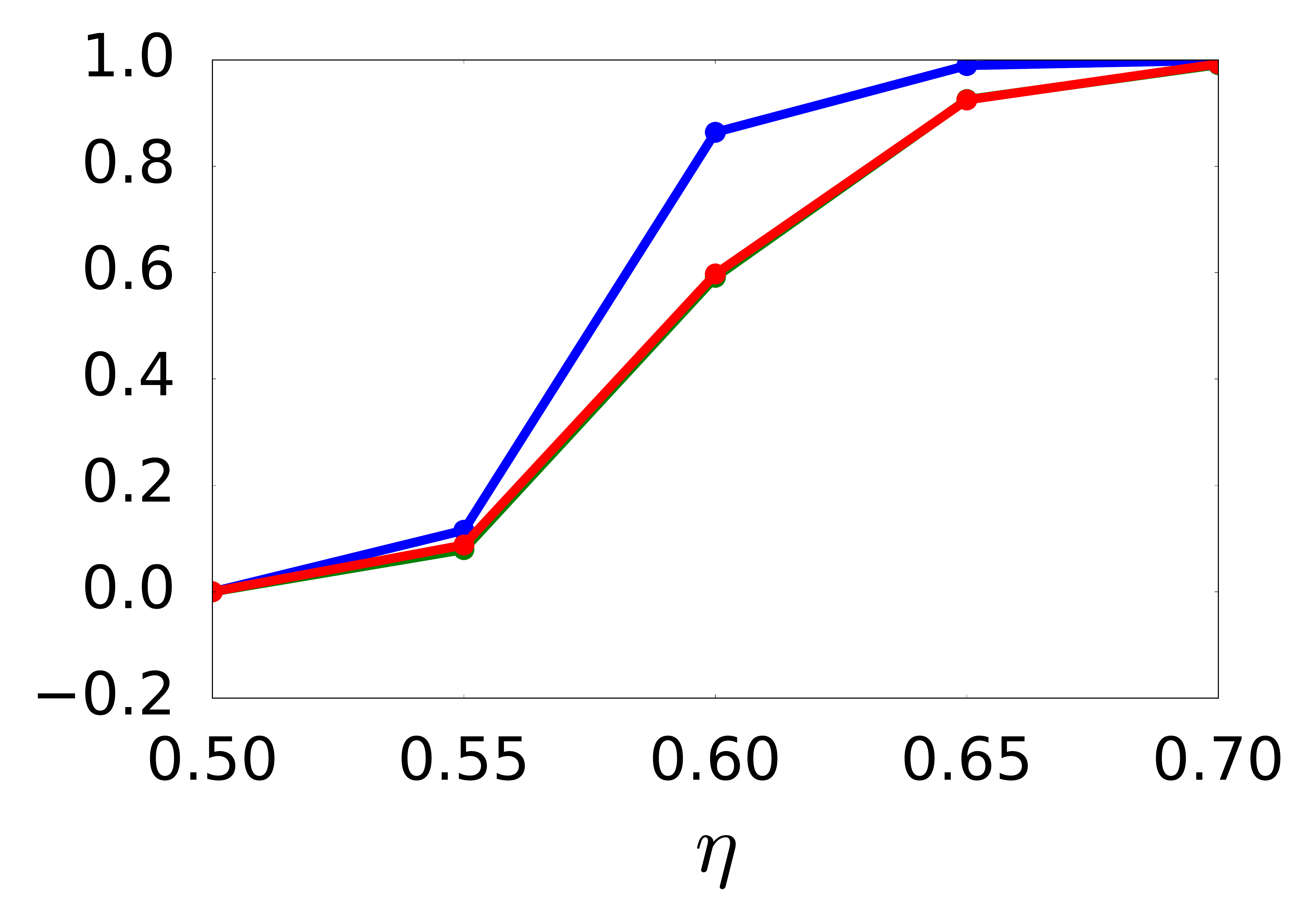}
    \vspace{-0.5cm}
    \caption*{$p=0.7$}
    \end{minipage}%
    \begin{minipage}{0.24\textwidth}
      \centering
    \includegraphics[width=\textwidth]{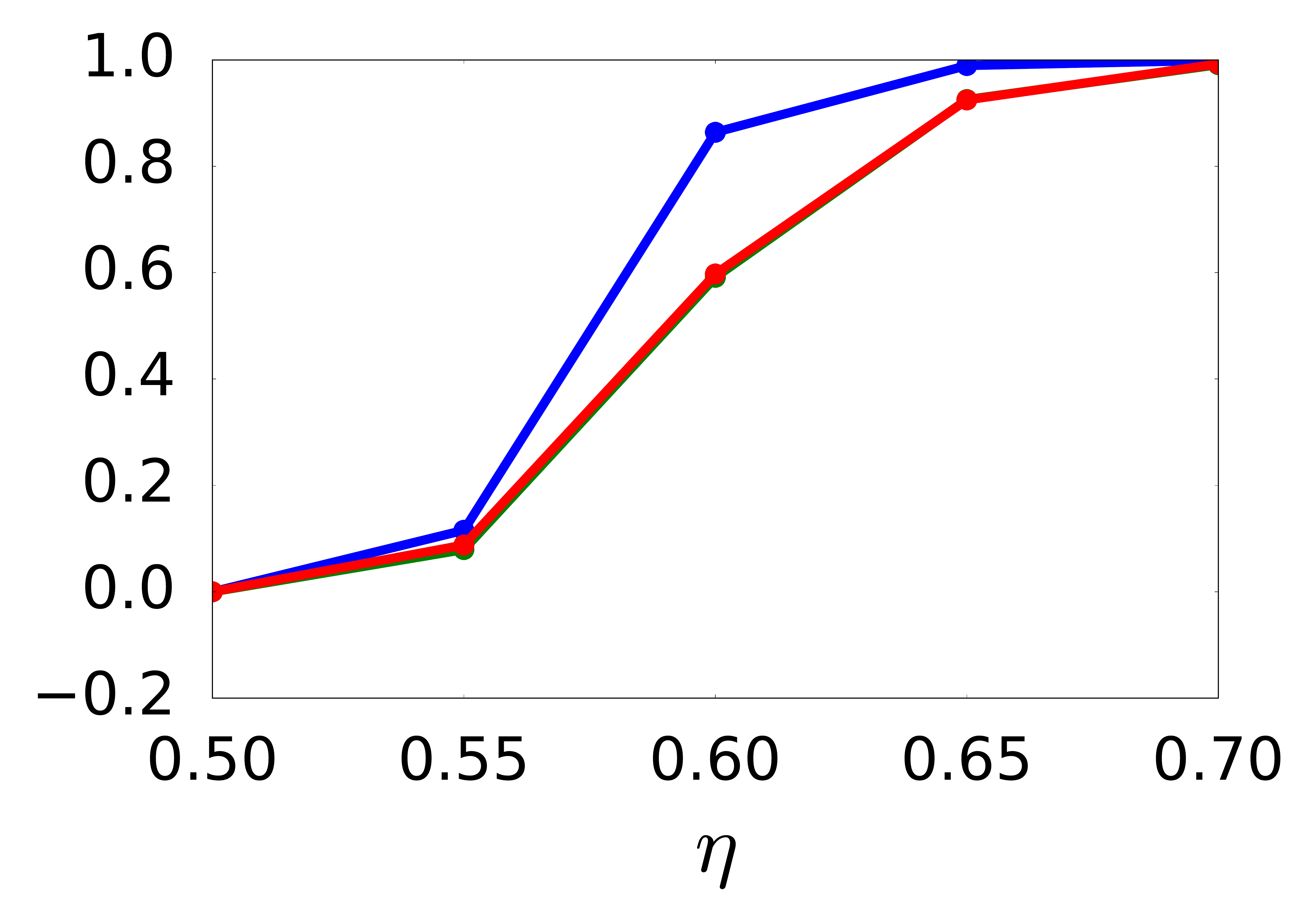}
    \vspace{-0.5cm}
    \caption*{$p=0.8$}
    \end{minipage}%
\caption{\small{$n=1000$ and $k=4$. Average ARIs   over $5$ runs of different algorithms, with respect to different values of $p$ and $\eta$.
}} \label{fig:dsbm_plot}
\end{figure}

Next, we study the case of $n=2000$ and $k=8$, but   the structure of clusters presents a more significant path topology.  Specifically, we assume that   any pair of vertices within each cluster are connected with probability $p\in(0.05,0.1)$; moreover,  all the edges crossing different clusters are along the cuts
$(S_j, S_{j+1})$  for some $0\leq j\leq k-2$. By setting $\eta\in(0.65,1)$, our results are reported in Figure~\ref{fig:dsbm_sparse_plot}.
From these results, it is easy to see that, when the underlying graph presents a clear flow structure, our algorithm performs significantly better than both the \texttt{Herm-RW} and \texttt{DD-SYM} algorithms, for which multiple eigenvectors are needed. 



\begin{figure}[h]
\centering
    \begin{minipage}{0.24\textwidth}
      \centering
    \includegraphics[width=\textwidth]{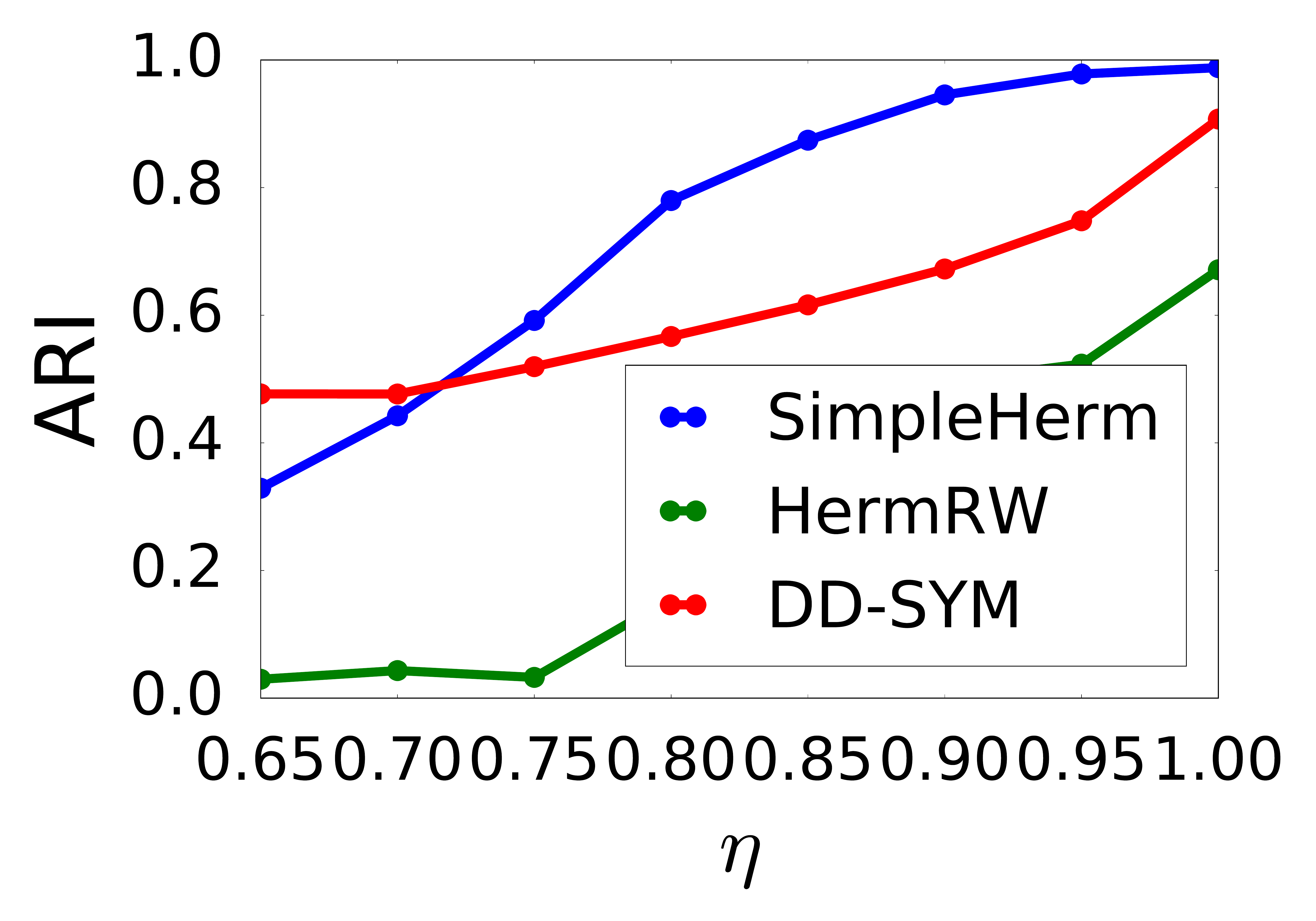}
    \vspace{-0.5cm}
    \caption*{$p=0.05$}
    \end{minipage}%
    \begin{minipage}{0.24\textwidth}
      \centering  
    \includegraphics[width=\textwidth]{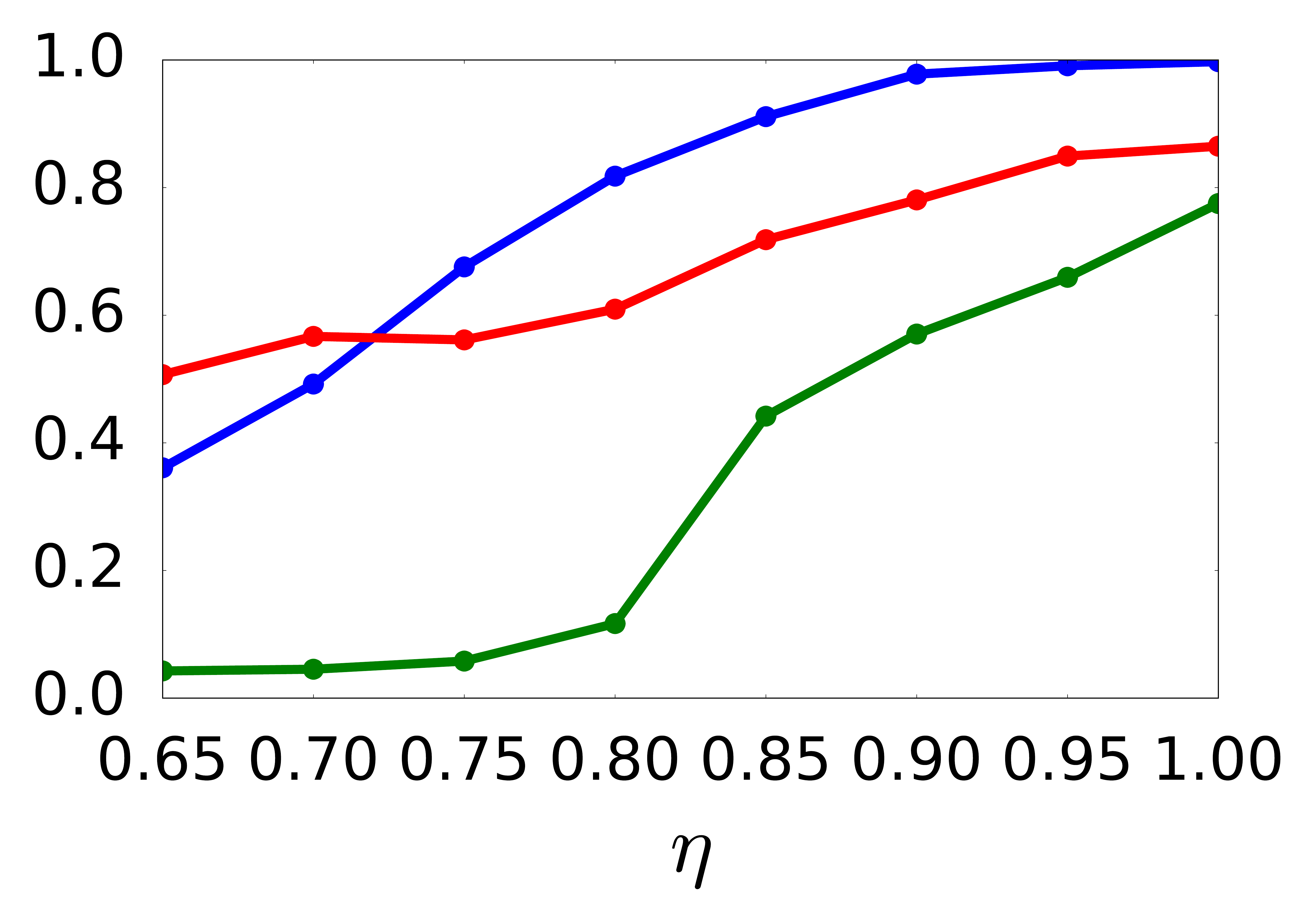}
    \vspace{-0.5cm}
    \caption*{$p=0.06$}
    \end{minipage}
    \begin{minipage}{0.24\textwidth}
      \centering
    \includegraphics[width=\textwidth]{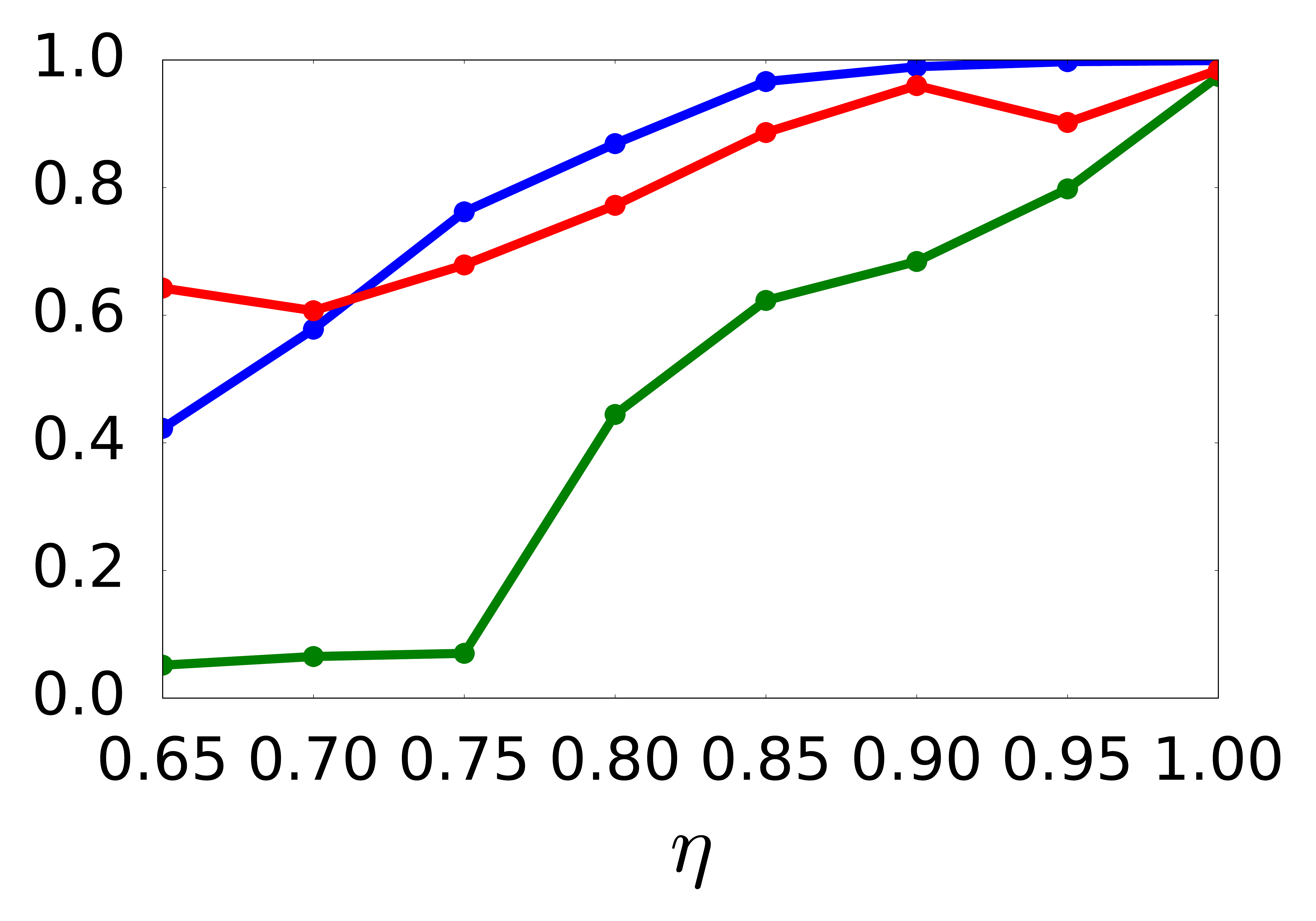}
    \vspace{-0.5cm}
    \caption*{$p=0.075$}
    \end{minipage}%
    \begin{minipage}{0.24\textwidth}
      \centering
    \includegraphics[width=\textwidth]{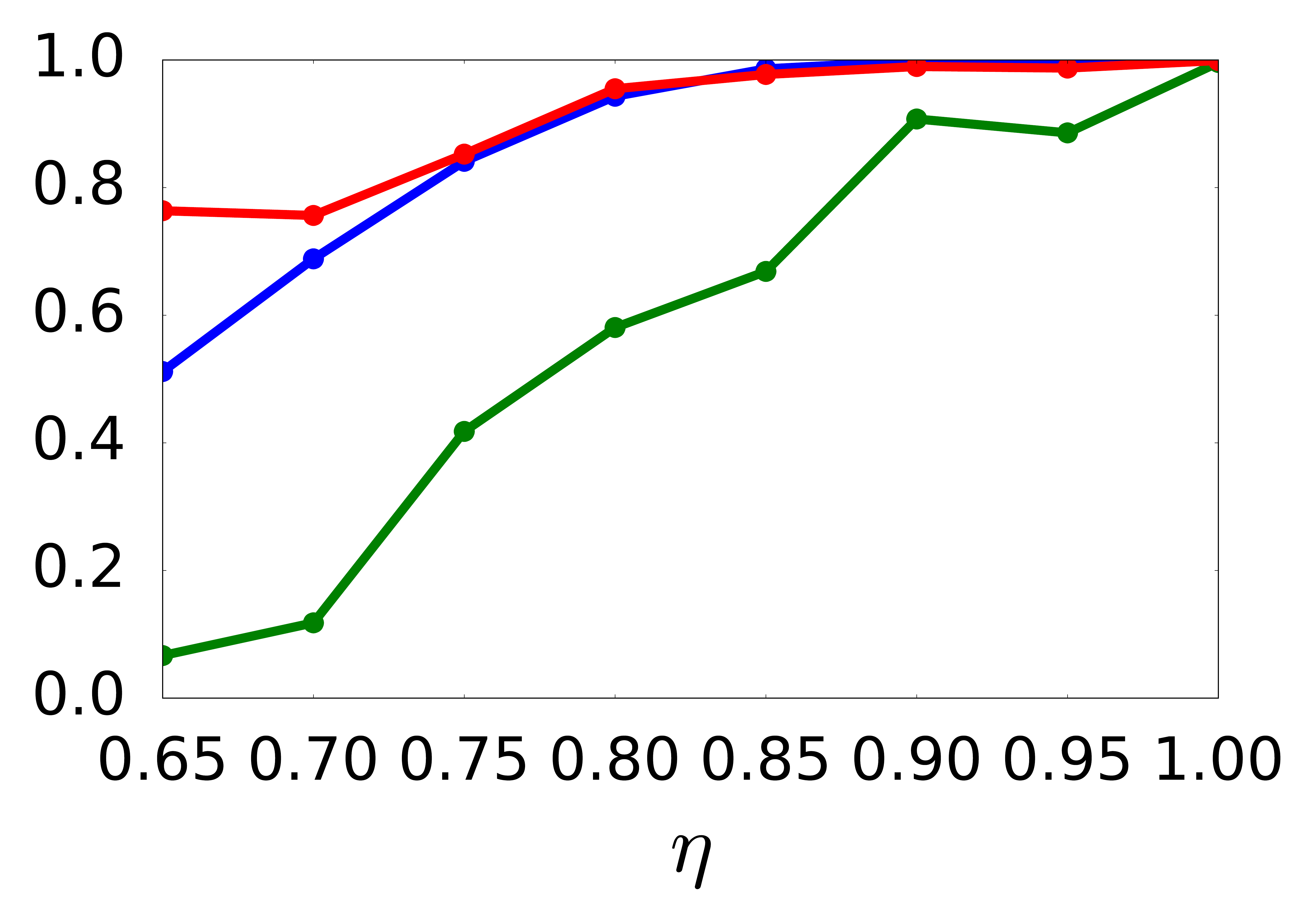}
    \vspace{-0.5cm}
    \caption*{$p=0.1$}
    \end{minipage}%
\caption{\small{$n=2000$ and $k=8$. Average ARIs  over $5$ runs of different algorithms, with respect to different values of $p$ and $\eta$.  }} \label{fig:dsbm_sparse_plot}
\end{figure}

\subsection{Results on
the UN Comtrade Dataset}

We compare our proposed algorithm against the  previous state-of-the-art
on the UN Comtrade Dataset~\cite{screenshot_uncomtrade}. This dataset consists of   the import-export tradeflow data of $97$ specific commodities across $N=246$ countries and territories over the period 1964 -- 2018.  The total size of the data in zipped files is $99.8$GB, where every csv file for a single year contained around 20,000,000 lines.

\paragraph{Pre-processing.} 
As the pre-processing step, for any fixed commodity $c$ and any fixed year,  we construct a directed graph as follows:   the constructed graph has  $N=246$ vertices, which correspond to $246$ countries and territories listed in the dataset. For any two vertices $j$ and $\ell$, there is a directed edge from $j$ to $\ell$ if  the export of commodity $c$ from country $j$ to $\ell$ is greater than the export from $\ell$ to $j$, and the weight of that edge is set to be the absolute value of the difference in trade, i.e., the net trade value between $\ell$ and $j$. Notice that our construction ensures that all the edge weights are non-negative, and there is at most one directed edge between any pair of vertices.  
 
\paragraph{Result on the  International Oil Trade Industry.}
We first study  the international trade   for mineral fuels, oils, and oil distillation product in the dataset.   The primary reason for us to   study the international oil trade   is due to  the fact that crude oil is one of the   highest traded commodities worldwide~\cite{fia_2018}, and plays a significant role in geopolitics~(e.g., 2003 Iraq  War).  Many references in international trade and policy making~(e.g., \cite{behmiri2012crude, cui2015embodied, cui2011design})
allow us to interpret the results of our proposed algorithm.

Following previous studies on the same dataset from  complex networks' perspectives~\cite{du2016spatiotemporal, zhang2018global},  we set $k=4$. Our algorithm's output around the period of 2006--2009 is visualised in Figure~\ref{fig:oil_cluset_change_20062009}. We choose to highlight the results between 2006 and 2009, since 2008 sees the largest post World Ward II oil shock after the economic crisis \cite{kilian2008exogenous}. As discussed earlier, our algorithm's output is naturally associated with an ordering of the clusters that optimises the value of $\Phi$, and this ordering is reflected in our visualisation as well. Notice that such ordering corresponds to the chain of oil trade, and   indicates the clusters of main export countries and import countries for oil trade. 

From Figure~\ref{fig:oil_cluset_change_20062009}, we see that the output of our algorithm from 2006 to 2008 is pretty stable, and this  is in sharp contrast to 
the drastic change between 2008 and 2009, 
caused by the  economic crisis. Moreover, many European countries move across different clusters from 2008 to 2009. The visualisation results of the other algorithms are less significant than ours. 
\begin{figure}[h]
\centering
    \begin{minipage}{0.5\textwidth}
      \centering
    \includegraphics[width=\textwidth]{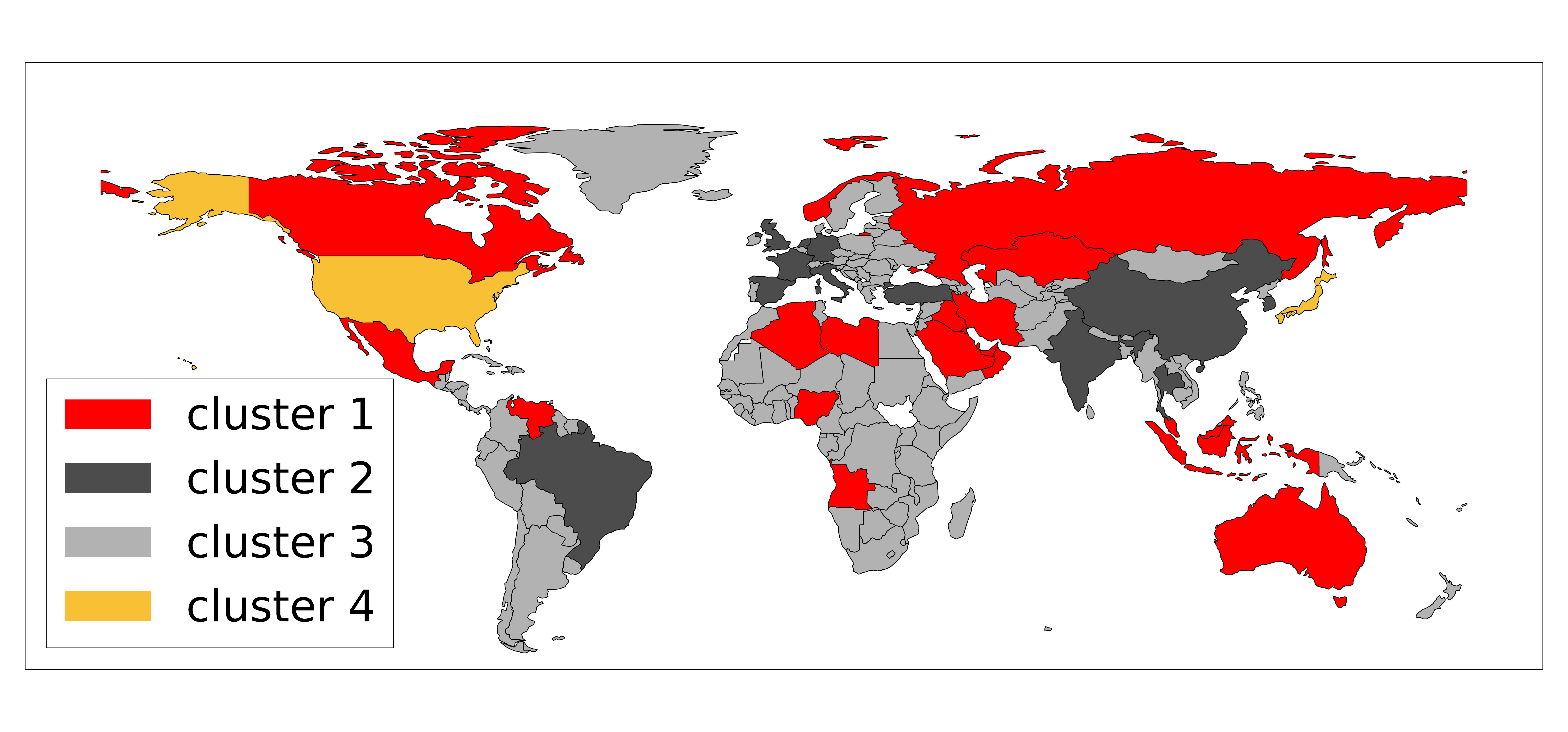}
    \vspace{-0.8cm}
    \caption*{\small{2006}}
    \end{minipage}%
    \begin{minipage}{0.5\textwidth}
      \centering  
    \includegraphics[width=\textwidth]{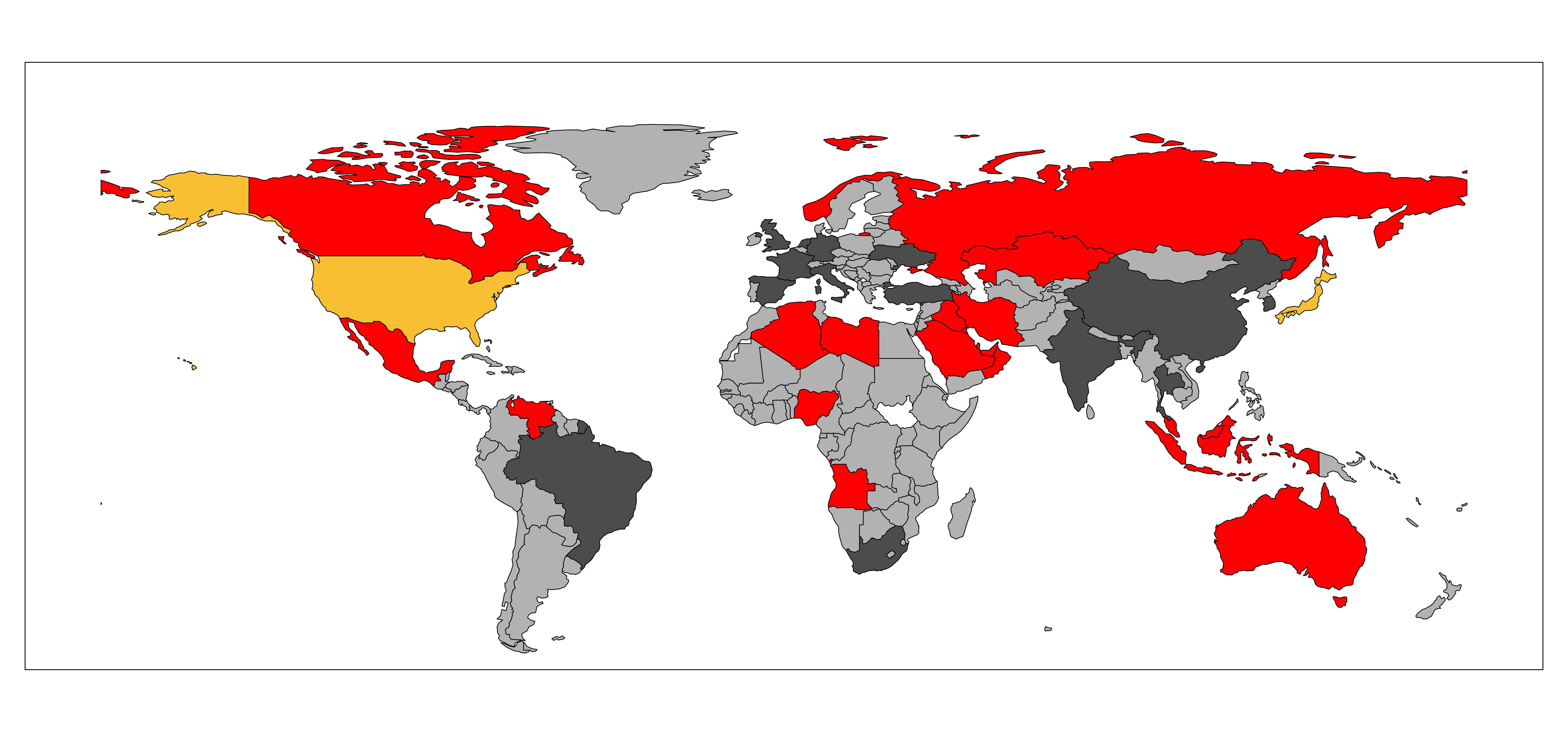}
    \vspace{-0.8cm}
    \caption*{\small{2007}}
    \end{minipage}
    \begin{minipage}{0.5\textwidth}
      \centering
    \includegraphics[width=\textwidth]{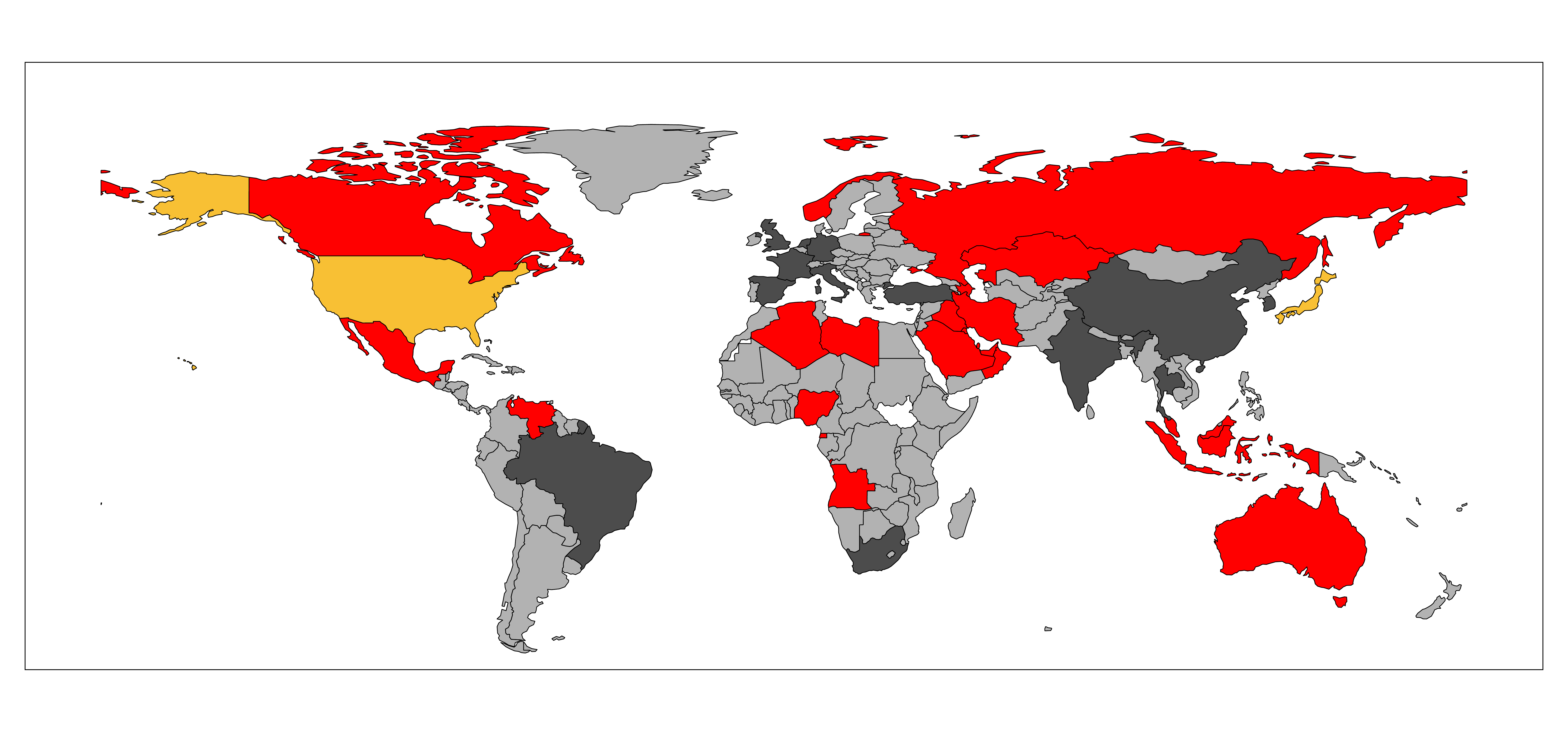}
    \vspace{-0.8cm}
    \caption*{\small{2008}}
    \end{minipage}%
    \begin{minipage}{0.5\textwidth}
      \centering
    \includegraphics[width=\textwidth]{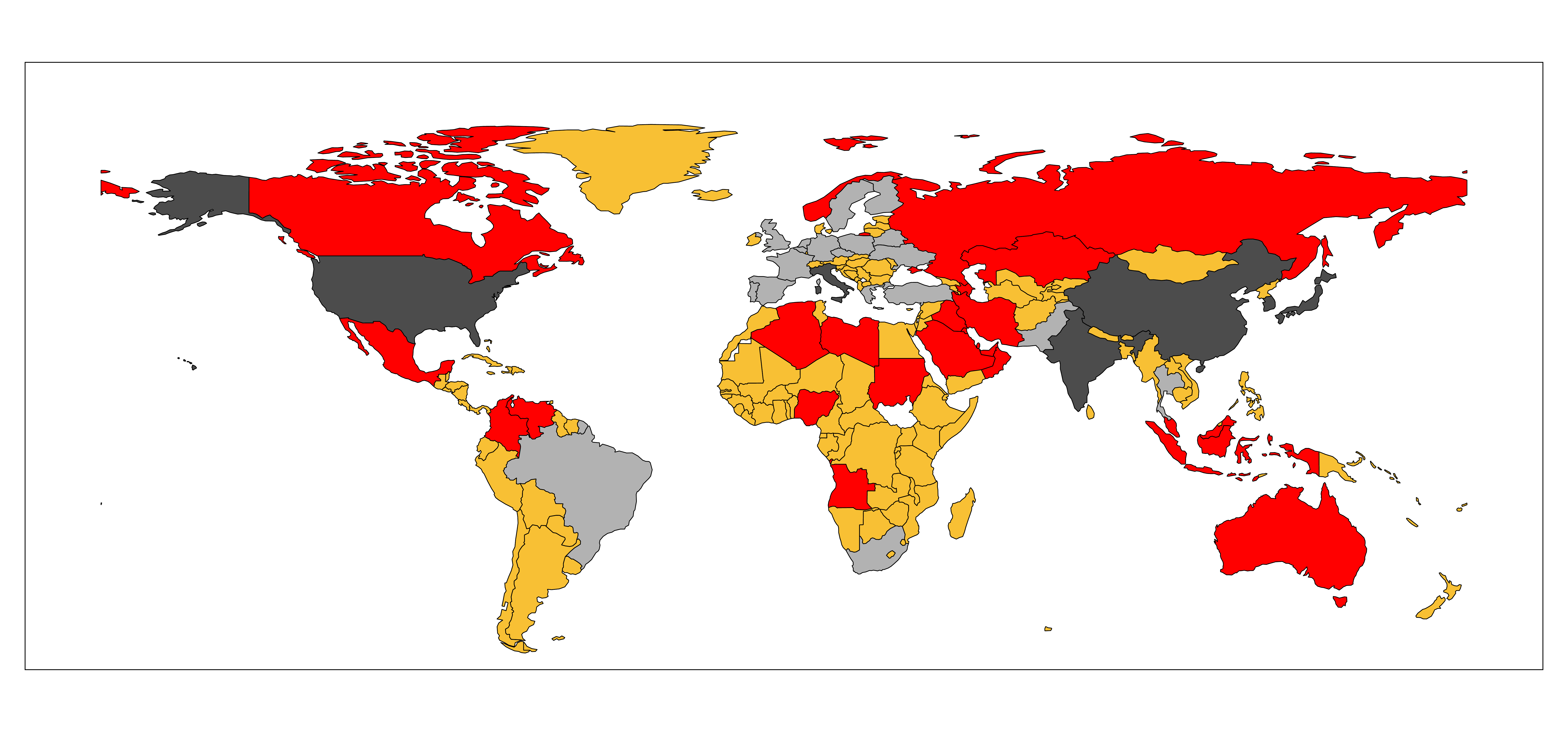}
    \vspace{-0.8cm}
    \caption*{\small{2009}}
    \end{minipage}%
\caption{\small{The clustering result for international trade from 2006 to 2009, where $k=4$. Red countries form start of the trade chain, and yellow countries the end of the trade chain. Countries   coloured white have no data.}} \label{fig:oil_cluset_change_20062009}
\end{figure}

We further show that this  dynamic change of clusters provides a reasonable reflection of international economics. Specifically, we  compute the clustering results of our \texttt{SimpleHerm} algorithm on the same dataset from 2002 to 2017, and compare it with the output of the 
\texttt{DD-SYM} algorithm. For every two consecutive years, we map every cluster to its ``optimal'' correspondence~(i.e., the one that minimises the symmetric difference between the two). 
\begin{figure}
  \begin{center}
    \includegraphics[width=0.7\textwidth]{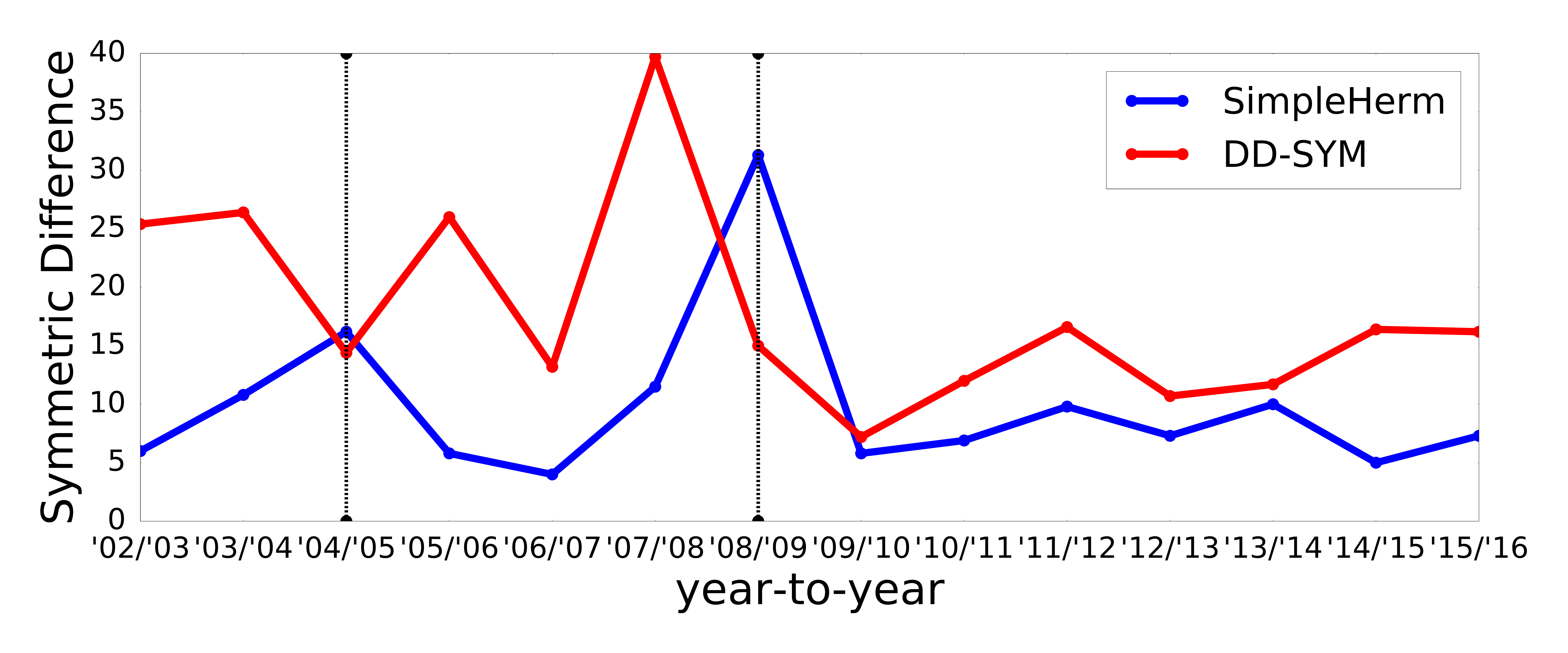}
  \end{center}
  \caption{\small{Comparison of the symmetric difference of the returned clusters between two consecutive years.}\label{fig:symmetric_difference} }
  \end{figure}
We further  compute the total symmetric difference between the clustering results for every  two consecutive years, and our results are visualised in  Figure~\ref{fig:symmetric_difference}.  As shown in   the figure,   our algorithm has notable changes in clustering during 2004/2005 and 2008/2009 respectively. 
The peak around 2004/2005 might be a delayed change as a consequence of the Venezuelan oil strike and the Iraq war of 2003. Both the events led to the decrease in oil barrel production by $5.4$ million barrels per day \cite{hamilton2011historical}. The peak around 2008/2009 is of course due to the economic crisis. These peaks correspond to  the same periods of cluster instability found in the complex network analysis literature~\cite{AN2014254, ZHONG201442}, further signifying our result\footnote{We didn't plot the result between  2016 and 2017, since the symmetric difference for the \texttt{DD-SYM} algorithm is $107$ and the symmetric difference for the  \texttt{SimpleHerm} algorithm is $17$. We believe this is an anomaly for \texttt{DD-SYM}, and plotting this result in the same figure would make it difficult  to compare other years' results.}. Compared to our algorithm, the clustering result of the \texttt{DD-SYM} algorithm is less stable over time.

\paragraph{Result on the International Wood Trade.} 
We also study the international wood trade network~(IWTN). This network looks at the trade of wood and articles of wood. Although the IWTN is less studied than  the
International Oil Trade Industry  in the literature, it is 
nonetheless the reflection of an important and traditional industry and deserves  detailed analysis. Wood trade is dependent on a number of factors, such as the amount of forest a country has left, whether countries are trying to reintroduce forests, and whether countries are deforesting a lot for agriculture (e.g., Amazon rainforest in Brazil)~\cite{kastner2011international}. 

\begin{figure}[h]
\centering
    \begin{minipage}{0.5\textwidth}
      \centering
    \includegraphics[width=\textwidth]{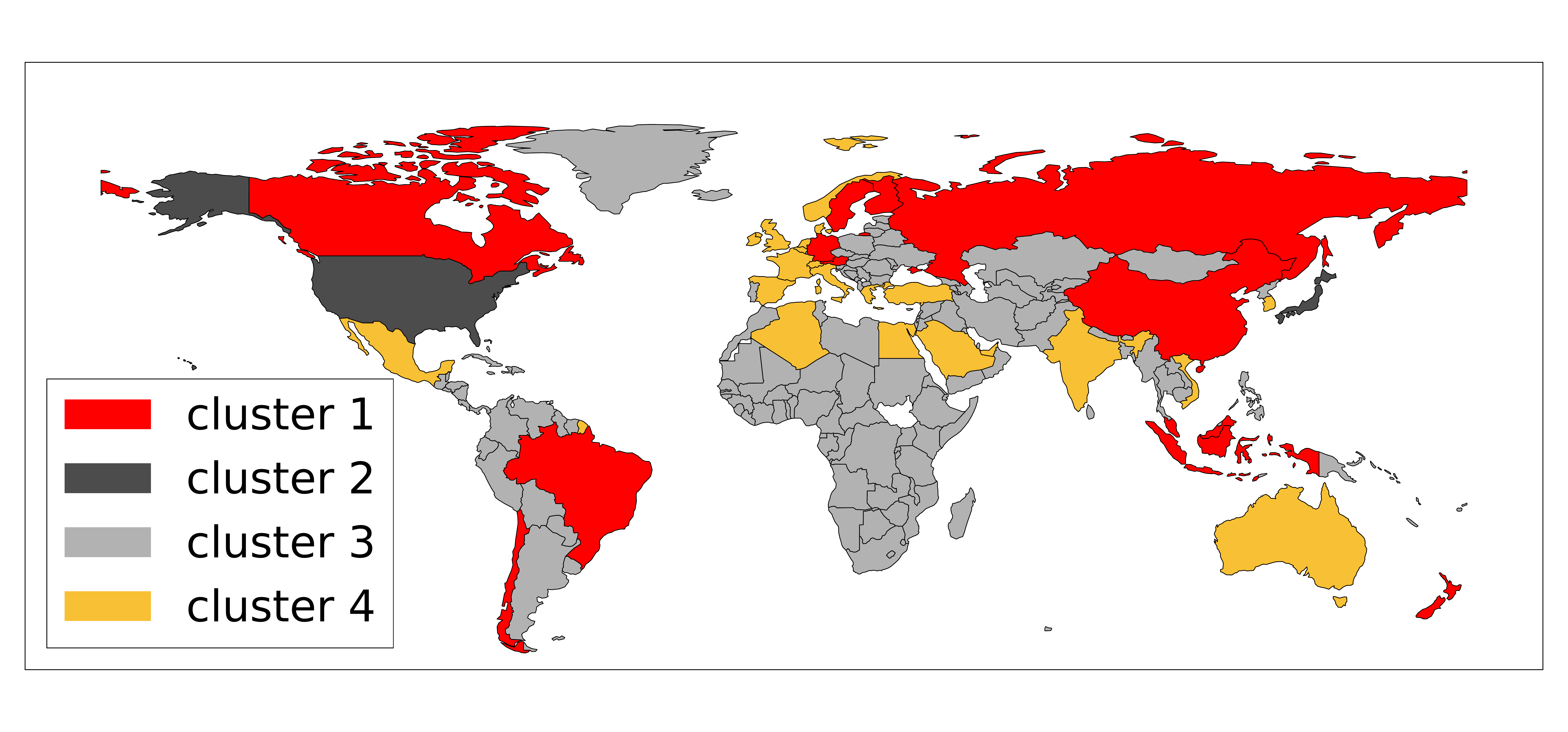}
    \vspace{-0.8cm}
    \caption*{2006}
    \end{minipage}%
    \begin{minipage}{0.5\textwidth}
      \centering  
    \includegraphics[width=\textwidth]{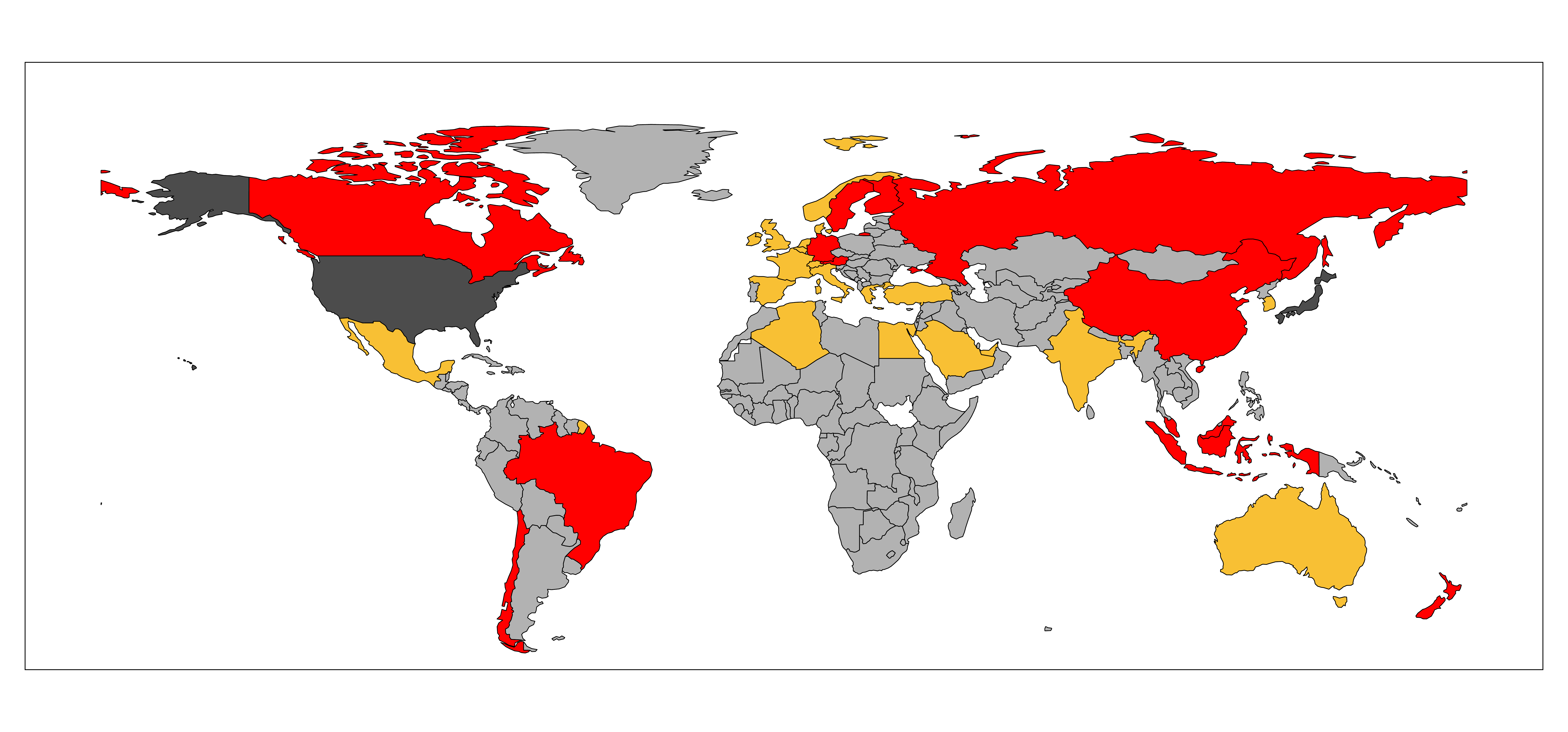}
    \vspace{-0.8cm}
    \caption*{2007}
    \end{minipage}
    \begin{minipage}{0.5\textwidth}
      \centering
    \includegraphics[width=\textwidth]{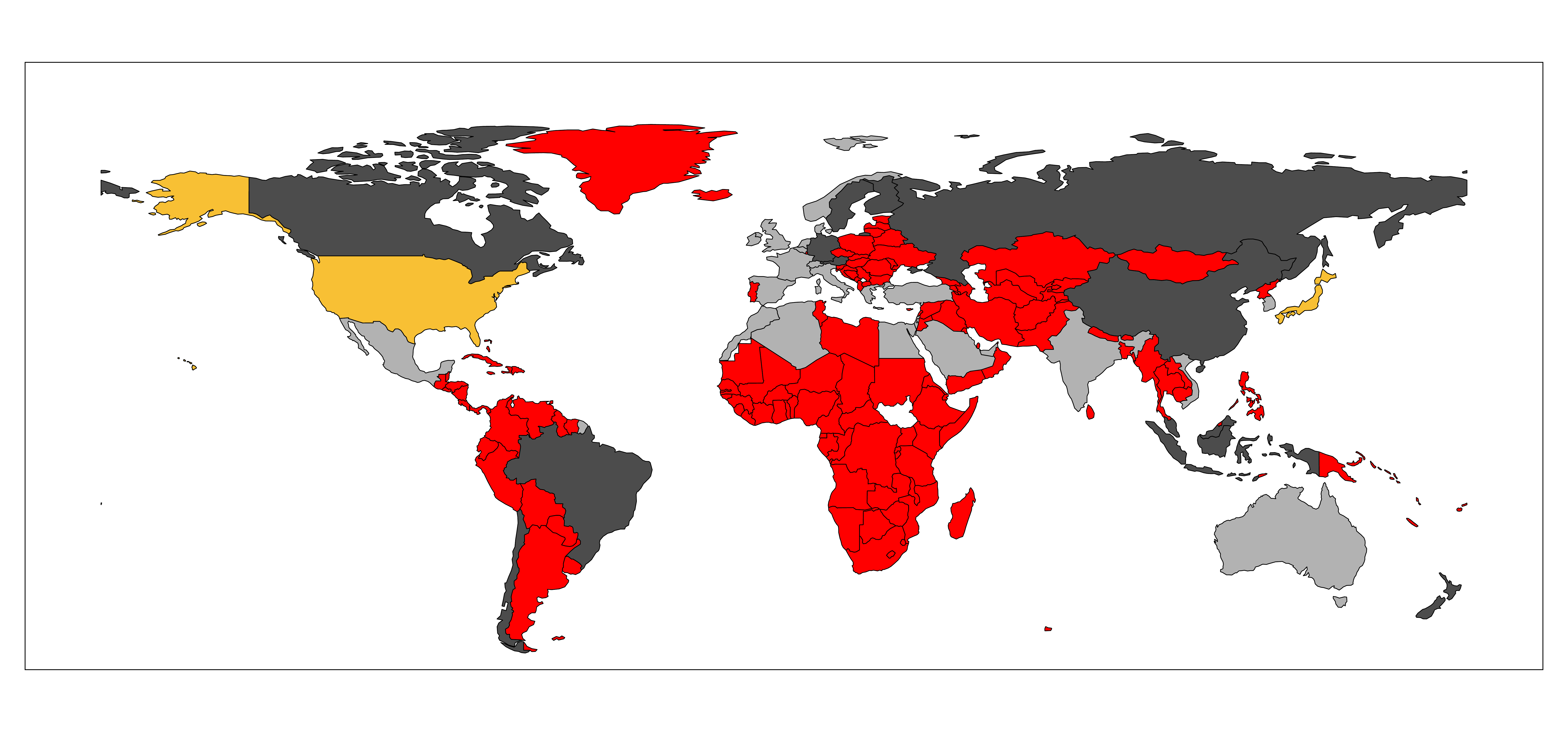}
    \vspace{-0.8cm}
    \caption*{2008}
    \end{minipage}%
    \begin{minipage}{0.5\textwidth}
      \centering
    \includegraphics[width=\textwidth]{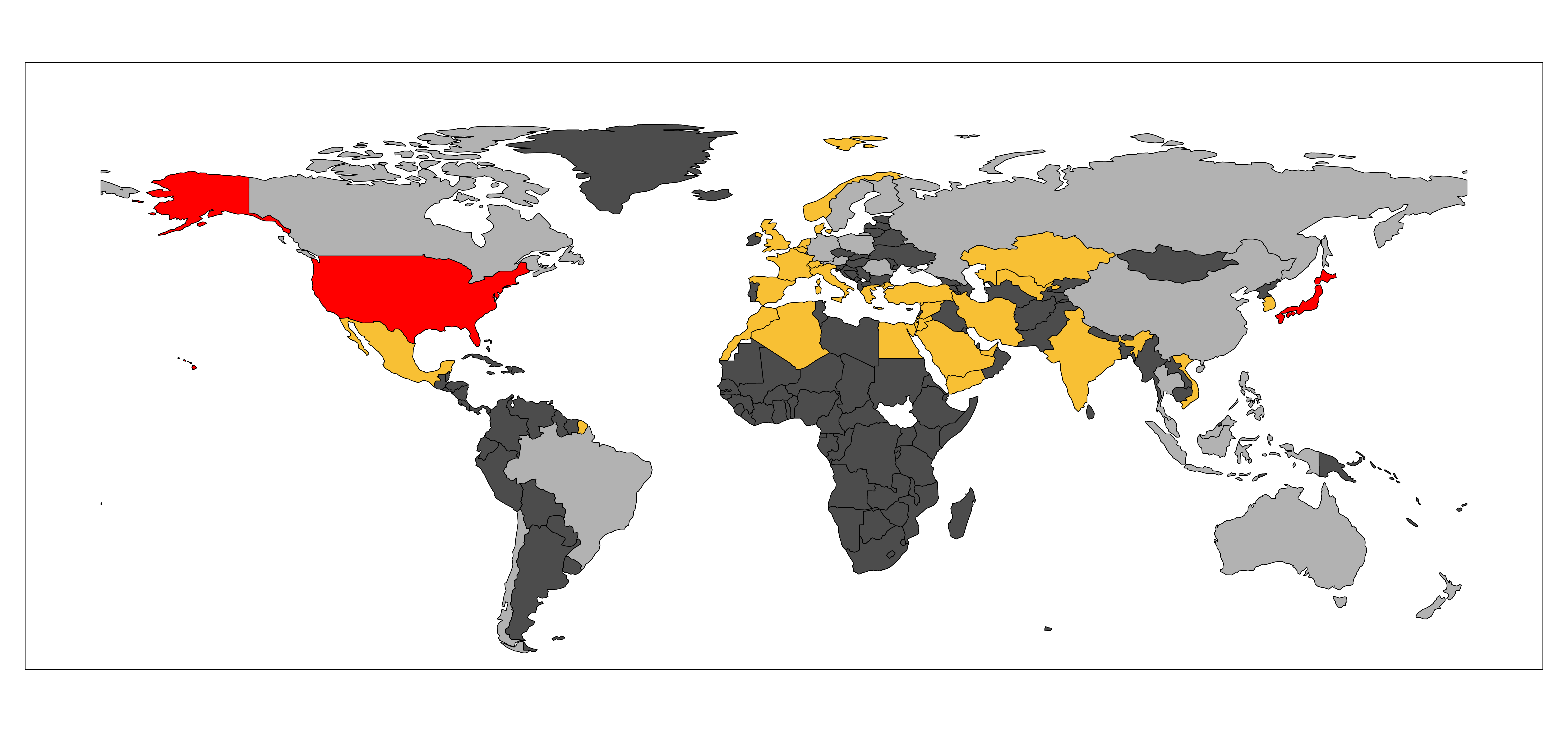}
    \vspace{-0.8cm}
    \caption*{2009}
    \end{minipage}%
\caption{\small{Change in clustering of \texttt{SimpleHerm} of the IWTN from 2006 to 2009 with $k=4$. Clusters are labelled according to their position in the ordering that maximises the flow ratio between the $4$ clusters. Red countries form start of the trade chain, and yellow countries the end of the trade chain. Countries  coloured in  white have no data.}} \label{fig:wood_cluset_change_20062009}
\end{figure}

Figure~\ref{fig:wood_cluset_change_20062009} visualises the clusters from 2006 to 2009. As we can see, the structure of clusters are stable in early years, and the first cluster contains countries with large forests such as Canada, Brazil, Russia, Germany, and China. However, there is a significant change of the cluster structure from 2008 to 2009, and countries in Eastern Europe, the Middle East and Central Asia move across different clusters.


\subsection{Result on  the Data Science for COVID-19 Dataset}

The Data Science for COVID-19 Dataset (DS4C)~\cite{DS4C} contains information about 3519 South Korean citizens  infected with COVID-19. Here, digraphs are  essential to represent how the virus is transmitted among the individuals, and the clusters with high ratio of out-going edges represent the communities worst hit by the virus. We first identify the largest connected component of   the infection graph,   which consists of $67$ vertices and $66$ edges, and run our algorithm on the largest connected component. By setting   $k=4$,
our algorithm   manages to identify a super-spreader as a single cluster, and the path of infection between groups of people along which most infections lie. 

\section{Broader Impact}
The primary focus of our work is  efficient clustering algorithms for digraphs, whose clusters are defined with respect to  the edge directions between different clusters. We believe that our work could have   long-term social impact. For instance,
when modelling the transmission of  COVID-19 among individuals through a digraph, the cluster~(group of people) with the highest ratio of out-going edges represents the most infectious community. This type of information could aid
local containment policy. With the   development of many tracing Apps for COVID-19 and 
a significant amount of infection data available in the near future, our studied algorithm could potentially be applied in this context. In addition, 
as  shown by our   experimental results on    the  UN Comtrade Dataset, our work could be employed to analyse many practical  data for which  most traditional  clustering algorithms do not suffice.

\begin{ack}
Part of this work was done when Steinar Laenen studied at the University of Edinburgh as a Master student. 
He Sun is supported by an EPSRC Early Career Fellowship (EP/T00729X/1).
\end{ack}

{\small
\bibliographystyle{abbrv}

}

\appendix

\section{Omitted details from Section~\ref{sec:encode_flow}
 }\label{sec:appendix_flowstructure}

In this section we present all the technical detailed omitted from Section~\ref{sec:encode_flow}.

\begin{proof}[Proof of Lemma~\ref{lem:boundthetak}]
We prove the statement by analysing the Reyleigh quotient of $\mathcal{L}_G$ with respect to $y$, which is defined by 
$
\frac{y^*\mathcal{L}_G y}{y^* y}.
$
Since $\| y\|=1$, it suffices to analyse $y^* \mathcal{L}_G y$. By definition, we have that 
\begin{align}
y^*\mathcal{L}_G y
& = \frac{1}{k}\left( \sum_{j=0}^{k-1} \widehat{\chi_j} \right)^* \mathcal{L}_G \left( \sum_{j=0}^{k-1} \widehat{\chi_j} \right)  \nonumber\\ 
& =  \frac{1}{k} \left( \sum_{j=0}^{k-1} \frac{D^{1/2} \chi_j}{\|D^{1/2} \chi_j \|}
\right)^* D^{-1/2}  (D-A)D^{-1/2}\left( \sum_{j=0}^{k-1} \frac{D^{1/2} \chi_j}{\|D^{1/2} \chi_j \|} 
\right) \nonumber \\
& =  \frac{1}{k} \left( \sum_{j=0}^{k-1} \frac{ \chi_j}{\|D^{1/2} \chi_j \|}
\right)^* \left( D - A\right)  \left( \sum_{j=0}^{k-1} \frac{  \chi_j}{\|D^{1/2} \chi_j \|} 
\right).  \label{eq:qf}
\end{align}
To   analyse  \eq{qf}, first of all it is easy to see that  
\begin{align}
 \frac{1}{k} \left( \sum_{j=0}^{k-1} \frac{ \chi_j}{\|D^{1/2} \chi_j \|}
\right)^*   D  \left( \sum_{j=0}^{k-1} \frac{  \chi_j}{\|D^{1/2} \chi_j \|} 
\right) = &  \frac{1}{k} \left( \sum_{j=0}^{k-1} \frac{ \chi_j}{\|D^{1/2} \chi_j \|}
\right)^*   D^{1/2} D^{1/2}  \left( \sum_{j=0}^{k-1} \frac{  \chi_j}{\|D^{1/2} \chi_j \|} 
\right)\notag\\
= &  \frac{1}{k} \left( \sum_{j=0}^{k-1} \frac{ D^{1/2}\chi_j}{\|D^{1/2} \chi_j \|}
\right)^*     \left( \sum_{j=0}^{k-1} \frac{  D^{1/2} \chi_j}{\|D^{1/2} \chi_j \|} 
\right)\notag\\
= &  \frac{1}{k}  \sum_{j=0}^{k-1} \frac{\chi_j^*D\chi_j}{\|D^{1/2} \chi_j \|^2}\notag\\
= & 1, \label{eq:sum1}
\end{align}
where the third equality follows by the fact that $\chi_j^*\chi_{\ell}=0$ for any $0\leq j\neq \ell\leq k-1$.  On the other hand,  by definition we have that 
\begin{align}
  \lefteqn{ \frac{1}{k} \left( \sum_{j=0}^{k-1} \frac{ \chi_j}{\|D^{1/2} \chi_j \|}
\right)^*     A  \left( \sum_{j=0}^{k-1} \frac{  \chi_j}{\|D^{1/2} \chi_j \|}
\right) } \notag \\
& = \frac{1}{k} \left( \sum_{j=0}^{k-1} \frac{ \chi_j}{\sqrt{\vol\left(S_j\right)}}
\right)^*     A  \left( \sum_{j=0}^{k-1} \frac{  \chi_j}{\sqrt{\vol\left(S_j\right)}}
\right) \notag\\
& = \frac{1}{k}\cdot \sum_{j=0}^{k-1} \sum_{\ell=0}^{k-1}  \sum_{ \substack{ u\leadsto v \\ u\in S_j, v\in S_{\ell} }} \left( \frac{\overline{\chi_j}(u)}{\sqrt{\vol\left(S_j\right)}}\cdot A_{u,v}\cdot \frac{\chi_{\ell}(v)}{ \sqrt{\vol\left( S_{\ell}\right)}} + \frac{\overline{\chi_{\ell}}(v)}{\sqrt{\vol\left(S_{\ell}\right)}}\cdot A_{v,u}\cdot \frac{\chi_{j}(u)}{ \sqrt{\vol\left( S_{j}\right)}} \right)  \notag \\
& = \frac{1}{k}\cdot\sum_{j=0}^{k-1} \sum_{\ell=0}^{k-1} \sum_{ \substack{ u\leadsto v \\ u\in S_j, v\in S_{\ell} }} \frac{w(u,v)}{\sqrt{\vol\left(S_{j}\right)}\cdot \sqrt{\vol\left(S_{\ell}\right)} }\cdot 2\cdot \mathsf{Re}\left( \left(\unit\right)^{\ell+1-j} \right) \notag\\
& = \frac{1}{k}\cdot \sum_{j=0}^{k-1}\sum_{\ell=0}^{k-1}\frac{w\left(S_j, S_{\ell} \right)}{\sqrt{\vol\left(S_{j}\right)}\cdot \sqrt{\vol\left(S_{\ell}\right)} }\cdot 2\cdot \cos\left(  \frac{ 2\pi\cdot (\ell+1-j)}{\lceil 2\pi\cdot k\rceil} \right) \label{eq:sep},
\end{align}
where   $\mathsf{Re}(\cdot)$ stands for the real part of a complex number.   Combining \eq{qf}, \eq{sum1} with \eq{sep}, we have that 
\begin{align*}
\lefteqn{ y^*\mathcal{L}_G y }\\
& = 1 - \frac{1}{k}\cdot \sum_{j=0}^{k-1}\sum_{\ell=0}^{k-1}\frac{w\left(S_j, S_{\ell} \right)}{\sqrt{\vol\left(S_{j}\right)}\cdot \sqrt{\vol\left(S_{\ell}\right)} }\cdot 2\cdot \cos\left(  \frac{ 2\pi\cdot (\ell+1-j)}{\lceil 2\pi\cdot k\rceil} \right)\\
& \leq  1 - \frac{1}{k} \cdot \sum_{j=0}^{k-1}\sum_{\ell=0}^{k-1} \frac{w\left(S_j, S_{\ell} \right)}{\sqrt{\vol(S_j)} \sqrt{\vol(S_{\ell})} }\cdot \left( 2-  \left(  \frac{ 2\pi\cdot (\ell+1-j)}{\lceil 2\pi\cdot k\rceil} \right)^2\right)  \\  
& \leq 1 - \frac{1}{k}\cdot \sum_{j=0}^{k-1}\sum_{\ell=0}^{k-1} \frac{2\cdot w\left(S_j, S_{\ell} \right)}{\sqrt{\vol(S_j)} \sqrt{\vol(S_{\ell})}  } +\frac{1}{k}\cdot \sum_{j=0}^{k-1}\sum_{\ell=0}^{k-1}  \frac{ w\left(S_j, S_{\ell} \right)}{\sqrt{\vol(S_j)} \sqrt{\vol(S_{\ell})}  } \left(  \frac{    \ell+1-j }{    k } \right)^2 \\
& \leq 1 - \frac{1}{k}\cdot \sum_{j=0}^{k-1}\sum_{\ell=0}^{k-1} \frac{2\cdot w\left(S_j, S_{\ell} \right)}{\sqrt{\vol(S_j)} \sqrt{\vol(S_{\ell})}  } +\frac{1}{k}\cdot \sum_{j=0}^{k-1}\sum_{\substack{ 0\leq \ell\leq k-1 \\ \ell\neq j-1 } }  \frac{ 2\cdot w\left(S_j, S_{\ell} \right)}{\sqrt{\vol(S_j)} \sqrt{\vol(S_{\ell})}  } \left(  \frac{    \ell+1-j }{    k } \right)^2
\\
& = 1 -  \frac{1}{k}\cdot \sum_{j=0}^{k-1}\sum_{\substack{ 0\leq \ell\leq k-1 \\ \ell\neq j-1 } }\frac{2 \cdot w(S_j, S_{\ell})}{\sqrt{\vol(S_j)} \sqrt{\vol(S_{\ell})} } \left(1 - \left(  \frac{    \ell+1-j }{    k } \right)^2\right) - \frac{1}{k}\cdot \sum_{j=1}^{k-1}\frac{2 \cdot w(S_j, S_{j-1})}{\sqrt{\vol(S_j)} \sqrt{\vol(S_{j-1})} } \\
& \leq 1 - \frac{1}{k}\cdot \sum_{j=1}^{k-1}\frac{2 \cdot w(S_j, S_{j-1})}{\sqrt{\vol(S_j)} \sqrt{\vol(S_{j-1})} } \\
& = 1 - \frac{2}{k}\cdot \sum_{j=1}^{k-1}\frac{w(S_j, S_{j-1})}{\sqrt{\vol(S_j)} \sqrt{\vol(S_{j-1})} } \\
& \leq 1- \frac{4}{k}\cdot \sum_{j=1}^{k-1} \frac{w(S_j, S_{j-1})}{\vol(S_j) + \vol(S_{j-1})}\\
& = 1 -  \frac{4}{k}\cdot\theta_k(G),
\end{align*} 
where the first inequality follows by the fact that $\cos x \geq 1- x^2/2$ and the last inequality follows by the inequality $2ab\leq a^2 + b^2$ for any $a,b\in\mathbb{R}$. Therefore,  
we have  that 
 \[
 \frac{y^* \mathcal{L}_G y}{ y^* y}
 \leq  1- \frac{4}{k}\cdot\theta_k(G).
 \]
By the Rayleigh characterisation of eigenvalues we know that 
\begin{equation*}
\lambda_1(\mathcal{L}_{G}) = \min_{x \in \mathbb{C}^{n} \setminus \{0\}}  \frac{x^{*}\mathcal{L}_{G}x}{x^{*}x}  \leq  1- \frac{4}{k}\cdot\theta_k(G), 
\end{equation*} 
which proves the first statement of the lemma.
 
Now we prove the second statement. 
Let $G$ be a digraph, and $S_0,\ldots, S_{k-1}$ be the $k$ clusters maximising $\Phi_G(S_0,\ldots, S_{k-1})$, i.e., $\Phi_G(S_0,\ldots, S_{k-1})=\theta_k(G)$. Since adding edges that are not along the path  only decreases the value of $\Phi_G$, we assume without loss of generality that all the edges are along the path.  For the base case of $k=2$, we have that 
\[
\Phi_G(S_0, S_1) = \frac{w(S_0, S_1)}{ \vol(S_0) + \vol (S_1)} = \frac{1}{2} =\frac{k}{4}. 
\]
Next, we will prove that   $\theta_k(G)< k/4$ for any $k\geq 3$.
 We set   $y_j\triangleq w(S_j, S_{j-1})$ for any $1\leq j\leq k-1$, and   have that
\begin{align*}
\Phi_G(S_0,\ldots, S_{k-1}) &  = \sum_{j=1}^{k-1} \frac{w(S_j, S_{j-1})}{ \vol(S_j) + \vol(S_{j-1}) } \\
&=\frac{y_1}{2y_1 + y_2} +  \sum_{j=2}^{k-2} \frac{y_j}{y_{j-1} + 2y_j + y_{j+1}}  + \frac{y_{k-1}}{ y_{k-2} + 2y_{k-1}}. 
\end{align*}
By introducing $y_0=0$ and assuming that all the indices of $\{y_j\}_j$ are modulo b $k$, we can write $\Phi_G(S_0,\ldots, S_{k-1})$ as 
\[
\Phi_G(S_0,\ldots, S_{k-1}) = \sum_{j=0}^{k-1}\frac{y_j}{y_{j-1} + 2y_j + y_{j+1}}.
\]
Next we compute $\partial \Phi_G / \partial y_j$, and have that
\begin{align*}
     \frac{\partial \Phi_G}{\partial y_j} & = \frac{\partial \Phi_G}{\partial y_j} \sum_{j=0}^{k-1}\frac{y_j}{y_{j-1} + 2y_j + y_{j+1}} \\
      & = \frac{\partial \Phi_G}{\partial y_j} \Bigg ( \frac{y_{j-1}}{y_{j-2} + 2y_{j-1} + y_{j}} + \frac{y_j}{y_{j-1} + 2y_j+ y_{j+1}} +   \frac{y_{j+1}}{y_{j} + 2y_{j+1} + y_{j+2}} \Bigg) \\
      & = - \frac{y_{j-1}}{\left(y_{j-2} + 2y_{j-1} + y_{j}\right)^2} + \frac{y_{j-1} + y_{j+1}}{\left(y_{j-1} + 2y_j+ y_{j+1}\right)^2}   - \frac{y_{j+1}}{\left(y_{j} + 2y_{j+1} + y_{j+2}\right)^2}.
\end{align*}
Notice that, when all the $y_j (0\leq j\leq k-1)$ equal to the same non-zero value, it holds that $\partial \Phi_G / \partial y_j=0$ for any $j$, and $\theta_G(S_0,\ldots, S_{k-1}) = k/4$.
Moreover, it's easy to verify that $k/4$
is an upper bound of  $\theta_G$.  
Since we effectively assume that $y_0=0$, which cannot be always equal to all of the $y_1,\ldots, y_{k-1}$, we have that $\theta_G(S_0,\ldots, S_{k-1})<k/4$.
 \end{proof}

\begin{proof}[Proof of Theorem~\ref{thm:structure_theorem}]  
 We first prove the first statement. 
We  write $y$ as a linear combination of the eigenvectors of $\mathcal{L}_G$ by
\begin{equation*}
y = \alpha_1 f_1 + \cdots + \alpha_n f_n
\end{equation*}
for some  $\alpha_i \in \mathbb{C}$ and $f_i \in \mathbb{C}^{n}$, and define $\tilde{f_1}$ by
$
    \tilde{f_1} \triangleq \alpha_1 f_1.
$
By the definition of the Rayleigh quotient for Hermitian matrices we have that
\begin{align*}
    \frac{y^* \mathcal{L}_G y}{\| y\|} &= (\alpha_{1} f_{1} + \cdots + \alpha_{n} f_{n})^{*} \mathcal{L}_{G} (\alpha_{1} f_{1} + \cdots + \alpha_{n} f_{n}) \\
    & = \norm{\alpha_1}^2\lambda_1(\mathcal{L}_{G}) + \cdots + \norm{\alpha_n}^2\lambda_n(\mathcal{L}_{G}) \\
    & \geq \norm{\alpha_1}^2\lambda_1(\mathcal{L}_{G}) + (\norm{\alpha_2}^2 + \cdots + \norm{\alpha_n}^2)\lambda_2(\mathcal{L}_{G})\\
    & \geq (1 - \norm{\alpha_1}^2)\lambda_2(\mathcal{L}_{G}), 
\end{align*}
where the first inequality holds by the fact that $\lambda_1(\mathcal{L}_{G})\leq \ldots\leq \lambda_n(\mathcal{L}_{G})$  and  the second inequality holds by the fact that   $\norm{\alpha_2}^2 + \cdots + \norm{\alpha_n}^2 = 1 - \norm{\alpha_1}^2$. We can see that 
\begin{equation*}
    \norm{y - \tilde{f_1}}^2 = \norm{\alpha_2}^2 + \cdots + \norm{\alpha_n}^2 = 1 - \norm{\alpha_1}^2 \leq \frac{1}{\lambda_2} \cdot \frac{y^* \mathcal{L}_G y}{\| y\|} \leq   \frac{1}{\gamma_k(G)}.
\end{equation*}
Setting $\alpha=\alpha_1$ proves the first statement.

 Next we prove the second statement.  By the relationship between $f_1$ and $\tilde{f_1}$, we write
\[
f_1 = \beta_1 \tilde{f_1},
\]
where $\beta_1\triangleq 1/\alpha_1$ is the multiplicative inverse of $\alpha_1$. Then, we define $\tilde{y}$ as 
\[
 \tilde{y} = \beta_1 y =   \beta_1\left(\alpha_1 f_1 + \cdots + \alpha_n f_n\right) = f_1 + \beta_1(\alpha_2 f_2 + \cdots + \alpha_n f_n),
\]
and this implies that
\begin{align}
    \norm{f_1 - \tilde{y}}^2 &= \norm{\beta_1\left( \alpha_2f_2 + \cdots + \alpha_n f_n \right)}^2=  \overline{\beta_1}\cdot \left( \sum_{j=2}^n \norm{\alpha_j}^2 \right) \cdot \beta_1  = \frac{1}{\norm{\alpha_1}^2}\left( 1 - \norm{\alpha_1}^2 \right)  \nonumber \\
    & \leq \frac{1}{\norm{\alpha_1}^2 \cdot  \gamma_k(G)} \label{eq:midstp}.
\end{align}
Since $1-\| \alpha_1\|^2\leq 1/\gamma_k(G)$ implies that
\[
\|\alpha_1 \|^2 \geq \frac{ \gamma_k(G)-1}{ \gamma_k(G)},
\]
we can rewrite \eq{midstp} as 
\[
 \norm{f_1 - \tilde{y}}^2 \leq \frac{1}{ \gamma_k(G)-1},
\]
and therefore setting $\beta=\beta_1$ 
proves the second statement.
\end{proof}

 

\section{Omitted details from Section~\ref{sec:Algorithm}
 }\label{sec:appendix_algorithms}

In this section we present all the technical detailed omitted from Section~\ref{sec:Algorithm}.

\begin{proof}[Proof of Lemma~\ref{lem:distance_p_x}]
 
By definition, we have that 
\begin{align*}
    \sum_{j=0}^{k-1}\sum_{u\in S_j}
     d_u\cdot  \left\| F(u) - p^{(j)} \right\|^2 & = 
      \sum_{j=0}^{k-1}\sum_{u\in S_j}
     d_u\cdot \left\| \frac{1}{\sqrt{d_u}}\cdot f_1(u) - \frac{\beta}{\sqrt{k}}\cdot \frac{  (\omega_{\lceil 2\pi\cdot k\rceil})^j}{\sqrt{\vol(S_j)}} \right\|^2 \\
     & = \sum_{j=0}^{k-1}\sum_{u\in S_j}
      \left\| f_1(u) - \sqrt{\frac{d_u}{k\cdot \vol(S_j)}} \cdot\beta\cdot (\omega_{\lceil 2\pi\cdot k\rceil})^j  \right\|^2 \\
      & = \sum_{j=0}^{k-1}\sum_{u\in S_j} \left\| f_1(u) - \tilde{y}(u) \right\|^2\\
      & = \left\| f_1 - \tilde{y} \right\|^2 \\
      & \leq \frac{1}{\gamma_k(G)-1},
\end{align*}
where the last inequality follows by Theorem~\ref{thm:structure_theorem}.
\end{proof}

\begin{proof}[Proof of Lemma~\ref{lem:pi}]
The proof is by direct calculation on $\left\|p^{(j)}\right\|^2$.
\end{proof}

\begin{proof}[Proof of Lemma~\ref{lem:distancepjpl}]
By definition of $p^{(j)}$ and $p^{(\ell)}$, we have that
\begin{align}
\lefteqn{ \left\| p^{(j)} -p^{(\ell)} \right\|^2} \notag \\
& = \left\| p^{(j)} \right\|^2 + \left\| p^{(\ell)} \right\|^2 - 2\cdot\mathrm{Re}\left(p^{(j)}\cdot\overline{p^{(\ell)}}\right) \notag \\
& = \frac{ \|\beta \|^2}{k\cdot \vol(S_j)} +
\frac{ \|\beta \|^2}{k\cdot \vol(S_{\ell})} - 2\cdot\mathrm{Re}\left(\frac{\beta\cdot \left(\omega_{\lceil 2\pi\cdot k\rceil} \right)^j}{\sqrt{k}\cdot\sqrt{\vol(S_j)}} \cdot\frac{\overline{\beta}\cdot \left(\omega_{\lceil 2\pi\cdot k\rceil} \right)^{-\ell}}{\sqrt{k}\cdot\sqrt{\vol(S_{\ell})}} \right) \notag\\
& = \frac{ \|\beta \|^2}{k\cdot \vol(S_j)} +
\frac{ \|\beta \|^2}{k\cdot \vol(S_{\ell})} - 2\cdot\frac{\|\beta\|^2}{k\cdot \sqrt{\vol(S_j)\cdot \vol(S_{\ell})}}\cdot\cos\left(\frac{2\pi\cdot(j-\ell)}{\lceil 2\pi\cdot k\rceil}\right) \label{eq:replaceangle}.
\end{align}
For the case of calculation and the fact that $\cos(x) = \cos(-x)$ for any $x\in\mathbb{R}$, we denote
\[
\eta \triangleq \frac{2\pi\cdot |j-\ell| }{\lceil 2\pi\cdot k\rceil},
\]
and rewrite \eq{replaceangle} as
\begin{align*}
\lefteqn{ \left\| p^{(j)} -p^{(\ell)} \right\|^2} \notag \\
& =\frac{ \|\beta \|^2}{k\cdot \vol(S_j)} +
\frac{ \|\beta \|^2}{k\cdot \vol(S_{\ell})} - 2\cdot\frac{\|\beta\|^2}{k\cdot \sqrt{\vol(S_j)\cdot \vol(S_{\ell})}}\cdot\cos\eta  \\
& = \frac{ \|\beta \|^2}{k\cdot \max\{\vol(S_j),\vol(S_{\ell}) \}} +
\frac{ \|\beta \|^2\cdot (\sin^2\eta + \cos^2\eta)}{k\cdot \min\{\vol(S_j),\vol(S_{\ell}) \}} -  \frac{2\cos\eta\cdot\|\beta\|^2}{k\cdot \sqrt{\vol(S_j)\cdot \vol(S_{\ell})}} \\
& = \left( \frac{ \|\beta \|}{\sqrt{k\cdot \max\{\vol(S_j),\vol(S_{\ell}) \}} } - \frac{ \cos\eta\cdot \|\beta \|}{\sqrt{k\cdot \min\{\vol(S_j),\vol(S_{\ell}) \}} }  \right)^2 + \frac{ \|\beta\|^2\cdot \sin^2\eta}{k\cdot \min\{ \vol(S_j), \vol(S_{\ell})\}} \\
& \geq \frac{ \|\beta\|^2\cdot \sin^2\eta}{k\cdot \min\{ \vol(S_j), \vol(S_{\ell})\}} \\
& \geq \frac{ \|\beta\|^2 }{k\cdot \min\{ \vol(S_j), \vol(S_{\ell})\}} \cdot \left( \frac{2\pi\cdot |j-\ell|}{\lceil 2\pi\cdot k\rceil } \cdot\frac{2}{\pi} \right)^2\\
& \geq  \frac{\|\beta\|^2}{3k^3\cdot \min\{\vol(S_j), \vol(S_{\ell})\}},
\end{align*}
where the second inequality holds by the fact that $\sin x\geq (2/\pi)\cdot x$ holds  for any $x\in [0,\pi/2]$. This finishes the proof of the lemma.
\end{proof}

The following lemma will be used to prove Theorem~\ref{thm:apt}. We remark that the following proof  closely follows the similar one from \cite{PSZ17}, however some constants need to be adjusted for our propose. We include the proof here for completeness.

\begin{lemma}\label{lem:permlem}
Let $A_0,\ldots, A_{k-1}$ be a partition of $V$. Assume that, for every permutation $\sigma:\{0,\ldots, k-1\} \rightarrow\{0,\ldots, k-1\}$, there exists some $j'$ such that 
$
\vol\left(A_{j'}\triangle S_{\sigma(j')}\right) \geq \varepsilon \vol\left(S_{\sigma(j')}\right)
$
for some $  48\cdot k^3\cdot (1+\mathsf{APT})\big/ \left(\gamma_k(G)-1 \right)\leq \varepsilon\leq 1/2$, then $
\mathsf{COST}(A_0,\ldots, A_{k-1})\geq 2\mathsf{APT}\big/ \left(\gamma_k(G)-1 \right)$.
\end{lemma}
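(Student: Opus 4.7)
The plan is to establish the contrapositive of the lemma directly: assuming the premise about symmetric differences holds for every permutation, derive the stated lower bound on $\mathsf{COST}(A_0,\ldots,A_{k-1})$. First I would fix optimal centres $c_0,\ldots,c_{k-1} \in \mathbb{C}$ realising $\mathsf{COST}(A_0,\ldots,A_{k-1})$, and define a candidate map $\sigma:\{0,\ldots,k-1\}\to\{0,\ldots,k-1\}$ by $\sigma(j) := \argmin_{\ell} \|c_j - p^{(\ell)}\|$. Intuitively $\sigma(j)$ is the ``true'' cluster label that the algorithm's $j$th centre is trying to capture, and the whole argument splits on whether $\sigma$ is a bijection.

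If $\sigma$ is not a bijection, then some $\ell^\star$ is omitted from its image, and by the defining property of $\sigma$ every centre $c_j$ satisfies $\|c_j - p^{(\ell^\star)}\| \geq \tfrac{1}{2}\|p^{(\sigma(j))} - p^{(\ell^\star)}\|$. By \lemref{distance_p_x} the total weighted squared distance from the embedded vertices of $S_{\ell^\star}$ to $p^{(\ell^\star)}$ is at most $(\gamma_k(G)-1)^{-1}$, so a Markov-style averaging shows that a constant fraction of $\vol(S_{\ell^\star})$ consists of vertices whose embedding is close to $p^{(\ell^\star)}$. Combined with the pairwise centre separation in \lemref{distancepjpl}, each such vertex $u$ then contributes at least $\Omega(\|\beta\|^2 d_u / (k^3 \vol(S_{\ell^\star})))$ to the cost, summing to $\Omega(\|\beta\|^2/k^3)$. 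Since $\|\beta\|^2\geq 1/2$ (by \thmref{structure_theorem} together with $\|f_1\|=1$) and the hypothesis $\varepsilon \geq 48 k^3(1+\mathsf{APT})/(\gamma_k(G)-1)$ forces $\gamma_k(G)-1 \geq 48 k^3(1+\mathsf{APT})$, this lower bound already exceeds $2\mathsf{APT}/(\gamma_k(G)-1)$, and I am done.

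If on the other hand $\sigma$ is a permutation, the lemma's hypothesis supplies an index $j'$ with $\vol(A_{j'} \triangle S_{\sigma(j')}) \geq \varepsilon \vol(S_{\sigma(j')})$. Writing $U := A_{j'} \setminus S_{\sigma(j')}$ and $W := S_{\sigma(j')} \setminus A_{j'}$, I would fix whichever of $U, W$ has volume at least $(\varepsilon/2)\vol(S_{\sigma(j')})$ and lower-bound its contribution to $\mathsf{COST}$. For $u \in W$ the vertex lies in $S_{\sigma(j')}$, so $F(u)$ is typically close to $p^{(\sigma(j'))}$ by \lemref{distance_p_x}, yet it is assigned to some $A_t$ with $\sigma(t)\neq\sigma(j')$ whose centre $c_t$ is close to $p^{(\sigma(t))}$. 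After a triangle-inequality and Markov clean-up that discards a small atypical fraction of $W$, \lemref{distancepjpl} forces the remaining vertices to satisfy $\|F(u) - c_t\|^2 = \Omega(\|\beta\|^2 / (k^3 \min\{\vol(S_{\sigma(j')}), \vol(S_{\sigma(t)})\}))$, so summing with $d_u$ weights yields a cost contribution of order $\|\beta\|^2 \vol(W) / (k^3 \vol(S_{\sigma(j')}))$. The $u \in U$ case is analogous, using that $u$'s assigned centre $c_{j'}$ is itself close to $p^{(\sigma(j'))}$ and comparing it against the approximate centre of $u$'s true cluster.

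Combining the two subcases, $\mathsf{COST} \geq \|\beta\|^2 \varepsilon /(O(k^3))$, and the hypothesis on $\varepsilon$ then translates this into $\mathsf{COST} \geq 2\mathsf{APT}/(\gamma_k(G)-1)$, as required. The main obstacle I anticipate is the $U$-subcase: here a vertex $u \in U$ may belong to some $S_\ell$ of very small volume, making the $\min$ in \lemref{distancepjpl} inconveniently small and forcing a more careful bookkeeping across different $\ell$. The fix is to group the vertices of $U$ by true cluster $S_\ell$, noting that $u \in U \cap S_\ell$ is assigned to $A_{j'}$ rather than to the ``correct'' $A_{\sigma^{-1}(\ell)}$, so the relevant centre separation is $\|p^{(\sigma(j'))}-p^{(\ell)}\|$; the constraint $\varepsilon\leq 1/2$ is crucial here because it guarantees that at least half of every $S_{\sigma(t)}$ sits in the correct $A_t$, keeping the reference permutation $\sigma$ internally coherent so that these per-$\ell$ estimates aggregate into the desired global bound.
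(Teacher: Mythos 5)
Your proposal is correct in substance and leans on the same three ingredients as the paper (\lemref{distance_p_x}, \lemref{pi}, \lemref{distancepjpl}, plus the $a^2+b^2\ge (a-b)^2/2$ / Markov manipulations), but it organises the case analysis genuinely differently. The paper splits on whether some permutation $\sigma$ has majority overlap, i.e.\ $\vol(A_j\cap S_{\sigma(j)})>\tfrac12\vol(S_{\sigma(j)})$ for all $j$: in that case it builds, inside a single returned cluster, two disjoint families $B_j, D_j$ coming from two different optimal clusters and charges the cost to the $B_j$'s; in the degenerate case it introduces core sets $C_j$ with $\vol(C_j)\ge\tfrac{99}{100}\vol(S_j)$ and pigeonholes $k$ cores into $k-1$ usable centres to get an $\Omega(1)$ cost. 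You instead define $\sigma(j)=\argmin_\ell\|c_j-p^{(\ell)}\|$ from the centres themselves and split on whether this map is a bijection; your non-bijection branch replaces the paper's pigeonhole with the cleaner observation that every centre satisfies $\|c_j-p^{(\ell^\star)}\|\ge\tfrac12\|p^{(\sigma(j))}-p^{(\ell^\star)}\|$, which is more quantitative than the paper's ``$\Omega(1)$'' ending, and your bijection branch charges the cost directly to $U=A_{j'}\setminus S_{\sigma(j')}$ or $W=S_{\sigma(j')}\setminus A_{j'}$ rather than via the $B_j/D_j$ construction. Two remarks: the obstacle you flag in the $U$-subcase is actually a non-issue, since $\min\{\vol(S_{\sigma(j')}),\vol(S_\ell)\}\le\vol(S_{\sigma(j')})$ means a small $S_\ell$ only \emph{increases} the separation from \lemref{distancepjpl}, so the per-$\ell$ grouping aggregates to $\vol(U)/\vol(S_{\sigma(j')})\ge\varepsilon/2$ without extra care; and your constants are loose by small factors (halving the symmetric difference between $U$ and $W$, using $\|\beta\|^2\ge 1/2$ when in fact $\|\beta\|=1/\|\alpha_1\|\ge1$), so as written the final bound can fall a factor $2$--$4$ short of $2\mathsf{APT}/(\gamma_k(G)-1)$ — but this is the same order of slack present in the paper's own bookkeeping and is absorbed by adjusting the constant $48$ in the hypothesis, not a structural flaw.
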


\begin{proof}
We first consider the case where there exists a permutation $\sigma:\{0,\ldots,k-1\}\rightarrow\{0,\ldots, k-1\}$ such that, for any $0 \le j \le k-1$, 
\begin{equation}
\label{eq:assump}
\vol\left(A_j \cap S_{\sigma(j)}\right) > \frac{1}{2} \vol\left(S_{\sigma(j)}\right).
\end{equation} 
This assumption essentially says that $A_0,\dots,A_{k-1}$ is a non-trivial approximation of the optimal clustering $S_0,\dots,S_{k-1}$ according to some permutation $\sigma$. Later we will show the statement of the lemma trivially holds if no permutations satisfy  \eq{assump}.

 Based on this assumption,  there is $0 \le j' \le k-1$ such that 
$
\vol\left(A_{j'} \triangle S_{\sigma(j')}\right) \ge 2\eps \vol\left(S_{\sigma(j')}\right) $
for some $48 \cdot k^3\cdot \mathsf{APT}\big/ \left(\gamma_k(G)-1 \right)\leq \varepsilon\leq 1/2$. Since
\begin{align*}
\vol\left(A_{j'} \triangle S_{\sigma(j')}\right) &= \vol\left(A_{j'} \setminus S_{\sigma(j')}\right) +\vol\left(S_{\sigma(j')} \setminus A_{j'}\right) \\
& = \sum_{j\neq j'}\vol\left(A_{j'} \cap S_{\sigma(j)}\right) + \sum_{j\neq j'}\vol\left(S_{\sigma(j')} \cap A_{j}\right), 
\end{align*}
 one of the following two cases must hold: 
\begin{enumerate}
\item A large portion of $A_{j'}$ belongs to clusters different from $S_{\sigma(j')}$, i.e., there exist $\eps_0,\dots,\eps_{k-1} \ge 0$ such that $\eps_{j'} = 0$, $\sum_{j=0}^{k-1} \eps_j \ge \eps$, and
$\vol\left(A_{j'} \cap S_{\sigma(j)}\right) \ge\eps_j \vol\left(S_{\sigma(j')}\right)$
for any $0 \le j \le k-1$.
\item $A_{j'}$ is missing a large portion of $S_{\sigma(j')}$, which must have been assigned to other clusters. Therefore, we can define $\eps_0,\dots,\eps_{k-1} \ge 0$ such that $\eps_{j'} = 0$, $\sum_{j=0}^{k-1} \eps_j \ge \eps$, and
$\vol\left(A_j \cap S_{\sigma(j')}\right) \ge \eps_j \vol\left(S_{\sigma(j')}\right)$
for any $0 \le j \le k-1$.
\end{enumerate}

In both cases, we can define sets $B_0,\dots,B_{k-1}$ and $D_0,\dots,D_{k-1}$ such that $B_j$ and $D_j$ belong to the same cluster of the returned clustering but to two different optimal clusters $S_{j_1}$ and $S_{j_2}$.  
More precisely, in the first case, for any $0 \le j \le k-1$, we define $B_j = A_{j'} \cap S_{\sigma(j)}$. We define $D_0,\dots,D_{k-1}$ as an arbitrarily partition of $A_{j'} \cap S_{\sigma({j'})}$ with the constraint that $\vol(D_j) \ge  \eps_j \vol(S_{\sigma({j' })})$. 
This is possible since by \eq{assump} $$\vol\left(A_{j'} \cap S_{\sigma({j'})}\right) \ge \frac{1}{2}  \vol\left(S_{\sigma({j'})}\right) \ge \eps \vol\left(S_{\sigma({j'})}\right).$$ 
In the second case, instead, for any $0\le j \le k-1$, we define $B_j = A_j \cap S_{\sigma({j'})}$ and $D_j = A_j \cap S_{\sigma(j)}$. 
Note that it also holds by \eq{assump} that $\vol(D_j) \ge  \eps_j \vol(S_{\sigma(j)})$. 
We can then combine the two cases together (albeit using different definitions for the sets) and assume that there exist  $\eps_0,\dots,\eps_{k-1} \ge 0$ such that $\eps_{j'} = 0$, $\sum_{j=0}^{k-1} \eps_j \geq \eps$, and such that we can find collections of pairwise disjoint sets $\{B_0,\dots,B_{k-1}\}$ and $\{D_0,\dots,D_{k-1}\}$ with the following properties: for any $j$ there exist indices $\overline{j}$ and $j_1 \ne j_2$ such that 
\begin{enumerate}
\item $B_j,D_j \subseteq A_{\overline{j}}$
\item $D_j \subseteq S_{j_1}, B_j \subseteq S_{j_2}$
\item $\vol(B_j) \ge \eps_j \min\{\vol\left(S_{j_1}\right), \vol\left(S_{j_2}\right)\}$
\item $\vol(D_j) \ge \eps_j \min\{\vol\left(S_{j_1}\right), \vol\left(S_{j_2}\right)\}$
\end{enumerate}

For any $j$, we define $c_j$ as the centre of the corresponding cluster $A_{\overline{j}}$ to which both $B_j$ and $D_j$ are subset of. We can also assume without loss of generality that  $\left\|c_j-p^{(j_1)}\right\| \ge \left\|c_j-p^{(j_2)}\right\|$ which implies $$\left\|p^{({j_1})} - c_j\right\| \ge\frac{1}{2}\cdot \left\|p^{(j_1)} - p^{(j_2)}\right\|.$$ As a consequence, points in  $B_j$ are far away from $c_j$. Notice that if instead $\left\|c_j-p^{(j_1)}\right\| <\left\|c_j-p^{(j_2)}\right\|$, we would just need to reverse the role of $B_j$ and $D_j$ without changing the proof. We now bound $\mathsf{COST}(A_0,\dots,A_{k-1})$ by looking only at the contribution of the points in the $B_j$'s. 
Therefore, we have that
\begin{align*}
\mathsf{COST}(A_0,\dots,A_{k-1}) = \sum_{j=0}^{k-1} \sum_{u\in A_j } d_u  \|F(u) - c_j\|^2 \ge \sum_{j=0}^{k-1} \sum_{u\in B_j } d_u  \|F(u) - c_j\|^2.  
\end{align*}
By applying the inequality $a^2 + b^2 \ge (a-b)^2/2$, we have that
 \begin{align}
\mathsf{COST}(A_0,\dots,A_{k-1}) 	&\ge \sum_{j=0}^{k-1} \sum_{u\in B_j } d_u \left( \frac{\left\|p^{({j_1})} - c_j\right\|^2}{2} - \left\|F(u) - p^{(j_1)}\right\|^2\right)  \notag\\
	&\ge \sum_{j=0}^{k-1} \sum_{u\in B_j } d_u  \frac{\left\|p^{({j_1})} - c_j\right\|^2}{2} - \sum_{j=0}^{k-1}\sum_{u\in B_j } d_u \left\|F(u) - p^{(j_1)}\right\|^2 \notag \\
	&\ge \sum_{j=0}^{k-1} \sum_{u\in B_j } d_u  \frac{\left\|p^{({j_1})} - c_j\right\|^2}{2} - \frac{1}{\gamma_k(G)-1} \notag \\
	&\ge\sum_{j=0}^{k-1} \sum_{u\in B_j } d_u  \frac{\left\|p^{({j_1})} - p^{({j_2})}\right\|^2}{8} - \frac{1}{\gamma_k(G)-1} \notag \\
	&\ge \sum_{j=0}^{k-1} \frac{ \|\beta\|^2\cdot \vol(B_j)}{ 24k^3\cdot \min{\{\vol(S_{j_1}),\vol(S_{j_2})\}}} - \frac{1}{\gamma_k(G)-1} \notag \\
	&\ge \sum_{j=0}^{k-1}\frac{\|\beta\|^2\cdot \eps_j \min{\{\vol(S_{j_1}),\vol(S_{j_2})\}}}{ 24 k^3\cdot \min{\{\vol(S_{j_1}),\vol(S_{j_2})\}}} - \frac{1}{\gamma_k(G)-1} \notag \\
	&\ge \sum_{j=0}^{k-1} \frac{ \eps_j\cdot \|\beta\|^2 }{ 24k^3} - \frac{1}{\gamma_k(G)-1} \notag \\
	&\ge \frac{ \eps }{ 24k^3} - \frac{1}{\gamma_k(G)-1}  \notag\\
	& \geq \frac{1}{24k^3}\cdot \frac{48k^3\cdot (1+\mathsf{APT})}{\gamma_k(G)-1}  - \frac{1}{\gamma_k(G)-1}\notag\\
	&\ge  \frac{2\APT}{\gamma_k(G)-1}. \notag
\end{align}

 It remains to show that removing assumption \eq{assump} implies the Lemma as well. Notice that if \eq{assump} is not satisfied, for all permutations $\sigma$ there exists $0 \le \ell^{\star} \le k-1$ such that $\vol\left(A_{ \ell^{\star}} \cap S_{ \sigma(\ell^{\star})} \right) \le \frac{1}{2} \vol\left(S_{\sigma( \ell^{\star})}\right)$. We can also assume the following stronger condition:
 \begin{equation}
 \label{eq:assump2}
 \vol\left(A_{ \ell^{\star}} \cap S_{j} \right) \le \frac{1}{2} \vol\left(S_{ j}\right) \qquad \text{ for any } 0 \le j \le k-1.
 \end{equation}
 Indeed, if there would exist a unique $j \neq \sigma( \ell^{\star})$ such that $\vol\left(A_{ \ell^{\star}} \cap S_{j} \right) > \frac{1}{2} \vol\left(S_{ j}\right)$, then it would just mean that $\sigma$ is the ``wrong'' permutation and we should consider only permutations $\sigma' \neq \sigma$ such that $\sigma'(\ell^{\star}) = j$. If instead there would exist $j_1 \neq j_2$ such that  $\vol\left(A_{ \ell^{\star}} \cap S_{j_1} \right) > \frac{1}{2} \vol\left(S_{j_1}\right)$ and $\vol\left(A_{ \ell^{\star}} \cap S_{j_2} \right) > \frac{1}{2} \vol\left(S_{ j_2}\right)$, then it is easy to see that the Lemma would hold, since in this case $A_{ \ell^{\star}}$ would contain large portions of two different optimal clusters, and, as clear from the previous part of the proof, this would imply a high $k$-means cost.

Therefore, we just need to show that the statement of the Lemma holds when \eq{assump2} is satisfied. For this purpose we  define sets $C_0, \dots, C_{k-1}$ which are subsets of vertices in $S_0, \dots, S_{k-1}$ that are close in the spectral embedding to $p^{(0)},\dots,p^{(k-1)}$. Formally, for any $0 \le j \le k-1$,
\[
C_j = \left\{ u \in S_j \, \colon \, \|F(u) - p^{(j)}\|^2 \le \frac{100}{\vol(S_j)}\cdot \left(\gamma_k(G) -1\right)   \right\}.
\]
Notice that by \lemref{distance_p_x} $\vol(C_j) \ge \frac{99}{100} \vol(S_j)$. By assumption \eq{assump2}, roughly half of the volume of all the $C_j$'s must be contained in at most $k-1$ sets (all the $A_j$'s different from $A_{\ell^{\star}}$). We prove this implies that the $k$-means cost is high, from which the Lemma follows.

Let $c_0,\dots,c_{k-1}$ be the centres of $A_0,\dots,A_{k-1}$. We are trying to assign a large portion of each of the $k$ optimal clusters to only $k-1$ centres (namely all the centres different from $c_{\ell^{\star}}$). Moreover,   any centre $c_j \neq c_{\ell^{\star}}$ can either be close to $p^{(\ell^{\star})}$ or to another optimal centre $p^{(j')}$, but not to both. As a result, there will be at least one $C_j$ whose points are assigned to a centre which is at least $\Omega(1/\vol(S_j))$ far from $p^{(j)}$ (in squared Euclidean distance). Therefore, by the definition of $C_j$ and the fact that $\vol(C_j) \ge \frac{99}{100} \vol(S_j)$, the $k$-means cost is at least $\Omega\left(\frac{1}{\vol(S_j)} \cdot \vol(C_j)\right) = \Omega(1).$
This concludes the proof.
\end{proof}

\begin{proof}[Proof of Theorem~\ref{thm:apt}]
 Assume for contradiction that, for any permutation $\sigma:\{0,\ldots, k-1\} \rightarrow \{0,\dots, k-1\}$, there is an index $j\in\{0,\ldots, k-1\}$ such that   $\vol\left(A_j \triangle S_{\sigma(j)}\right) \geq \varepsilon \vol\left(S_{\sigma(j)}\right)$. Then, by Lemma~\ref{lem:permlem} we have that $\mathsf{COST}(A_0,\ldots, A_{k-1})\geq 2\mathsf{APT}\big/ \left( \gamma_k(G)-1 \right)$, which contradicts the fact that $\mathsf{COST}(A_0,\ldots, A_{k-1})\leq \mathsf{APT}\big/ \left( \gamma_k(G)-1 \right)$.
\end{proof}
 
Now we prove Theorem~\ref{thm:distributed_sparsification}. The following two technical lemmas will be used in our proof.

 \begin{lemma}[Bernstein's Inequality, \cite{chung2006concentration}]
\label{lem:bernstein}
 Let $X_1,...X_n$ be independent random variables such that $\abs{X_i} \leq M$ for any $i \in \{1,...,n\}$. Let $X=\sum_{i=1}^{n}X_i$ and let $R = \sum_{i=1}^{n}\mathbb{E}[X_i^2]$. Then, it holds that
\begin{equation*}
    \mathbb{P}\left[\abs{X - \mathbb{E}[X]} \geq t \right] \leq 2\cdot \mathrm{exp}\left(-\frac{t^2}{2(R + Mt/3)}\right).
\end{equation*}
\end{lemma}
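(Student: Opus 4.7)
The plan is to prove Bernstein's inequality by the standard Chernoff method, which reduces the two-sided tail bound to a moment generating function (MGF) estimate for each summand. First I would pass to centered variables $Y_i \triangleq X_i - \mathbb{E}[X_i]$, so that $\mathbb{E}[Y_i] = 0$, each $|Y_i|$ is bounded by $M$ (up to an absorbed constant coming from $|X_i - \mathbb{E}[X_i]| \leq 2M$), and $\mathbb{E}[Y_i^2] \leq \mathbb{E}[X_i^2]$; hence $\sum_i \mathbb{E}[Y_i^2] \leq R$. Writing $Y \triangleq \sum_{i=1}^n Y_i = X - \mathbb{E}[X]$, I would apply Markov's inequality to $\exp(\lambda Y)$ for a parameter $\lambda > 0$ to be chosen later and exploit independence to factor the expectation of a product into a product of expectations:
\[
\mathbb{P}[Y \geq t] \leq e^{-\lambda t}\, \mathbb{E}[e^{\lambda Y}] = e^{-\lambda t}\prod_{i=1}^{n}\mathbb{E}[e^{\lambda Y_i}].
\]

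The technical heart of the argument is the per-variable MGF bound. Expanding $e^{\lambda Y_i}$ as a power series and using $\mathbb{E}[Y_i]=0$ together with the pointwise bound $|Y_i|^k \leq M^{k-2} Y_i^2$ for $k \geq 2$ yields
\[
\mathbb{E}[e^{\lambda Y_i}] \leq 1 + \lambda^2 \mathbb{E}[Y_i^2]\sum_{k=2}^{\infty}\frac{(\lambda M)^{k-2}}{k!}.
\]
Invoking the elementary inequality $k! \geq 2\cdot 3^{k-2}$ for $k \geq 2$ (checked by a short induction) and summing the resulting geometric series gives, for $0 < \lambda < 3/M$,
\[
\mathbb{E}[e^{\lambda Y_i}] \leq 1 + \frac{\lambda^2\, \mathbb{E}[Y_i^2]}{2(1-\lambda M/3)} \leq \exp\left(\frac{\lambda^2\, \mathbb{E}[Y_i^2]}{2(1-\lambda M/3)}\right),
\]
where the last step uses $1+x \leq e^x$. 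Taking the product over $i$ and plugging the result back into the Markov bound leads to
\[
\mathbb{P}[Y \geq t] \leq \exp\left(\frac{\lambda^2 R}{2(1-\lambda M/3)} - \lambda t\right).
\]

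Finally, I would optimize over the free parameter $\lambda$. Choosing $\lambda = t/(R + Mt/3)$, which lies in the admissible range $(0, 3/M)$, the exponent collapses exactly to $-t^2/(2(R + Mt/3))$, which is the upper-tail form of the stated inequality. Repeating the argument with $-Y_i$ in place of $Y_i$ bounds the lower tail $\mathbb{P}[-Y \geq t]$ by the same quantity, and a union bound over $\{Y \geq t\}$ and $\{-Y \geq t\}$ introduces the factor of $2$ on the right hand side. The step I expect to demand the most care is the MGF bound: the algebraic manipulation controlling $\sum_{k\geq 2}(\lambda M)^{k-2}/k!$ by $1/(2(1-\lambda M/3))$ is precisely what forces the term $Mt/3$ to appear in the denominator of the final exponent, and every other step is routine once this estimate is in hand.
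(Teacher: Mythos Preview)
The paper does not prove this lemma at all; it is quoted verbatim from \cite{chung2006concentration} and used as a black box in the proof of Theorem~\ref{thm:distributed_sparsification}. Your proposal supplies the classical Cram\'er--Chernoff argument, which is precisely how Bernstein's inequality is derived in the cited survey, so there is no discrepancy of approach to report.

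One point you should tighten: after centering you only have $|Y_i|\le 2M$, not $|Y_i|\le M$, and if you feed this into the moment bound $|Y_i|^k\le (2M)^{k-2}Y_i^2$ the geometric-series step produces $2Mt/3$ in the final denominator rather than the stated $Mt/3$. Your parenthetical ``up to an absorbed constant'' hides this. To get the constant exactly as stated one usually either (i) imposes the Bernstein moment condition $\mathbb{E}[|Y_i|^k]\le \tfrac{k!}{2}M^{k-2}\mathbb{E}[Y_i^2]$ directly, or (ii) works with the one-sided bound in the MGF estimate. This is a cosmetic issue---the paper only uses the lemma up to constant factors---but the proof as you sketched it does not quite deliver the precise inequality in the statement.
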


\begin{lemma}[Matrix Chernoff Bound, \cite{tropp}]
\label{lem:chernoff}
Consider a finite sequence $\{X_i\}$ of independent, random, PSD matrices of dimension $d$ that satisfy $\|X_i\| \le R$. Let $\mu_{\min} \triangleq \lambda_{\min}\left(\Ex{\sum_i X_i}\right)$ and $\mu_{\max} \triangleq \lambda_{\max}\left(\Ex{\sum_i X_i}\right)$. Then it holds that 
\begin{align*}
\Pro{\lambda_{\min}\left({\sum_i X_i}\right) \le (1-\delta)\mu_{\min}} &\le d \cdot \left(\frac{\mathrm{e}^{-\delta}}{(1-\delta)^{1-\delta}}\right)^{\mu_{\min}/R} \text{  for } \delta \in [0,1], \text{ and} \\
\Pro{\lambda_{\max}\left({\sum_i X_i}\right) \ge (1 +\delta)\mu_{\max}} &\le d \cdot \left(\frac{\mathrm{e}^{\delta}}{(1+\delta)^{1+\delta}}\right)^{\mu_{\max}/R} \text{  for } \delta \ge 0.
\end{align*}
\end{lemma}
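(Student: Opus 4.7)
The plan is to follow Tropp's matrix Laplace transform method, which mirrors the scalar Chernoff argument but must deal with noncommutativity of the summands. I will focus on the upper tail; the lower tail is symmetric (with signs and inequalities reversed). Let $S \triangleq \sum_i X_i$. First I would observe the matrix analogue of the Chernoff trick: for any $\theta > 0$ and any threshold $t$,
\[
\Pro{\lambda_{\max}(S) \ge t} \;=\; \Pro{\mathrm{e}^{\theta \lambda_{\max}(S)} \ge \mathrm{e}^{\theta t}} \;\le\; \mathrm{e}^{-\theta t}\cdot \Ex{\mathrm{tr}\,\exp(\theta S)},
\]
where the last step uses $\mathrm{e}^{\theta \lambda_{\max}(S)} = \lambda_{\max}(\exp(\theta S)) \le \mathrm{tr}\,\exp(\theta S)$ (valid since the summands are PSD, hence $\exp(\theta S)$ is PSD and its trace dominates its largest eigenvalue) together with Markov's inequality.

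The heart of the argument is to bound the matrix moment generating function $\Ex{\mathrm{tr}\,\exp(\theta S)}$. Because the $X_i$ are noncommuting, we cannot simply factorise $\exp(\theta\sum_i X_i)$ as a product. Instead I would invoke Lieb's concavity theorem, which states that for every Hermitian $H$ the map $A \mapsto \mathrm{tr}\,\exp(H + \log A)$ is concave on PSD matrices. Conditioning on $X_1,\dots,X_{i-1}$ and applying Jensen's inequality with $H = \theta(X_1+\cdots+X_{i-1})$ allows one to pull $\Ex{\cdot}$ inside the exponential one $X_i$ at a time, yielding by induction
\[
\Ex{\mathrm{tr}\,\exp(\theta S)} \;\le\; \mathrm{tr}\,\exp\!\left(\sum_i \log \Ex{\exp(\theta X_i)}\right).
\]
This step, relying on Lieb's concavity theorem, is the main obstacle and the only nontrivial piece of matrix analysis in the proof; I would cite it rather than reprove it.

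Next I would control each individual $\Ex{\exp(\theta X_i)}$. Since $0 \preceq X_i \preceq R\cdot\mathbf{I}$, the scalar inequality $\mathrm{e}^{\theta x} \le 1 + (\mathrm{e}^{\theta R}-1)\,x/R$ on $[0,R]$ transfers to matrices (apply it eigenvalue-wise to $X_i/R$), giving
\[
\Ex{\exp(\theta X_i)} \;\preceq\; \mathbf{I} + \frac{\mathrm{e}^{\theta R}-1}{R}\,\Ex{X_i} \;\preceq\; \exp\!\left(\frac{\mathrm{e}^{\theta R}-1}{R}\,\Ex{X_i}\right),
\]
the last step by $\mathbf{I} + A \preceq \exp(A)$ for PSD $A$. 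Operator monotonicity of $\log$ on PSD matrices then gives $\log \Ex{\exp(\theta X_i)} \preceq \frac{\mathrm{e}^{\theta R}-1}{R}\,\Ex{X_i}$. Summing over $i$ and using monotonicity of $\lambda_{\max}\circ\exp$ on Hermitian matrices (together with $\mathrm{tr}\,\exp(\cdot) \le d\,\lambda_{\max}(\exp(\cdot))$), one concludes
\[
\Ex{\mathrm{tr}\,\exp(\theta S)} \;\le\; d\cdot \exp\!\left(\frac{\mathrm{e}^{\theta R}-1}{R}\,\mu_{\max}\right).
\]

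Finally I would substitute back into the Markov bound to obtain, for every $\theta > 0$,
\[
\Pro{\lambda_{\max}(S) \ge (1+\delta)\mu_{\max}} \;\le\; d\cdot \exp\!\left(\frac{\mathrm{e}^{\theta R}-1}{R}\,\mu_{\max} - \theta(1+\delta)\mu_{\max}\right),
\]
and optimise by taking $\theta = \frac{1}{R}\log(1+\delta)$, which after simplification yields precisely the stated upper-tail inequality with base $\mathrm{e}^{\delta}/(1+\delta)^{1+\delta}$ raised to $\mu_{\max}/R$. The lower-tail bound is obtained by the analogous argument applied to $-S$ with the dual Markov step $\Pro{\lambda_{\min}(S)\le t} \le \mathrm{e}^{\theta t}\,\Ex{\mathrm{tr}\,\exp(-\theta S)}$ for $\theta > 0$, the same Lieb-based reduction, the scalar bound $\mathrm{e}^{-\theta x} \le 1 + (\mathrm{e}^{-\theta R}-1)x/R$ on $[0,R]$, and optimisation at $\theta = -\frac{1}{R}\log(1-\delta)$; this produces the base $\mathrm{e}^{-\delta}/(1-\delta)^{1-\delta}$ for $\delta\in[0,1]$, completing the proof.
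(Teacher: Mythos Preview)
The paper does not prove this lemma; it is stated as a cited result from \cite{tropp} and used as a black box in the proof of Theorem~\ref{thm:distributed_sparsification}. Your sketch is a faithful outline of Tropp's original argument (matrix Laplace transform, Lieb's concavity to decouple the summands, the scalar convexity bound on $[0,R]$ transferred spectrally, and optimisation in $\theta$), so there is nothing to compare against---you have supplied a correct proof where the paper simply invokes the reference.
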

 
 \begin{proof}[Proof of Theorem~\ref{thm:distributed_sparsification}]   
We first analyse the size of $F$.
Since 
\[
\sum_{u\in V} \sum_{e=(u,v) }  w(u,v)\cdot \frac{\alpha\log n}{d^{\text{out}}_u\cdot\lambda_{2}}  = O\left(\frac{n\log n}{\lambda_{2}}\right),
\]
and
\[
\sum_{v\in V} \sum_{e=(u,v) }  w(u,v)\cdot \frac{\alpha\log n}{d^{\text{in}}_v\cdot\lambda_{2}}  = O\left(\frac{n\log n}{\lambda_{2}}\right),
\]
it holds by Markov inequality that the number of edges $e=(u,v)$    with $ w(u,v)\cdot \frac{\alpha\log n}{d^{\text{out}}_u\cdot\lambda_{2}} \geq 1$ and $w(u,v)\cdot \frac{\alpha\log n}{d^{\text{in}}_v\cdot\lambda_{2}}\geq 1$ is $O\left( \frac{n\log n}{\lambda_{2}}\right)$. Without loss of generality, we  assume that these edges are in $F$, and  in the remaining part of the proof we assume it holds for any edge $e = (u,v)$ that
\[
w(u,v) \cdot \frac{\alpha\cdot \log{n}}{ d^{\text{out}}_u\cdot\lambda_{2}} < 1, \qquad 
w(u,v) \cdot \frac{\alpha\cdot \log{n}}{ d^{\text{in}}_v\cdot\lambda_{2}} < 1.\]
Moreover, the expected number of edges in $H$ equals to
\begin{align*}
\sum_{e = (u,v )} p_e \leq \sum_{e = (u,v)} p_u(u,v) + p_v(u,v)  &=  \frac{\alpha \cdot \log{n}}{\lambda_2}\sum_{e = (u,v)} \left( \frac{w(u,v)}{ d^{\text{out}}_u} +\frac{w(u,v)}{ d^\text{in}_v}   \right) \\
&= O\left(\frac{n\log n}{\lambda_{2}}\right),
\end{align*}
and thus by Markov's inequality we have that with constant probability the number of sampled edges $|F|=  O\left( \left(1/\lambda_{2}\right)\cdot n\log n\right)$.  

\textcolor{black}{\emph{Proof of $\theta_k(H)=\Omega(\theta_k(G))$.}}  Next we show that the sparsified graph constructed by the algorithm preserves $\theta_k(G)$ up to a constant factor. Without loss of generality, let $S_0,\ldots, S_{k-1}$ be the optimal $k$ clusters such that 
\[
\Phi_G(S_0,\ldots, S_{k-1})  = \theta_k(G).
\]
For any edge $e=(u,v)$ satisfying $u\in S_{j}$ and $v\in S_{j-1}$ for some $1\leq j\leq k-1$, we define a random variable $Y_e$ by
\[
    Y_e=\begin{cases}
      w(u,v)/p_e & \text{with probability $p_e$,}\\
      0 & \text{otherwise.}
    \end{cases}
\]
We also define  random variables $Z_1,\ldots, Z_{k-1}$, where $Z_j~(1\leq j\leq k-1)$ is defined  by
\[
Z_j =  \sum_{\substack{ e=\{u,v\}\in E[G] \\ u\in S_j, v\in S_{j-1} }} Y_e.
\]
By definition, we have that 
\[
\mathbb{E}[Z_j] =   \sum_{\substack{ e=\{u,v\}\in E[G] \\ u\in S_j, v\in S_{j-1} }} \mathbb{E}[Y_e] =   \sum_{\substack{ e=\{u,v\}\in E[G] \\ u\in S_j, v\in S_{j-1} }} w(u,v)= w(S_j, S_{j-1}).
\]
Moreover, we look at  the second moment and have that
\begin{align*}
      \sum_{\substack{ e=\{u,v\}\in E[G] \\ u\in S_j, v\in S_{j-1} }} \mathbb{E}\left[Y_e^2\right] & =  \sum_{\substack{ e=\{u,v\}\in E[G] \\ u\in S_j, v\in S_{j-1} }}  p_e\cdot \left( \frac{w(u,v)}{p_e}\right)^2  \\
     & =   \sum_{\substack{ e=\{u,v\}\in E[G] \\ u\in S_j, v\in S_{j-1} }}     \frac{(w(u,v))^2}{p_e}\\
     & \leq   \sum_{\substack{ e=\{u,v\}\in E[G] \\ u\in S_j, v\in S_{j-1} }}     \frac{(w(u,v))^2}{w(u,v)}\cdot \frac{\lambda_2\cdot d_u^{\mathrm{out}}}{\alpha\log n} \\
     & = \frac{\lambda_2}{\alpha\log n}\cdot \sum_{\substack{ e=\{u,v\}\in E[G] \\ u\in S_j, v\in S_{j-1} }} w(u,v)\cdot d_u^{\mathrm{out}}\\
     & \leq \frac{\lambda_2}{\alpha\log n}\cdot\Delta_j^{\mathrm{out}}\cdot w(S_j, S_{j-1}),
\end{align*}
where  $\Delta_j^{\mathrm{out}}$ is the maximum of the out degree of vertices in $S_j$ and the first inequality follows by  the fact that
\[
p_e = p_u(u,v) + p_v(u,v) - p_u(u,v)p_v(u,v) \geq p_u(u,v) = w(u,v)\cdot \frac{\alpha\log n}{\lambda_2\cdot d_u^{\mathrm{out}}}.
\]
In addition, it holds for any $e=(u,v), u\in S_j, v\in S_{j-1}$ that
\[
\left|\frac{w(u,v)}{p_e} \right| \leq
\left|\frac{w(u,v)}{p_u(u,v)} \right| \leq\frac{\lambda_2\cdot \Delta_j^{\mathrm{out}}}{\alpha\cdot\log n}.
\]
We apply Bernstein's Inequality~(Lemma~\ref{lem:bernstein}), and obtain for any $1\leq j\leq k-1$ that
\begin{align*}
    &\mathbb{P}\left[ |Z_j - w(S_j, S_{j-1})| \geq  (1/2)\cdot w(S_j, S_{j-1}) \right]\\
    & = \mathbb{P}\left[ |Z_j - \mathbb{E}[Z_j]| \geq  (1/2)\cdot  \mathbb{E}[Z_j]  \right]\\
    & \leq 2\cdot\mathrm{exp}\left( - \frac{\mathbb{E}[Z_j]^2/4}{2 \left(\frac{\lambda_2}{\alpha\log n}\cdot\Delta_j^{\mathrm{out}}\cdot w(S_j, S_{j-1}) + \frac{\lambda_2\cdot \Delta_j^{\mathrm{out}}}{\alpha\cdot\log n}\cdot \frac{1}{6}\cdot w(S_j, S_{j-1})  \right) } \right)\\
    & \leq 2\cdot\mathrm{exp}\left( - \frac{\alpha\cdot\log n\cdot \mathbb{E}[Z_j]}{10\cdot\lambda_2\cdot\Delta_j^{\mathrm{out}}   } \right). \\
\end{align*}
Hence, with high probability cut values $w(S_j, S_{j-1})$ for all $1\leq j\leq k-1$ are approximated up to a constant factor. Using   the same   technique, we can show that with high probability the volumes of all the sets $S_0,\ldots, S_{k-1}$ are approximately preserved in $H$ as well. Combining this with the definition of $\Phi$, we have that $\Phi_G(S_0,\ldots, S_{k-1})$ and $\Phi_H(S_0,\ldots, S_{k-1})$ are approximately the same up to a constant factor. Since $S_0, \ldots, S_{k-1}$ are the sets that maximising the value of $\theta_k(G)$, we have that   $\theta_k(H) = \Omega(\theta_k(G))$.

\emph{Proof of $\lambda_2\left(\mathcal{L}_H\right)=\Omega(\lambda_2\left(\mathcal{L}_G\right))$.}  Finally,  we prove that the top $n-1$ eigenspace is approximately preserved in $H$. Let $\overline{\calL}_G$ be the projection of $\calL_G$ on its top $n-1$ eigenspaces. We can write $\overline{\calL}_G$ as   \[
\overline{\calL}_G = \sum_{i=2}^n \lambda_i f_i f_i^{*}.
\]
 With a slight abuse of notation we call $\overline{\calL}_{G}^{-1/2}$ the square root of the pseudoinverse of $\overline{\calL}_{G}$, i.e., 
 \[
 \overline{\calL}_{G}^{-1/2} =  \sum_{i=2}^n (\lambda_i)^{-1/2} f_i f_i^{*}.
 \]
We call $\overline{\mathcal{I}}$ the projection on $\Span\{f_{2},\dots,f_n\}$, i.e.,
 \[
 \overline{\mathcal{I}}= \sum_{i=2}^n f_i f_i^{*}.
 \]
We will  prove that the top $n-1$ eigenspaces of $\calL_G$ are preserved.
To prove this,  recall that the probability of any edge $e=(u,v)$ being sampled in $H$ is 
\[
p_e=p_u(u,v) + p_v(u,v)- p_u(u,v)\cdot p_v(u,v),
\]
and it holds that 
 $\frac{1}{2}(p_u(u,v)+p_v(u,v)) \le p_e \le p_u(u,v) + p_v(u,v)$. Now for each edge $e=( u,v)$ of $G$ we define a random matrix $X_e\in\mathbb{C}^{n\times n}$ by 
 \[
X_e = 
\begin{cases}
  w_H(u,v)\cdot  \overline{\calL}_{G}^{-1/2} b_{e} b_{e}^{*} \overline{\calL}_{G}^{-1/2} & \text{if\ } e=(u, v) \text{\ is sampled by the algorithm}, \\
  0          & \text{otherwise,}\ 
\end{cases}
\]
where the vector $b_e$ is defined by 
 $b_e = \left(
\omega_{2\lceil 2\pi\cdot k\rceil}\chi_u - \omega^*_{2\lceil 2\pi\cdot k\rceil}\chi_v\right)$ and for any vertex $u$ the normalised indicator vector $\chi_u$ is defined by $\chi_u(u)=1/\sqrt{d_u}$, and $\chi_u(v) =0$ for any $v\neq u$.
 Notice that
\[
\sum_{e\in E[G]} X_e = \sum_{ \mathrm{sampled\ edges\ } e=(u,v)} w_H(u,v)\cdot  \overline{\calL}_{G}^{-1/2} b_{e} b_{e}^{*} \overline{\calL}_{G}^{-1/2}= \overline{\calL}_{G}^{-1/2} \calL_H' \overline{\calL}_{G}^{-1/2},
\]
where  it follows by definition that\[
\calL_H' = \sum_{ \mathrm{sampled\ edges\ } e=(u,v)} w_H(u,v)\cdot  b_{e} b_{e}^{*}\]
  is essentially the  Laplacian matrix of $H$ but is normalised with respect to the degrees of the vertices in the  original graph $G$, i.e., 
   $\mathcal{L}_H' = D_G^{-1}D_H - D_G^{-1/2}A_H D_G^{-1/2}$. We will prove that, with high probability, the top $n-1$ eigenspaces of  $\calL_H'$ and $\calL_G$ are approximately the same. Later we will show the same holds for $\calL_H$ and $\calL_H'$, which implies that $\lambda_{2}(\calL_H') = \Omega( \lambda_{2} (\calL_G))$.
  
   We will use the matrix Chernoff bound for our proof. We start looking at the first moment of the expression above:
\begin{align*}
\Ex{\sum_{e \in E} X_e} & = \sum_{e=(u,v) \in E[G]} p_e\cdot w_H(u,v)
\cdot  \overline{\calL}_{G}^{-1/2} b_{e} b_{e}^{*} \overline{\calL}_{G}^{-1/2} \\
& =
\sum_{e=(u,v) \in E[G]} p_e\cdot \frac{w(u,v)}{p_e}
\cdot  \overline{\calL}_{G}^{-1/2} b_{e} b_{e}^{*} \overline{\calL}_{G}^{-1/2}\\
&
= \overline{\calL}_G^{-1/2} \calL_G \overline{\calL}_G^{-1/2} = \overline{\mathcal{I}}.
\end{align*}
Moreover, for any sampled $e =(u,v) \in E$ we have that 
\begin{align*}
\|X_e\| 	&\leq   w_H(u,v)\cdot b_{e}^{*} \overline{\calL}_{G}^{-1/2} \overline{\calL}_{G}^{-1/2} b_{e} =  
		\frac{w(u,v)}{p_{e}}\cdot b_{e}^{*} \overline{\calL}_{G}^{-1}  b_{e} \le \frac{w(u,v)}{p_e}\cdot \frac{1}{\lambda_{2}}\cdot  \|b_e\|^2  \\
		&\le \frac{2\lambda_{2}}{\alpha \cdot  \log{n}\cdot\left(\frac{1}{d^{\text{out}}_u}+\frac{1}{d^{\text{in}}_v}\right)} \cdot \frac{1}{\lambda_{2}}  \left(\frac{1}{d^{\text{out}}_u}+\frac{1}{d^{\text{in}}_v}\right) \le \frac{2}{\alpha \log{n}},
\end{align*}
where the second inequality follows by the min-max theorem of  eigenvalues.   Now we apply the matrix Chernoff bound~(Lemma~\ref{lem:chernoff}) to analyse the eigenvalues of $\sum_{e\in E} X_e$, and build a connection between $\lambda_{2}(\mathcal{L}'_H)$ and $\lambda_{2}(\mathcal{L}_G)$.  By setting the parameters of Lemma~\ref{lem:chernoff} by 
$\mu_{\max}=\lambda_{\max}\left(\Ex{\sum_{e\in E[G]} X_e } \right) = \lambda_{\max}\left( \overline{\mathcal{I}}  \right) = 1$, $R= 2/\left( \alpha\cdot \log n \right)$ and $\delta=1/2$, we have that
\[
\mathbb{P}\left[\lambda_{\max}\left(\sum_{e\in E[G]} X_e \right) \ge 3/2 \right] \leq n\cdot \left( \frac{\mathrm{e}^{1/2}}{ \left( 1+1/2 \right)^{3/2}} \right)^{\alpha\log n/2} = O\left(1/n^ c \right)
\] 
for some constant $c$. This gives us that 
\begin{equation}\label{eq:boundlambdamax}
\mathbb{P}\left[\lambda_{\max}\left(\sum_{e\in E[G]} X_e \right) \leq 3/2 \right] =1-O(1/n^c).
\end{equation}
On the other side, since our goal  is to analyse $\lambda_{2}(\mathcal{L}'_H)$ with respect to $\lambda_{2}(\mathcal{L}_G)$, it suffices to work with the top $(n-1)$ eigenspace of $\mathcal{L}_G$.  Since $\Ex{\sum_{e \in E} X_e}=\overline{\mathcal{I}}$, we can assume without loss of generality that $\mu_{\min}=1$. Hence, by setting $R= 2/\left(\alpha \cdot \log n \right)$ and $\delta=1/2$, we have that
\[
\mathbb{P}\left[\lambda_{\min}\left(\sum_{e\in E[G]} X_e \right) \leq 1/2 \right] = n\cdot\left( \frac{\mathrm{e}^{-1/2}}{ (1/2)^{1/2}} \right)^{\alpha \log n/2} = O\left(1/n^c\right)
\]
for some constant $c$. This gives us that 
\begin{equation}\label{eq:boundlambdamin}
\mathbb{P}\left[\lambda_{\min}\left(\sum_{e\in E[G]} X_e \right) > 1/2 \right] =1-O(1/n^c).
\end{equation}
Combining \eq{boundlambdamax}, \eq{boundlambdamin}, and the fact of $\sum_{e\in E[G]} X_e =\overline{\calL}_{G}^{-1/2} \calL_H' \overline{\calL}_{G}^{-1/2} $, with probability  $1-O\left(1/n^c\right)$ it holds for any  non-zero $x\in\mathbb{C}^n$ in the space spanned by $f_{2},\ldots, f_n$ that
\begin{equation}\label{eq:twoside}
\frac{x^{*} \overline{\calL}_{G}^{-1/2} \calL_H' \overline{\calL}_{G}^{-1/2} x}{x^{*} x}\in \left(1/2, 3/2\right).
\end{equation}
By setting $y=\overline{\calL}_G^{-1/2}x$, we can rewrite \eq{twoside} as
\[
\frac{y^{*}\calL_H' y }{y^{*} \overline{\calL}_G^{1/2}\overline{\calL}_G^{1/2} y} = \frac{y^{*}\calL_H' y}{y^{*} \overline{\calL}_G  y} =\frac{y^{*}\calL_H' y}{y^{*}y} \frac{y^{*} y}{y^{*} \overline{\calL}_G  y} \in (1/2, 3/2).
\] 
Since $\Dim(\Span\{f_{2},\dots,f_n\}) = n-1$, we have just proved there exist $n-1$ orthogonal vectors whose Rayleigh quotient with respect to $\calL'_H$ is $\Omega(\lambda_{2}(\mathcal{L}_G))$. By the Courant-Fischer Theorem,  we have 
\begin{equation}\label{eq:b1}
\lambda_{2}(\mathcal{L}_H') \ge \frac{1}{2} \lambda_{2}(\mathcal{L}_G).
\end{equation}

It remains to show that
$\lambda_{2}(\mathcal{L}_H) = \Omega\left(\lambda_{2}(\mathcal{L}'_H) \right)$, which implies that  $\lambda_{2}(\mathcal{L}_H) = \Omega\left(\lambda_{2}(\mathcal{L}_G) \right)$ by \eq{b1}.
By the definition of $\mathcal{L}'_H$, we have that for the Laplacian
 $\mathcal{L}_H = D_H^{-1/2} D_G^{1/2}  \mathcal{L}_H' D_G^{1/2} D_H^{-1/2}$. Therefore, for any $x \in \mathbb{C}^n$ and $y =  D_G^{1/2}  D_H^{-1/2} x$, it holds that 
\begin{equation}\label{eq:extra}
\frac{x^{*} \mathcal{L}_H x}{x^{*} x} = \frac{y^{*} \mathcal{L}_H' y}{x^{*} x}\geq \frac{1}{2} \cdot \frac{y^{*} \mathcal{L}_H' y}{y^{*} y},
\end{equation}
where the last equality follows from the fact that the degrees in $H$ and $G$ differ just by a constant multiplicative factor, and therefore,
\[
y^{*} y = \left(D_G^{1/2}  D_H^{-1/2} x\right)^{*} \left(D_G^{1/2}  D_H^{-1/2} x\right) = x^{*} D_G  D_H^{-1} x \geq \frac{1}{2} \cdot  x^{*} x.
\]
Finally, we show that \eq{extra} implies that $\lambda_{2}(\calL_H)\geq (1/2)\cdot\lambda_{2} (\calL'_H)$. To see this,  let $S_1 \subseteq \mathbb{C}^n$ be a $(2)$-dimensional subspace of  $\mathbb{C}^n$ such that
\[
\lambda_{2}(\calL_{H}) = \max_{x \in S_1} \frac{x^{*} \calL_{H} x}{x^{*} x}.
\]
Let $S_2 = \left\{ D^{1/2}_G D^{-1/2}_H x \colon x \in S_1\right\}$. Notice that  since $D^{1/2}_G D^{-1/2}$ is   full rank, $S_2$ has dimension $2$. Therefore,
\begin{equation}\label{eq:extra11}
\lambda_{2}(\calL'_H) = \min_{S\colon \dim(S) = 2} \max_{y \in S} \frac{y^{*} \calL'_H y}{y^{*} y} \le \max_{y \in S_2} \frac{y^{*} \calL'_H y}{y^{*} y} \le 2 \max_{x \in S_1} \frac{x^{*} \calL_H x}{x^{*} x} = 2 \lambda_{2}(\calL_H),
\end{equation}
where the last inequality follows by \eq{extra}. Combining \eq{b1} with \eq{extra11} gives us that   $\lambda_{2}(\calL_H)=\Omega(\lambda_{2}(G))$. This concludes the proof.   
\end{proof}
 
\section{Omitted details from Section~\ref{sec:experiments}}\label{sec:appendix_experiments}

\subsection{UN Comtrade Data Preparation}\label{sec:app_unc_dataprep}
The API provided by the UN gives a lot of flexibility on the type of selected data. 
It is possible to specify the \emph{product type} to either trade in goods (e.g., oil, wood, and appliances) or services (e.g.,  financial services, and construction services). 
Moreover, the \emph{classification code} can be selected, which we set to the Harmonised System (HS).
The HS categorises goods according to a $6$-digit  classification code (e.g.,  $060240$, where the first two digits ``$06$'' represents ``plants'', the second two  digits ``$02$'' represents ``alive'', and the last two digits ``$40$'' code for ``roses''). 
The \emph{reporting} countries and \emph{partner} countries can also be specified, where the reporting country reports about its own reported tradeflow with partner countries. 
The settings we used to download the data for our experiments were Goods on an annual frequency, the HS code as reported, over the period from 2002 to 2017,  with all reporting and all partner countries, all trade flows and all HS commodity codes. The total size of the data in zipped files is $99.8$GB, where each csv file (for every year) contains around $20,000,000$ lines.

For every  pair of countries $j$ and $\ell$, where $j$ is the reporting country and $\ell$ is the partner country, the database contains the amount that country $j$ imports from country $\ell$ for a specific commodity, and also the amount $j$ exports to $\ell$. 
There are several cases where countries $j$ and $\ell$ report different trading amounts with each other. 
Usually, the larger value is considered more accurate and is   used instead of the average~\cite{DITTRICH20101838}. 
To construct the digraph of the world trade network and its corresponding adjacency matrix, we fill in each entry of the adjacency matrix $M^c$ for commodity $c$ as follows: for each pair of countries $j$ and $\ell$, we compute $d^c_{j\ell} = e^c_{j\ell} - e^c_{\ell j}$, where $e^c_{j\ell}$ is the amount country $j$ exports to country $\ell$ for commodity $c$. 
If $d^c_{j\ell} > 0$, we set $M^c_{j\ell} = d^c_{j\ell}$ and $M^c_{\ell  j} = 0$. 
If $d^c_{j\ell} < 0$ (and thus $d^c_{\ell j} > 0$), we set $M^c_{\ell j} = d^c_{\ell j}$ and $M^c_{j \ell} = 0$. 

For our experiments we investigate the trade in ``Mineral Fuels, mineral oils, and products of their distillation'' (HS code 27), and the trade in ``Wood and articles of wood'' (HS code 44).

\subsection{DD-SYM Plots International Oil Trade}\label{sec:appendix_ddsym_clustervisualisations}
\begin{figure}[h]
\centering
    \begin{minipage}{0.5\textwidth}
      \centering
    \includegraphics[width=\textwidth]{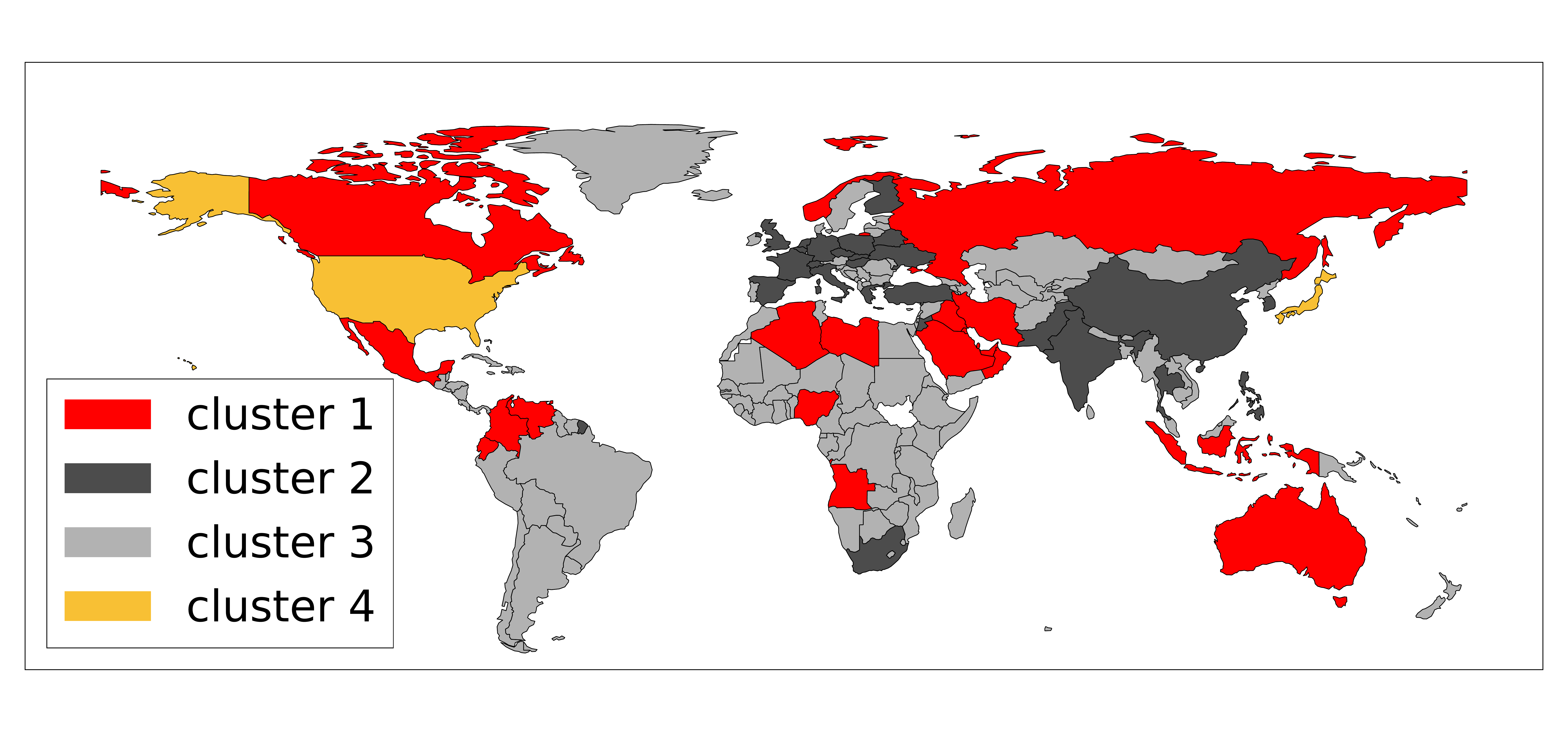}
    \vspace{-0.8cm}
    \caption*{2006}
    \end{minipage}%
    \begin{minipage}{0.5\textwidth}
      \centering  
    \includegraphics[width=\textwidth]{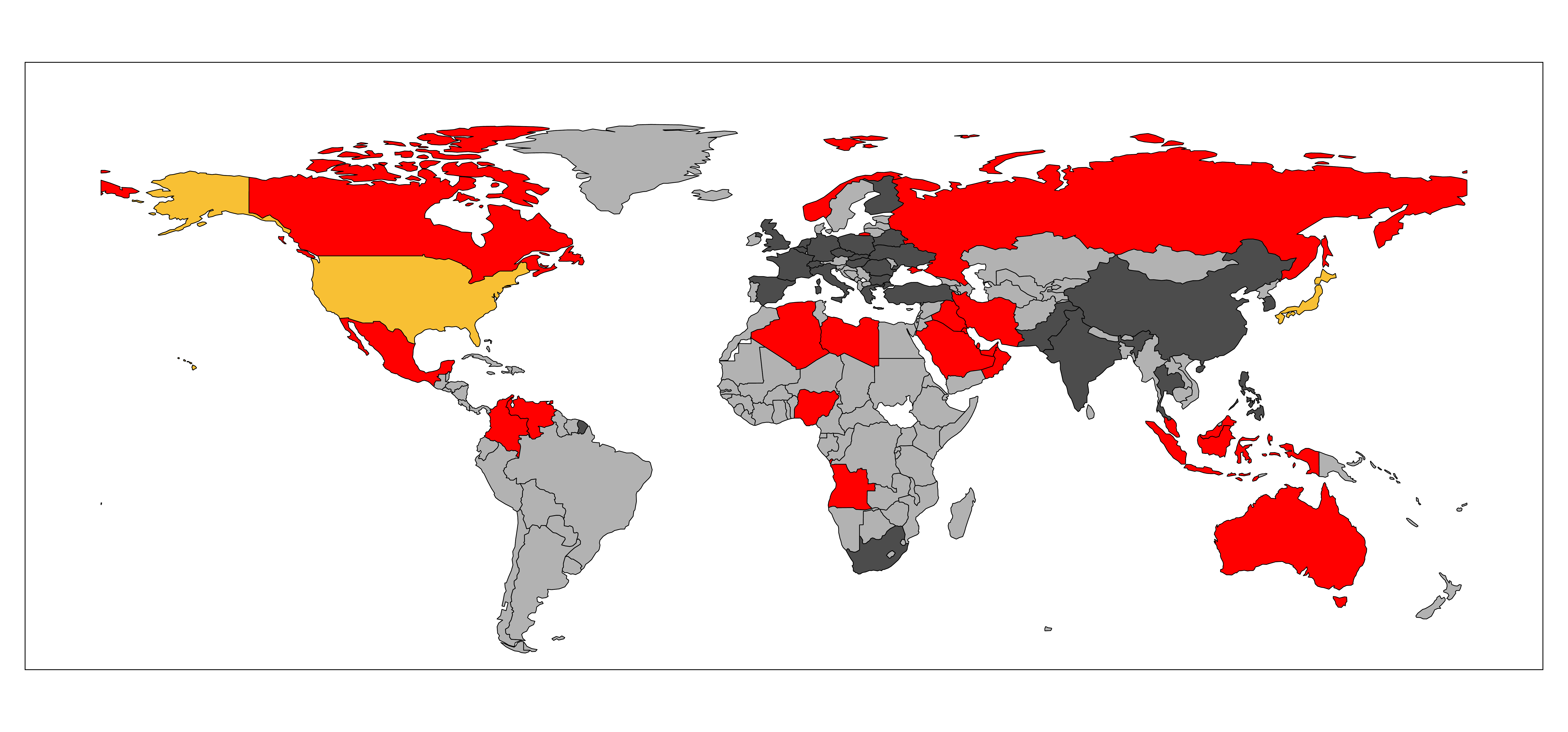}
    \vspace{-0.8cm}
    \caption*{2007}
    \end{minipage}
    \begin{minipage}{0.5\textwidth}
      \centering
    \includegraphics[width=\textwidth]{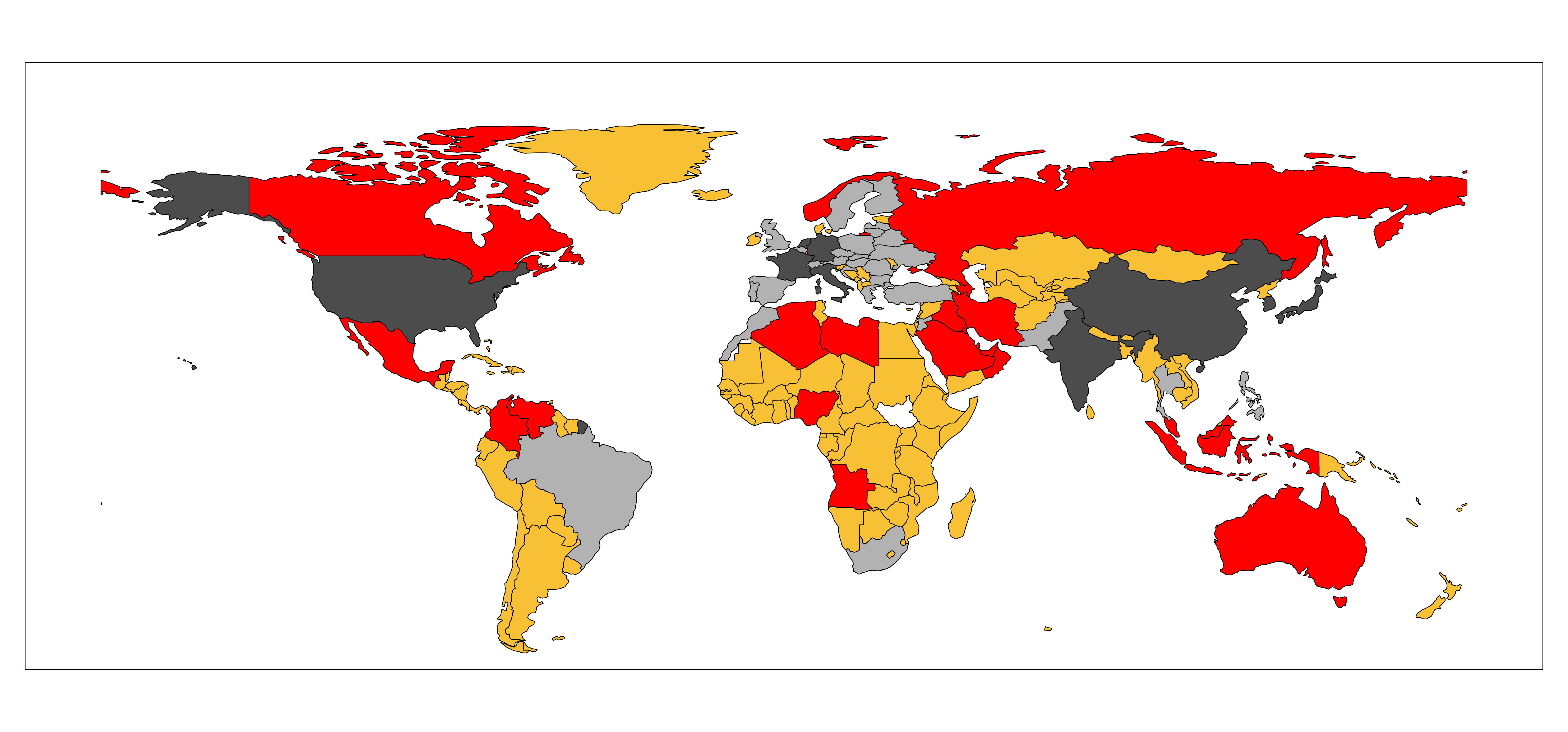}
    \vspace{-0.8cm}
    \caption*{2008}
    \end{minipage}%
    \begin{minipage}{0.5\textwidth}
      \centering
    \includegraphics[width=\textwidth]{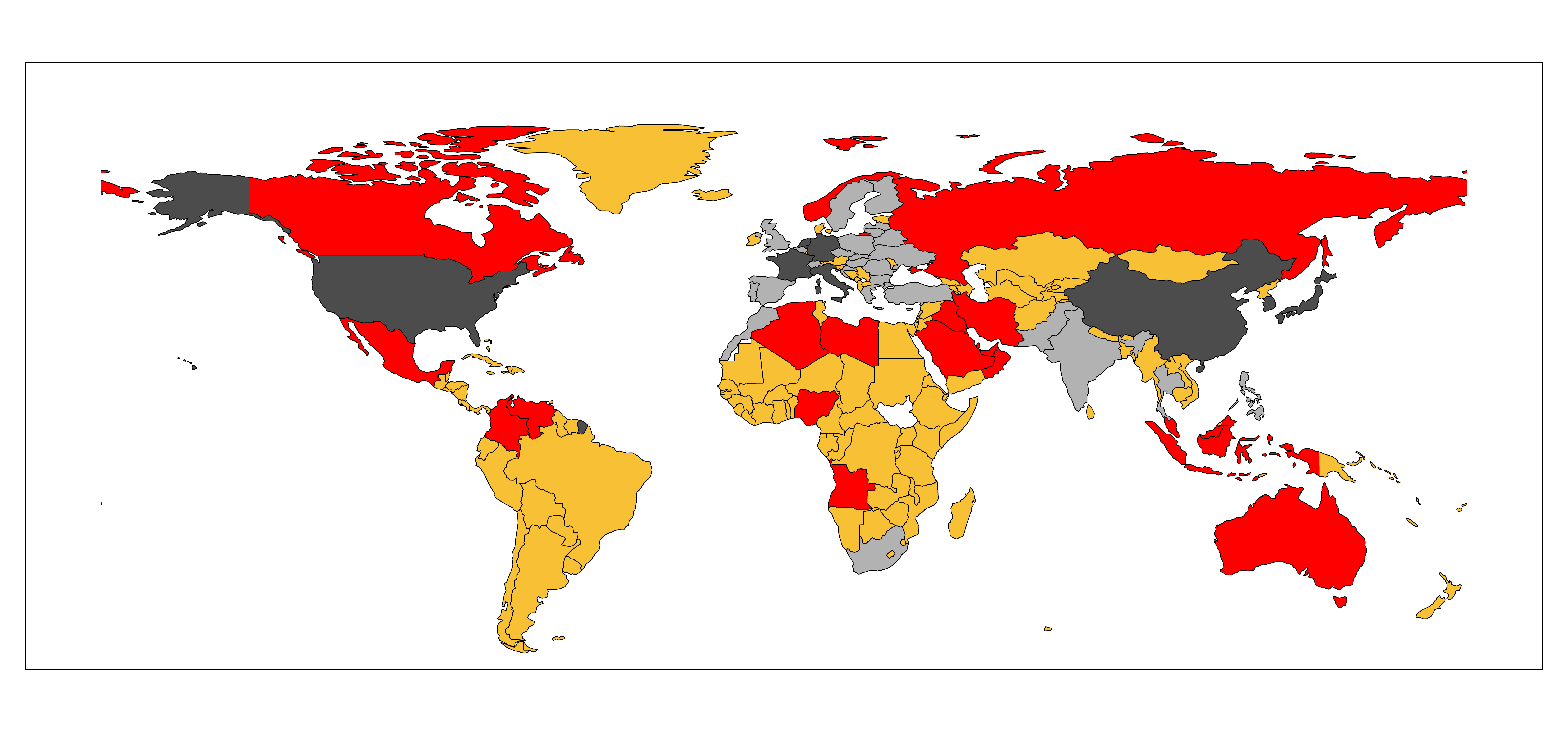}
    \vspace{-0.8cm}
    \caption*{2009}
    \end{minipage}%
\caption{Change in clustering of the IOTN over period 2006--2009 with $k=4$ using \texttt{DD-SYM} method.   Red countries form the start of the trade chain, and yellow countries the end of the trade chain. Countries   coloured white have no data.} \label{fig:oil_cluset_change_20062009_dd_sym}
\end{figure} 

We   plot the cluster visualisations for the \texttt{DD-SYM} algorithm in Figure~\ref{fig:oil_cluset_change_20062009_dd_sym} on the international oil trade network, over the period 2006-2009. The clusters between 2006 and 2007 are almost identical, and then there is a shift in  the clustering structure between 2007 and 2008. This change occurs one year before the change in the \texttt{SimpleHerm} method, and this change is also one year earlier than the changes found in the complex network analysis literature~\cite{AN2014254, ZHONG201442}. This indicates that the \texttt{SimpleHerm} clustering result is more in line with other literature.

\subsection{International Wood Trade}


For comparison we visualise the clustering result of the \texttt{DD-SYM} method over the period of 2006 -- 2009, see Figure~\ref{fig:wood_cluset_change_20062009_ddsym}. In addition, Figure~\ref{fig:wood_symmetric_difference} compares the symmetric difference of the clusters returned by different algorithms over the consecutive years. 
  Again, we notice  that our algorithm finds a peak around the economic crisis of 2008, and another  peak is found between 2005 and 2006. We could not find any literature reasoning about the peak between 2005 and 2006, but it would be interesting to analyse this further. The symmetric difference returned by the   \texttt{DD-SYM} method is more noisy.

\begin{figure}[h]
\centering
    \begin{minipage}{0.5\textwidth}
      \centering
    \includegraphics[width=\textwidth]{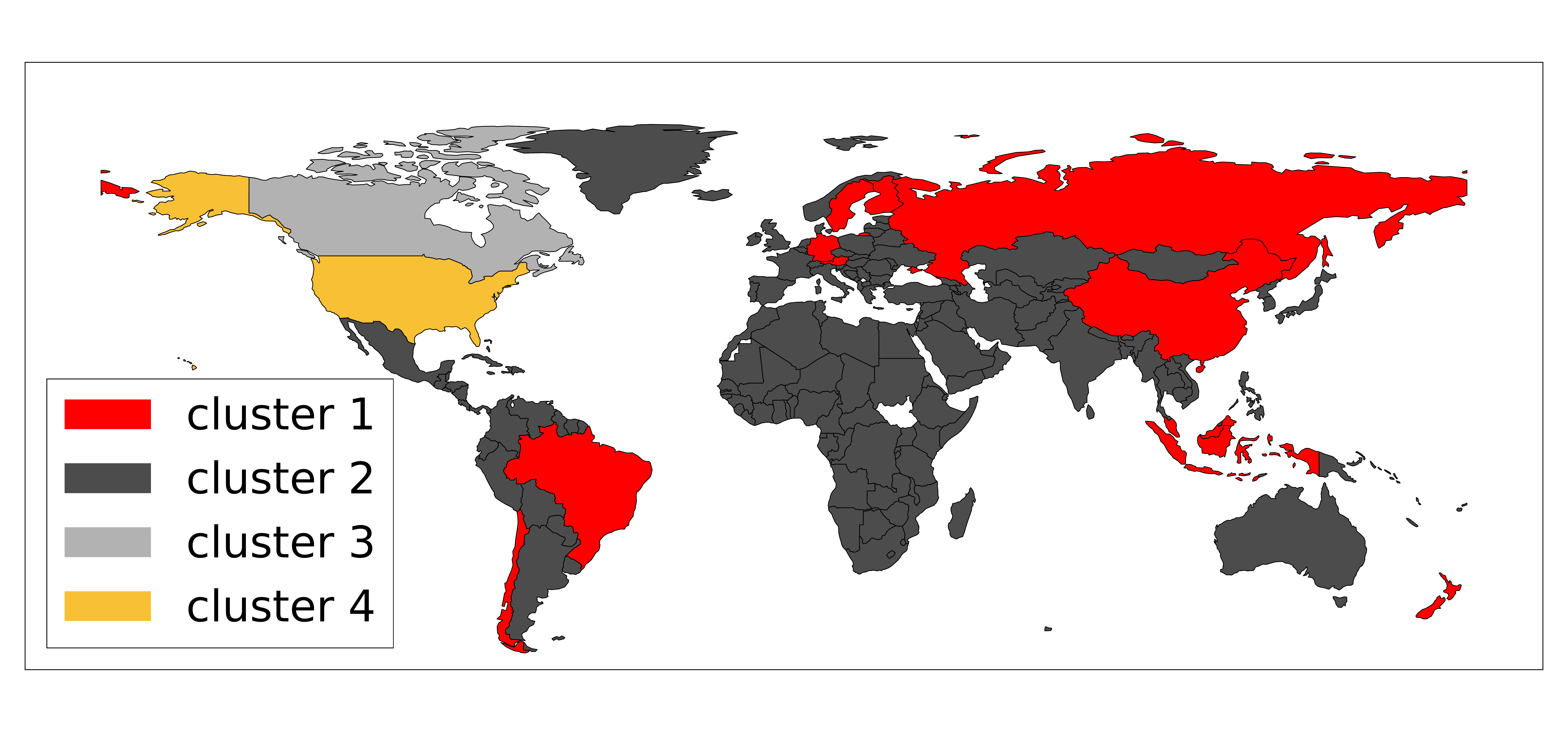}
    \vspace{-0.8cm}
    \caption*{2006}
    \end{minipage}%
    \begin{minipage}{0.5\textwidth}
      \centering  
    \includegraphics[width=\textwidth]{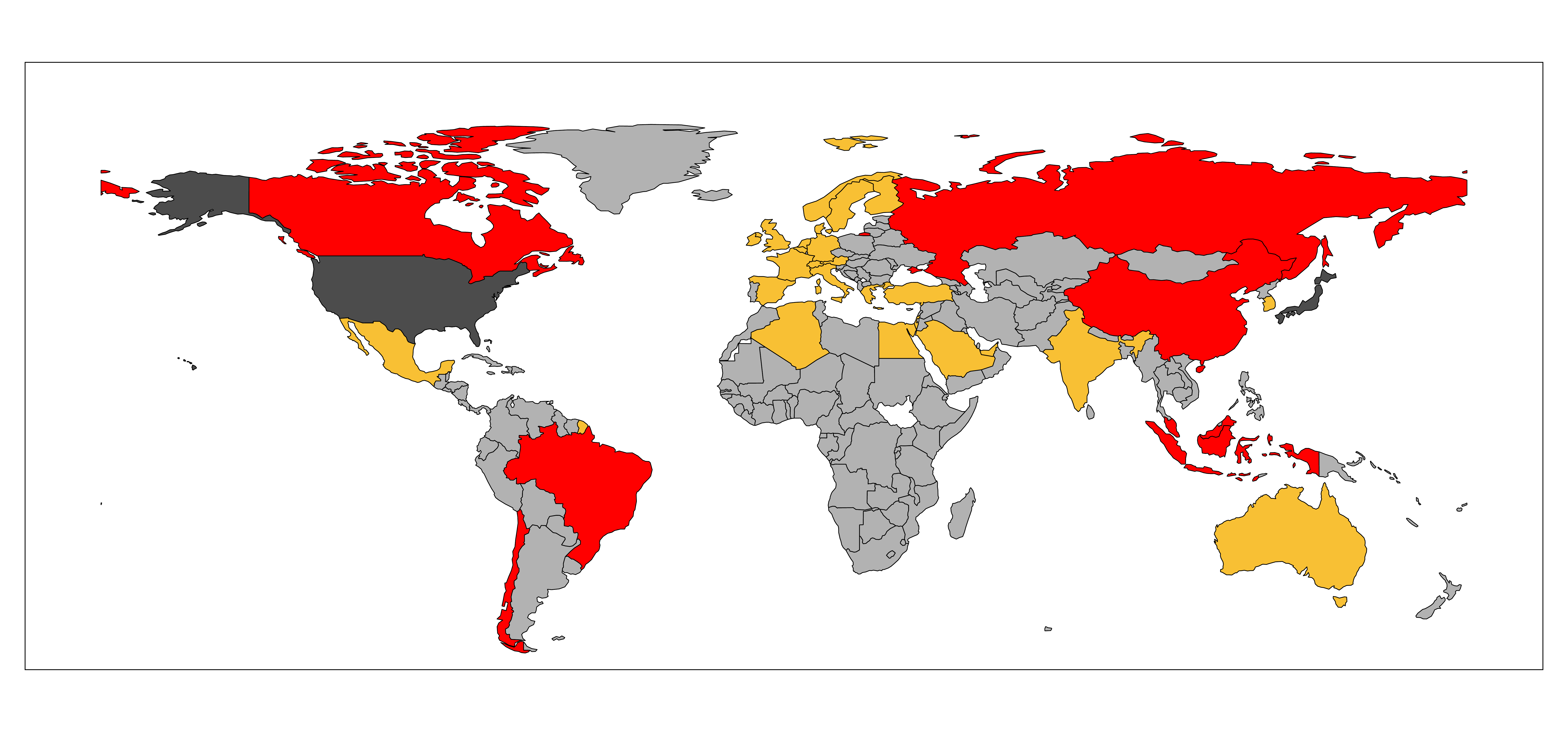}
    \vspace{-0.8cm}
    \caption*{2007}
    \end{minipage}
    \begin{minipage}{0.5\textwidth}
      \centering
    \includegraphics[width=\textwidth]{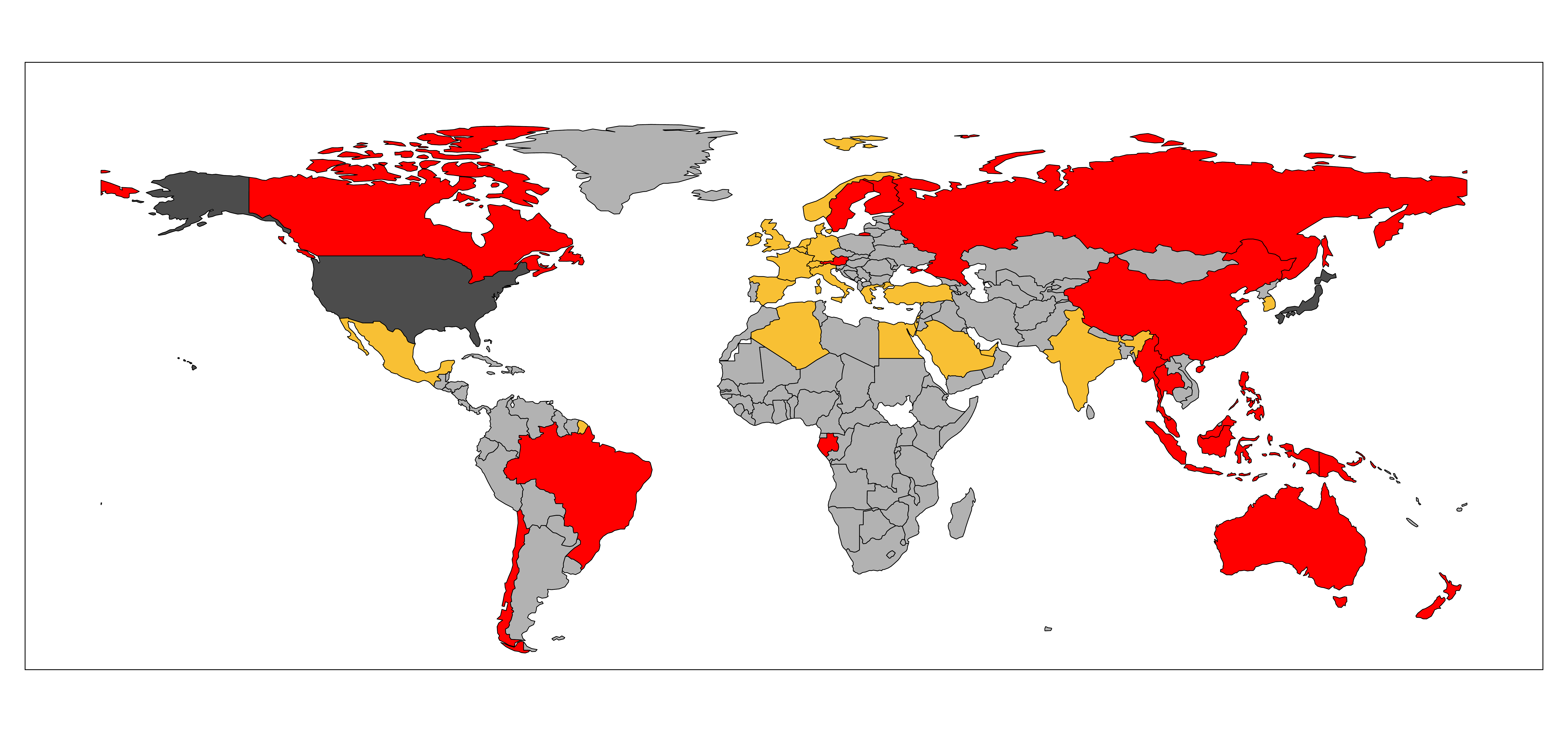}
    \vspace{-0.8cm}
    \caption*{2008}
    \end{minipage}%
    \begin{minipage}{0.5\textwidth}
      \centering
    \includegraphics[width=\textwidth]{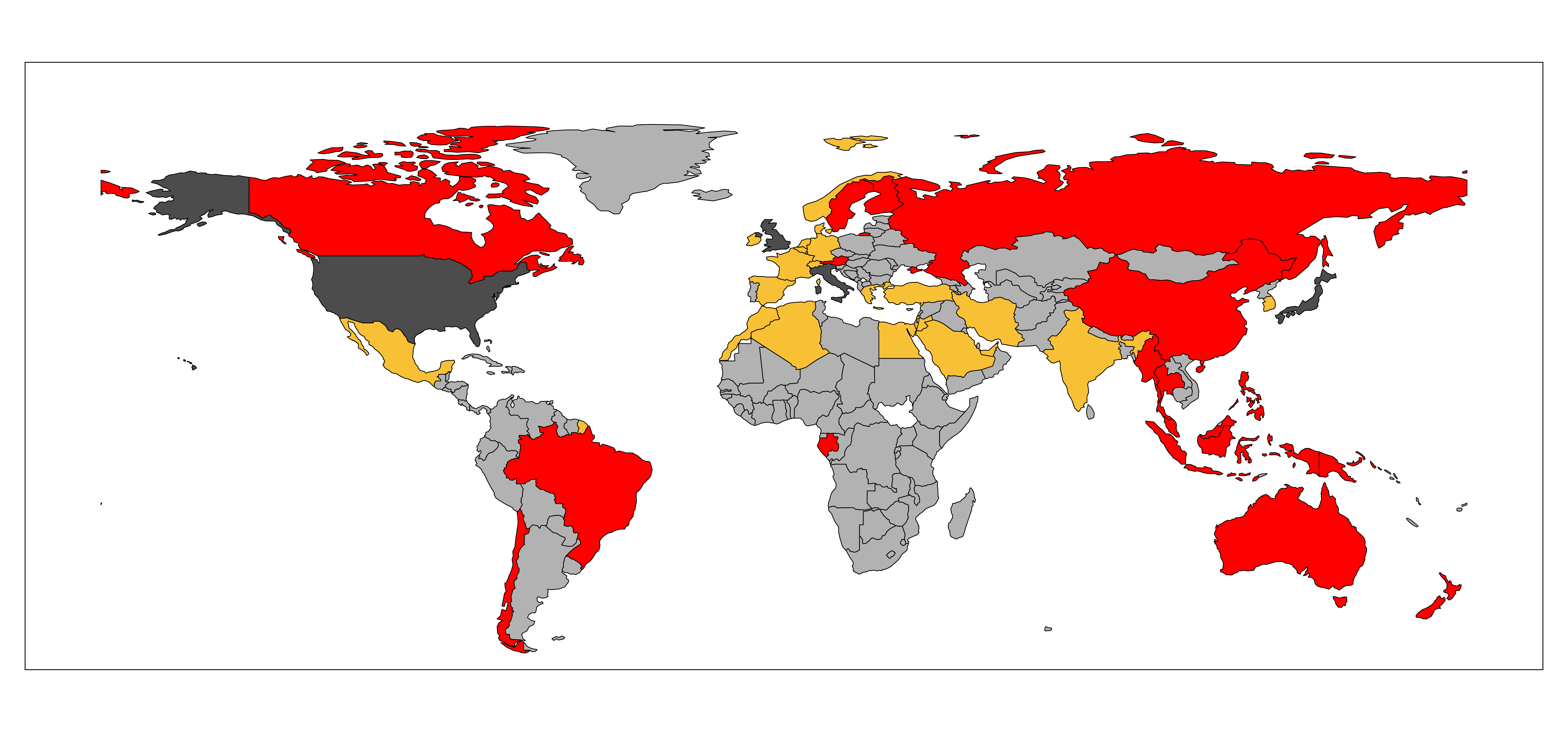}
    \vspace{-0.8cm}
    \caption*{2009}
    \end{minipage}%
\caption{Change in clustering of the IWTN over period 2006-2009 with $k=4$ using \texttt{DD-SYM} method.   Red countries form the start of the trade chain, and yellow countries the end of the trade chain. Countries   coloured white have no data.} \label{fig:wood_cluset_change_20062009_ddsym}
\end{figure}

\begin{figure}[h]
\centering
    \includegraphics[width=0.7\textwidth]{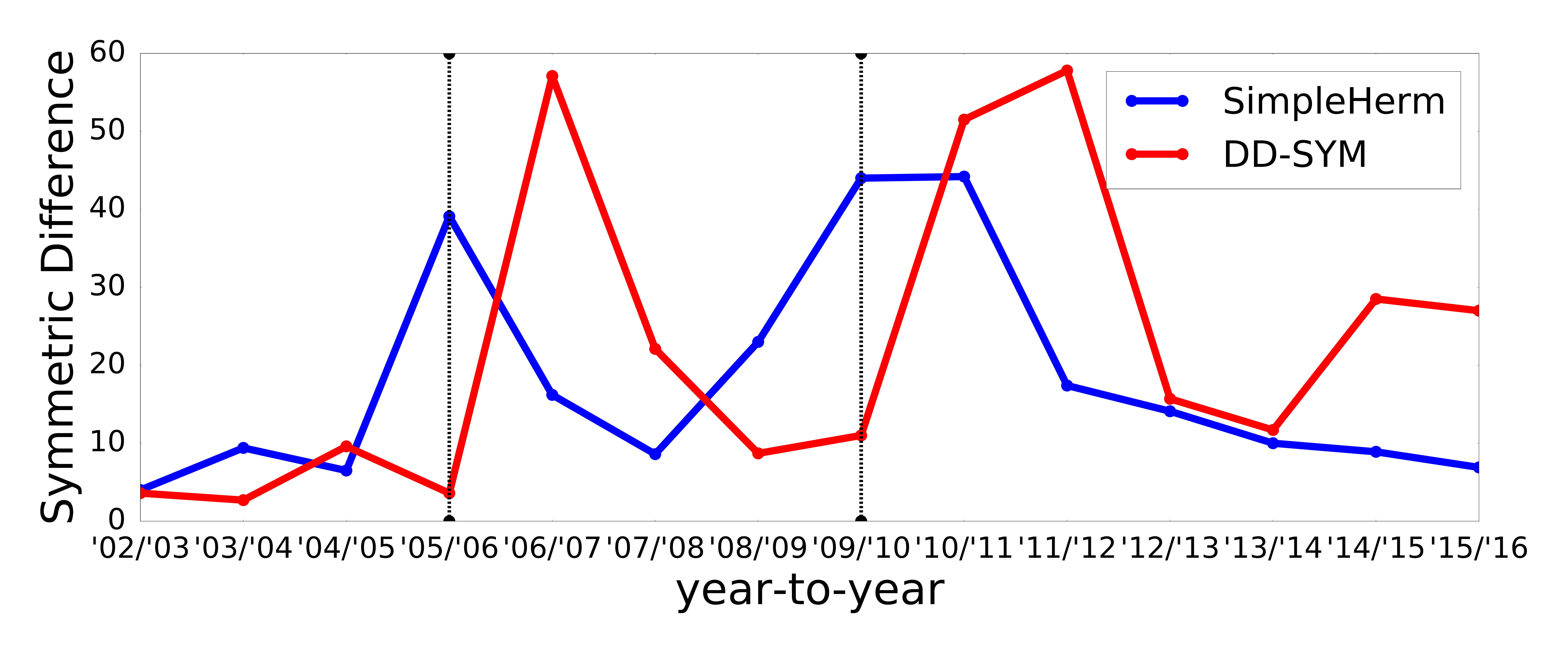}
    \caption{Comparison of the symmetric difference of the returned clusters between two consecutive years on the IWTN.}
    \label{fig:wood_symmetric_difference}
\end{figure}

\subsection{Results on Data Science for COVID-19 Dataset}\label{sec:appendix_DS4C}
The \textit{Data Science for COVID-19 Dataset} (DS4C)~\cite{DS4C} contains information about $3519$ South Korean COVID-19 cases, and we use directed edges to represent how the virus is transmitted among the individuals.  We notice that   there are only $831$ edges in the graph and there are many connected components of size $2$. To take this into account, we run our algorithm on the largest connected component of the infection graph, which consists of $67$ vertices and $66$ edges.  Applying the complex-valued Hermitian matrix and    the eigenvector associated with the smallest eigenvalue, the spectral embedding is visualised in Figure~\ref{fig:covid19_result}.


\begin{center}
\begin{figure}[h]
\centering
    \includegraphics[width=0.7\textwidth]{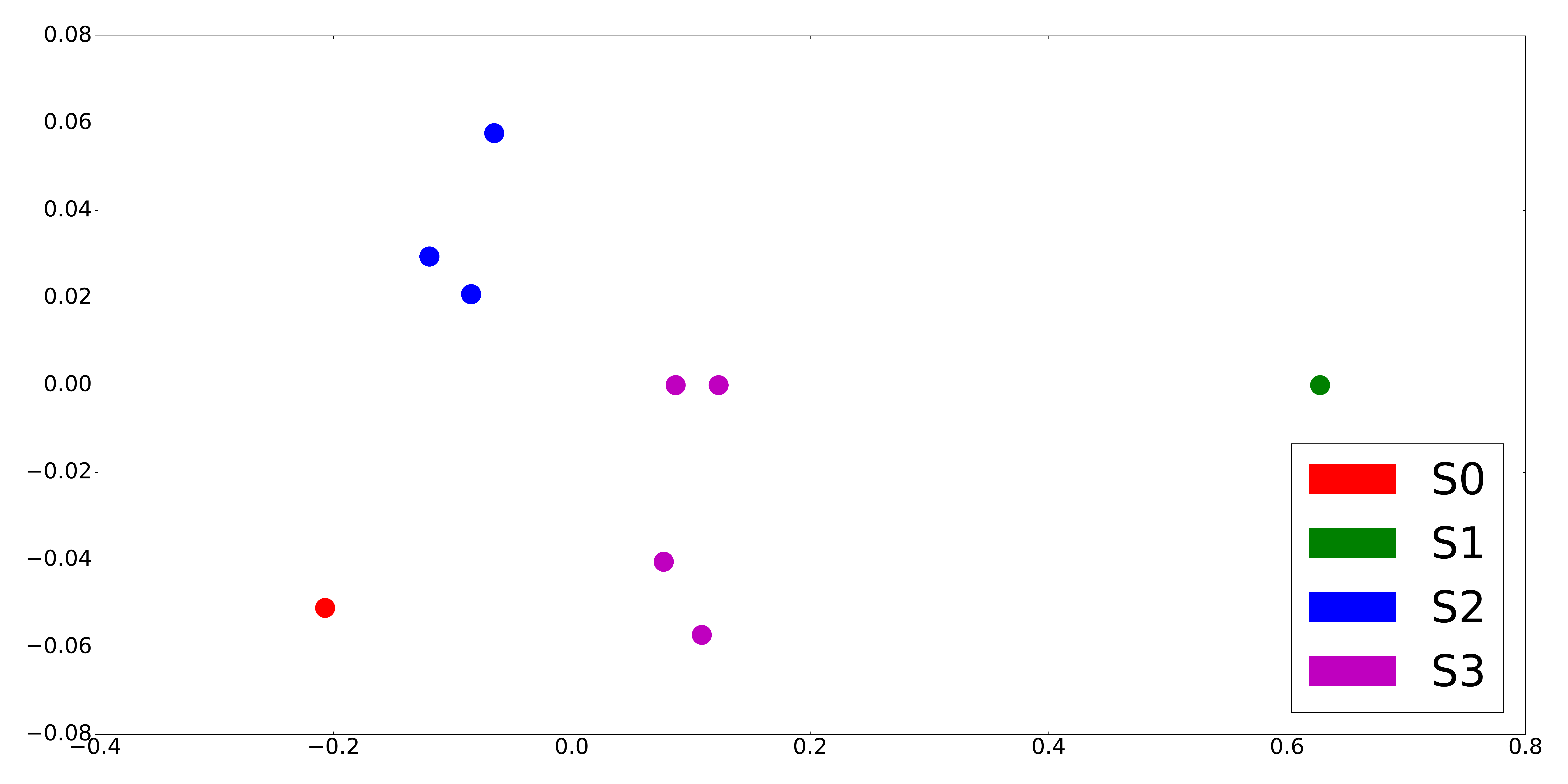}
    \caption{Clustering output on the largest connected component of the DS4C dataset, where $k=4$. Clusters are labelled according to their position in the ordering that maximises the flow ratio.}
    \label{fig:covid19_result}
\end{figure}
\end{center}

We notice several interesting facts. First of all, we do not see all the individual nodes of the graph in this embedding. This is because many
embedded points  are overlapped, which happens if they have the same in and outgoing edges. Moreover, from cluster $S_0$ to $S_1$ there is $1$ edge, from $S_1$ to $S_2$ there are $51$ edges and from $S_2$ to $S_3$ there are $5$ edges. That means there are $1+51+5 = 57$ edges that lie along the path, out of $66$ edges in total. This concludes  that our algorithm has successfully clustered the vertices such that there is a large flow ratio along the clusters.  

Secondly,  due to the limited size of the dataset,  it is difficult for us to draw a more  significant conclusion from the experiment.  However, we do notice that the   cluster $S_1$ actually consists of one individual: a super spreader. This individual infected $51$ people in cluster $S_2$. We believe that, with the development of many tracing Apps across the world and more data available in the near future, our algorithm could become a   useful tool for disease tracking and policy making.

\end{document}